\documentclass[11pt,letterpaper]{article}

\usepackage[T1]{fontenc}
\usepackage{lmodern}
\usepackage[margin=1in]{geometry}
\usepackage{microtype}
\usepackage{mathtools}
\usepackage{amsfonts}
\usepackage{amssymb}
\usepackage{amsthm}
\usepackage{array}
\usepackage{graphicx}
\usepackage{subcaption}
\usepackage{enumitem}
\usepackage{xcolor}
\usepackage{hyperref}
\usepackage[nameinlink,capitalize,noabbrev]{cleveref}

\hypersetup{
  colorlinks=true,
  linkcolor=blue!55!black,
  citecolor=green!35!black,
  urlcolor=blue!60!black,
  pdftitle={A Smooth Phase-Separation Model for Weak-Boundary Segmentation of Homogeneous Structures},
  pdfauthor={Zihan Li, Jiebao Sun, Fanghui Song, and Zhichang Guo}
}

\setlist[enumerate]{leftmargin=.5in}
\setlist[itemize]{leftmargin=.5in}
\newtheorem{theorem}{Theorem}[section]
\newtheorem{lemma}[theorem]{Lemma}
\newtheorem{proposition}[theorem]{Proposition}
\theoremstyle{definition}
\newtheorem{definition}[theorem]{Definition}
\theoremstyle{remark}
\newtheorem{remark}[theorem]{Remark}

\newcommand{\R}{\mathbb{R}}

\newcommand{\dd}{\,\mathrm{d}}
\newcommand{\Nop}{\mathcal{N}}
\newcommand{\E}{\mathcal{E}}
\newcommand{\Etilde}{\widetilde{\mathcal{E}}}
\DeclarePairedDelimiter{\abs}{\lvert}{\rvert}
\DeclarePairedDelimiter{\norm}{\lVert}{\rVert}
\newcommand{\ip}[2]{\left(#1,#2\right)}
\newcommand{\dual}[2]{\left\langle #1,#2\right\rangle}

\title{A Smooth Phase-Separation Model for Weak-Boundary Segmentation of Homogeneous Structures}
\author{%
  Zihan Li\textsuperscript{1}\quad
  Jiebao Sun\textsuperscript{1}\quad
  Fanghui Song\textsuperscript{1}\quad
  Zhichang Guo\textsuperscript{1,*}\\[0.6em]
  \small\textsuperscript{1}School of Mathematics, Harbin Institute of Technology,
  Harbin 150001, China\\
  \small\texttt{25B912030@stu.hit.edu.cn},
  \texttt{sunjiebao@hit.edu.cn},
  \texttt{fanghuisong@stu.hit.edu.cn}\\
  \small\textsuperscript{*}Corresponding author:
  \texttt{mathgzc@hit.edu.cn}
}
\date{}

\begin{document}

\maketitle

\begin{abstract}
Segmentation of adjacent structures with similar intensity distributions remains a challenging problem in image analysis, particularly when object boundaries are weak or ambiguous. Under such conditions, classical variational models may suffer from degenerated image-driven forces, leading to boundary leakage or undesired merging of neighboring regions. To address these limitations, we propose a smooth phase-separation variational model based on the Cahn--Hilliard equation for weak-boundary segmentation of homogeneous-appearance structures. The proposed framework integrates softmax-based region fitting with Cahn--Hilliard phase-field regularization to maintain interface discrimination under weak image-driven forces. We further introduce a mixed $L^2-H^{-1}$ gradient flow, which preserves higher-order interfacial regularization while allowing adaptive changes of phase masses, establish the continuous energy dissipation law, and prove the existence and uniqueness of weak solutions in the natural solution class. For numerical computation, we develop a stabilized scalar auxiliary variable (SAV) scheme that is linear, FFT-based, and satisfies a modified discrete energy dissipation law. Numerical experiments on synthetic and medical images demonstrate that the proposed method effectively separates adjacent homogeneous structures across weak boundaries and achieves competitive segmentation accuracy and improved boundary localization compared with representative variational, phase-field, and deep learning methods.
\end{abstract}

\noindent\textbf{Keywords:}
multiphase image segmentation, phase separation, Cahn--Hilliard regularization,
softmax representation, scalar auxiliary variable

\medskip
\noindent\textbf{2020 Mathematics Subject Classification:}
68U10, 35K55, 65M12, 65M70

\bigskip
\section{Introduction}

Image segmentation is a fundamental task in image processing, with broad applications in medical imaging, object recognition, quantitative biomarker extraction, and computer-assisted analysis~\cite{phameSurveyCurrentMethods,uijlingsSelectiveSearchObject2013,pereiraBrainTumorSegmentation2016,christAutomaticLiverTumor2017,doiComputeraidedDiagnosisMedical2007}. In many practical scenarios, one needs to distinguish structures that have homogeneous or highly similar visual appearances but belong to different semantic or anatomical regions. This homogeneous-appearance yet semantically distinct segmentation problem is particularly challenging when adjacent structures have overlapping intensity distributions and are separated only by weak, blurred, or noisy boundaries~\cite{chunmingliMinimizationRegionScalableFitting2008a,huangActiveContourModel2015,wangNewRegionScalableDiscriminant2014,luAccurate3DBone2016,pangIntensityInhomogeneityImage2023}. In such cases, image gradients, regional statistics, and boundary cues often become unreliable, making it difficult to construct a stable image-driven force for interface evolution.

Numerous image segmentation methods have been developed over the past decades, including thresholding~\cite{otsuThresholdSelectionMethod1979}, clustering~\cite{comaniciuMeanShiftRobust2002}, graph-based methods~\cite{shiNormalizedCutsImage}, variational and level-set methods~\cite{casellesGeodesicActiveContours1995,mumfordOptimalApproximationsPiecewise1989,chanActiveContoursEdges2001}, and, more recently, deep learning approaches based on convolutional, transformer, and diffusion architectures~\cite{ronnebergerUNetConvolutionalNetworks2015a,chenTransUNetTransformersMake2021,caoSwinUnetUnetlikePure2021,wuMedSegDiffMedicalImage}. Interactive segmentation methods have also been investigated by incorporating sparse user guidance or geometric constraints into the segmentation process~\cite{zhangQISInteractiveSegmentation2025}. Among these approaches, variational and level-set methods remain particularly attractive because of their solid mathematical foundation, flexible geometric representation, and compatibility with efficient numerical optimization.

Representative variational models include the geodesic active contour (GAC) model~\cite{casellesGeodesicActiveContours1995}, which exploits image gradients or edge indicator functions to drive contour evolution toward object boundaries, and the Mumford--Shah (MS)~\cite{mumfordOptimalApproximationsPiecewise1989} and Chan--Vese (CV)~\cite{chanActiveContoursEdges2001} models, which formulate segmentation through region-fitting energies and therefore reduce the reliance on explicit edge detection. Related binary level-set formulations further simplify the numerical implementation of Mumford--Shah segmentation~\cite{lieBinaryLevelSet2006}. To improve robustness against intensity inhomogeneity, local fitting models, such as the region-scalable fitting (RSF) model~\cite{chunmingliMinimizationRegionScalableFitting2008a} and the local binary fitting (LBF) model~\cite{liImplicitActiveContours2007}, incorporate local image statistics into the fitting energy and achieve improved performance on images with spatially varying intensities.

Beyond region-fitting energies, geometric regularization has become an important component of variational image segmentation. Shape priors, including convexity and star-shape constraints, have been incorporated to improve geometric consistency~\cite{siuImageSegmentationPartial2020,zhangRegistrationBasedStarShapeSegmentation2025}. Topology-preserving and connected-component-preserving methods can reduce certain structural errors by imposing topological constraints~\cite{dengConnectedComponentPreservingImage2025,zhangQISInteractiveSegmentation2025}. Total variation, Euler elastica, and other higher-order regularization models further enhance boundary smoothness and contour quality while admitting efficient numerical solvers based on augmented Lagrangian, dual, or Bregman-type optimization techniques~\cite{wuAugmentedLagrangianMethod2010,taiFastAlgorithmEulers2011}. More recently, molecular beam epitaxy (MBE) regularization has been introduced into level-set segmentation to improve interface regularity, stabilize contour evolution, and alleviate the need for re-initialization~\cite{songReinitializationFreeLevelSet2024,songEffectiveLevelSet2025}.

Although classical variational models have achieved considerable success, their interface evolution is still largely governed by image-dependent driving forces, including edge attraction, region-fitting contrast, and local intensity statistics. Consequently, their performance deteriorates when discriminative image information becomes weak or ambiguous. For edge-based models, weak or discontinuous gradients often lead to boundary leakage or inaccurate contour localization~\cite{kassSnakesActiveContour1988,casellesGeodesicActiveContours1995}. Region-based models instead rely on differences between fitting energies inside and outside the evolving contour. When neighboring regions exhibit similar intensities, overlapping gray-level distributions, or strong intra-region variations, this driving force may become insufficient to separate adjacent structures effectively. This difficulty is especially pronounced for homogeneous-appearance structures, where the target regions may have similar mean intensities and locally overlapping statistics. In this regime, CV-type global fitting forces can become nearly degenerate, and RSF-type local fitting forces may still mix information from neighboring structures when the separating gap is narrow or weak~\cite{liImplicitActiveContours2007,chunmingliMinimizationRegionScalableFitting2008a}. Topology-preserving or connected-component-preserving methods can impose structural constraints, but the resulting interface evolution may become rigid and may not provide sufficiently smooth or accurate boundaries in weak-boundary regions. Therefore, the key difficulty is to design a segmentation model that can both separate adjacent homogeneous structures and maintain smooth, coherent interfaces when the image-driven force is unreliable.


Deep learning methods have significantly advanced medical image segmentation by learning discriminative image representations from annotated datasets~\cite{ronnebergerUNetConvolutionalNetworks2015a,chenTransUNetTransformersMake2021,caoSwinUnetUnetlikePure2021,wuMedSegDiffMedicalImage}. Nevertheless, their performance often depends on large amounts of representative training data and lacks the explicit geometric interpretation provided by variational formulations. Hybrid approaches combining deep learning with variational models have therefore attracted increasing attention~\cite{zhengUnsupervisedDeepLearning2022}. These developments further suggest that variational models equipped with robust geometric regularization remain valuable, particularly when annotated data are limited or reliable boundary evolution is required~\cite{calderVariationalApproachBone2011}.

To address these limitations, we develop a smooth phase-separation variational model based on the Cahn--Hilliard equation for weak-boundary segmentation of homogeneous structures. Inspired by the use of Cahn--Hilliard phase-field models in image processing~\cite{bertozziInpaintingBinaryImages2007,lipkovaAutomatedUnsupervisedSegmentation2017}, the proposed framework combines a smooth softmax-based probabilistic representation with a physically motivated phase-field regularization. The softmax formulation assigns differentiable membership weights to multiple phases and allows them to compete continuously, while a distance-prior map \(D_a\) incorporates spatial preference into the region-fitting cost for target localization. As a result, the data fidelity term remains smooth with respect to the phase variables and balances intensity fitting with distance-guided localization. The Cahn--Hilliard regularization introduces gradient energy and double-well potential energy to promote phase separation and coherent diffuse interfaces~\cite{cahnFreeEnergyNonuniform1958,bertozziInpaintingBinaryImages2007,lipkovaAutomatedUnsupervisedSegmentation2017}. Unlike conventional length-based or geometric regularization, which primarily smooths evolving interfaces~\cite{casellesGeodesicActiveContours1995,chanActiveContoursEdges2001}, the proposed phase-field mechanism actively promotes separated segmentation states and maintains interface discrimination when image-driven forces become weak or ambiguous.

From the analytical perspective, the nonlinear coupling of the softmax fidelity term, nonlocal region averages, and fourth-order phase-field regularization presents significant mathematical challenges. We derive the corresponding variational formulation together with a mixed $ L^2-H^{-1} $ gradient-flow system, establish the energy dissipation property, and prove the existence and uniqueness of weak solutions using Galerkin approximation, a priori estimates, compactness arguments, and suitable Lipschitz estimates for the softmax data force. The resulting weak solutions belong to the natural class \(C([0,T];L^2(\Omega))\cap L^2(0,T;H_N^2(\Omega))\), which reflects the regularity induced by the fourth-order phase-field term. From the numerical perspective, we develop a stabilized scalar auxiliary variable (SAV) splitting scheme~\cite{eyreUnconditionallyGradientStable1998,shenScalarAuxiliaryVariable2018,shenNewClassEfficient2019}, which leads to a linear implementation while preserving a modified discrete energy dissipation law. Extensive numerical experiments on weak-boundary, low-contrast, noisy, and medical images demonstrate the robustness of the proposed method in separating homogeneous structures and maintaining stable boundary localization.

The main contributions of this work are summarized as follows.

\begin{itemize}
\item We propose a smooth phase-separation variational model for weak-boundary
segmentation of homogeneous-appearance structures by integrating softmax-based
region fitting with Cahn--Hilliard phase-field regularization. A mixed
$ L^2-H^{-1} $ gradient flow couples data-driven evolution with higher-order
interfacial regularization, providing a physically motivated mechanism for
maintaining neighboring interfaces under weak image-driven forces.

\item We establish the mathematical and numerical foundations of the proposed
model. At the continuous level, we derive the associated energy dissipation law
and prove the existence and uniqueness of weak solutions. At the discrete level,
we develop a stabilized SAV splitting scheme that is linear at each time step
and satisfies a modified discrete energy dissipation law.

\item We validate the proposed method through extensive experiments on
weak-boundary, noisy, and medical image datasets. Comparisons with
representative variational, topology-preserving, phase-field, and deep learning
methods demonstrate that the proposed model can reliably separate adjacent
homogeneous structures while maintaining stable boundary localization.
\end{itemize}

The rest of this paper is organized as follows.
Section~\ref{sec:motivation} analyzes the limitations of level-set
segmentation for homogeneous structures and introduces the Cahn--Hilliard
phase-separation regularization.
Section~\ref{sec:model} presents the proposed softmax-based Cahn--Hilliard
variational model and its mixed gradient flow, and establishes energy
dissipation together with the existence and uniqueness of weak solutions.
Section~\ref{sec:numerical} develops the numerical method using energy
splitting and the scalar auxiliary variable approach, constructs a stabilized
linear scheme, and verifies its discrete energy stability.
Section~\ref{sec:experiments} evaluates boundary accuracy and stability through
regularization ablation, parameter sensitivity, noise robustness, comparisons
with variational and phase-field models, and experiments against deep learning
methods on the SKI10 dataset~\cite{heimannSegmentationKneeImages2010}.
Finally, Section~\ref{sec:conclusion} concludes the paper.

\section{Motivation and Background}
\label{sec:motivation}

\subsection{Limitations of Level-Set Segmentation for Homogeneous Structures}

The Chan--Vese (CV) model is one of the classical level-set based variational
methods for image segmentation~\cite{chanActiveContoursEdges2001}. Many
subsequent models can be viewed as extensions or modifications of the CV
framework, for example by replacing the global fitting energy with local
fitting terms, adding shape or topological constraints, or introducing
higher-order regularization. Therefore, the CV model provides a representative
example for analyzing why level-set segmentation may fail for
homogeneous-appearance but semantically distinct structures.

We first consider the two-phase CV model. Its data fitting energy can be
written in the level set formulation as
\begin{equation}\label{eq:cv-data-energy-related}
E_{\mathrm{data}}^{\mathrm{CV}}(\phi)
=
\lambda_1
\int_{\Omega}
(I-c_1)^2 H(\phi)\,dx
+
\lambda_2
\int_{\Omega}
(I-c_2)^2(1-H(\phi))\,dx,
\end{equation}
where \(I\) is the image intensity, \(H\) is the Heaviside function, and \(c_1,c_2\) 
are the mean intensities inside and outside the contour, respectively. 
Taking the variation with respect to \(\phi\), we obtain
\begin{equation}\label{eq:cv-data-variation-related}
\frac{\delta E_{\mathrm{data}}^{\mathrm{CV}}}{\delta \phi}
=
\delta(\phi)
\left[
\lambda_1(I-c_1)^2
-
\lambda_2(I-c_2)^2
\right].
\end{equation}
Therefore, the data-driven force in the negative gradient flow is

\begin{equation}\label{eq:cv-data-force-related}
F_{\mathrm{CV}}(x)
=
\delta(\phi)
\left[
-\lambda_1(I-c_1)^2
+
\lambda_2(I-c_2)^2
\right].
\end{equation}
The factor \(\delta(\phi)\) indicates that the force is concentrated in a narrow 
neighborhood of the zero level set. 
Thus, the interface evolution is determined by the contrast of the fitting errors 
near the current contour, rather than by the whole image domain.

In the special case \(\lambda_1=\lambda_2=1\), we have
\[
F_{\mathrm{CV}}(x)
=
\delta(\phi)
\left[
-(I-c_1)^2+(I-c_2)^2
\right].
\]
Since
\[
-(I-c_1)^2+(I-c_2)^2
=
(c_1-c_2)(2I-c_1-c_2),
\]
it follows that
\begin{equation}\label{eq:cv-force-factorized-related}
F_{\mathrm{CV}}(x)
=
\delta(\phi)(c_1-c_2)(2I-c_1-c_2).
\end{equation}
Hence, if the two fitted region intensities are close, namely
\[
c_1\approx c_2,
\]
then the effective force near the interface satisfies
\begin{equation}\label{eq:cv-force-degenerate-related}
F_{\mathrm{CV}}(x)\approx 0
\qquad
\text{for } x \text{ near } \{\phi=0\}.
\end{equation}
This factorized form shows the essential limitation of the CV data force. In
homogeneous-structure segmentation, two neighboring regions may have very
similar mean intensities, namely \(c_1\approx c_2\), even though they correspond
to different semantic or anatomical structures. In this case, the factor
\(c_1-c_2\) becomes small, and the CV data force may become too weak to drive
the contour toward the desired separating boundary.

To further illustrate this phenomenon, Fig.~\ref{fig:cv-force-visualization} shows 
a representative low-contrast segmentation result and the corresponding data-driven 
force of the CV model. 
Although the two adjacent structures should be separated, their local intensity
distributions are highly similar and the boundary between them is weak. The CV
segmentation tends to merge the neighboring regions, while the strong responses
of the data-force map are not necessarily aligned with the desired separating
interface. This indicates that the failure is not merely caused by poor
initialization or numerical error, but by the degeneration and ambiguity of the
data-driven force itself.

\begin{figure}[htbp]
	\centering
	\begin{subfigure}[t]{0.45\textwidth}
		\centering
		\includegraphics[
		width=\textwidth,
		height=0.28\textheight,
		keepaspectratio
		]{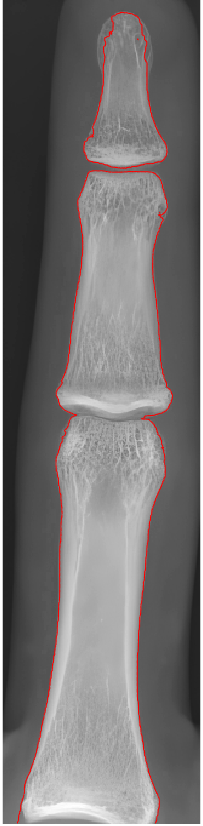}
		\caption{Segmentation result}
		\label{fig:CV_10095}
	\end{subfigure}
	\hfill
	\begin{subfigure}[t]{0.45\textwidth}
		\centering
		\includegraphics[
		width=\textwidth,
		height=0.28\textheight,
		keepaspectratio
		]{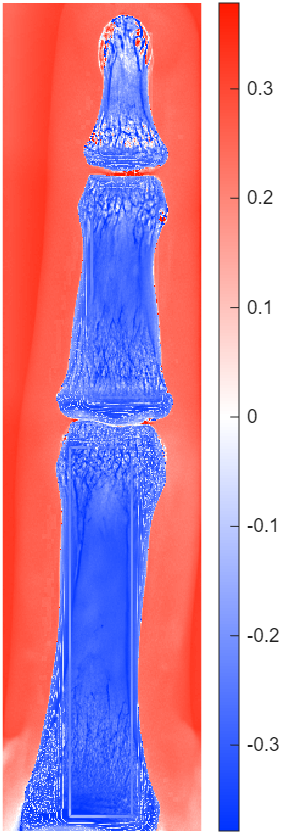}
		\caption{Data-driven force}
		\label{fig:CV_F_DATA}
	\end{subfigure}
	\caption{Segmentation result and data-driven force of the CV model in a low-contrast case.}
	\label{fig:cv-force-visualization}
\end{figure}

This observation is not limited to the original CV model. Many level-set based
extensions still rely on image-driven forces derived from intensity contrast,
local fitting statistics, edge indicators, or prescribed geometric constraints.
These modifications can improve robustness in many cases, but when the target
structures are homogeneous in appearance and separated only by weak boundaries,
the available image information may still be insufficient to distinguish the
two regions. Consequently, the evolving contour may leak, stop at an incorrect
position, or merge adjacent structures. This motivates the need for an
additional mechanism that does not merely smooth the contour, but promotes the
separation of different phases when the data force becomes weak.

\subsection{Cahn--Hilliard Phase-Separation Regularization}

To overcome the above limitation, we introduce the Cahn--Hilliard phase-field
regularization into the segmentation model~\cite{cahnFreeEnergyNonuniform1958,bertozziInpaintingBinaryImages2007,lipkovaAutomatedUnsupervisedSegmentation2017}.
The role of this term is different from conventional length or curvature
regularization. It not only smooths the evolving interface, but also provides a
phase-separation mechanism when the image-driven force becomes weak near the
target boundary.

For each phase variable \(u_a\), the Cahn--Hilliard regularization energy is defined as
\begin{equation}\label{eq:ch-energy-related}
E_{\mathrm{CH}}(u_a)
=
\int_{\Omega}
\left(
\frac{\varepsilon^2}{2}|\nabla u_a|^2
+
W(u_a)
\right)\,dx,
\end{equation}
where
\[
W(u_a)
=
\frac{1}{2}u_a^2(u_a-1)^2
\]
is a double-well potential. 
The gradient term penalizes sharp spatial oscillations and controls the thickness
of the diffuse interface, while the double-well potential encourages the phase
variable to approach the two preferred states \(0\) and \(1\). 
Therefore, the CH regularization is different from conventional length or curvature
regularization: it does not merely smooth the contour, but also promotes the
separation of different phases.

The variational derivative of \(E_{\mathrm{CH}}\) is
\begin{equation}\label{eq:ch-variation-related}
\frac{\delta E_{\mathrm{CH}}}{\delta u_a}
=
-\varepsilon^2\Delta u_a
+
W'(u_a).
\end{equation}
Thus, the phase-field part contributes the force
\[
-\varepsilon^2\Delta u_a+W'(u_a).
\]
When the data force is locally degenerated near a weak boundary, the evolution is
no longer determined only by the unreliable fitting-error contrast. 
Instead, the CH term continues to regulate the phase variables through the competition
between the interfacial energy and the double-well potential. 
In particular, even when
\[
F_{\mathrm{data}}\approx 0
\]
near the desired separating interface, the CH contribution may still be active unless
the phase variable has already reached a stable separated state. 
This provides an additional mechanism for pushing the phase functions toward
distinct labels while maintaining a regular diffuse interface.

Figure~\ref{fig:ch-force-visualization} further illustrates this effect. 
Compared with the CV results shown above~\cite{chanActiveContoursEdges2001}, the proposed CH-regularized model
successfully separates the adjacent structures around the low-contrast weak boundary. 
At the same time, the corresponding data-force map shows that \(F_{\mathrm{data}}\) is
very small, and nearly vanishes, near the target separating interface. 
This indicates that the successful separation is not mainly caused by a strong
image-driven force at the boundary. 
Rather, it is the CH phase-field regularization that supplies the missing
phase-separation mechanism when the fidelity force becomes insufficient. 
Therefore, the proposed model can achieve a separation that CV- and RSF-type models
fail to obtain under the same weak-boundary condition.

\begin{figure}[htbp]
	\centering
	\begin{subfigure}[t]{0.3\textwidth}
		\centering
		\includegraphics[
		width=\textwidth,
		height=0.28\textheight,
		keepaspectratio
		]{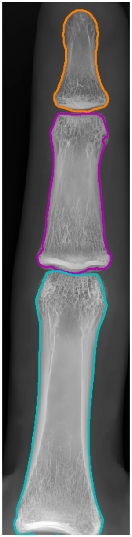}
		\caption{Segmentation result}
		\label{fig:CH_10095}
	\end{subfigure}
	\hfill
	\begin{subfigure}[t]{0.6\textwidth}
		\centering
		\includegraphics[
		width=\textwidth,
		height=0.28\textheight,
		keepaspectratio
		]{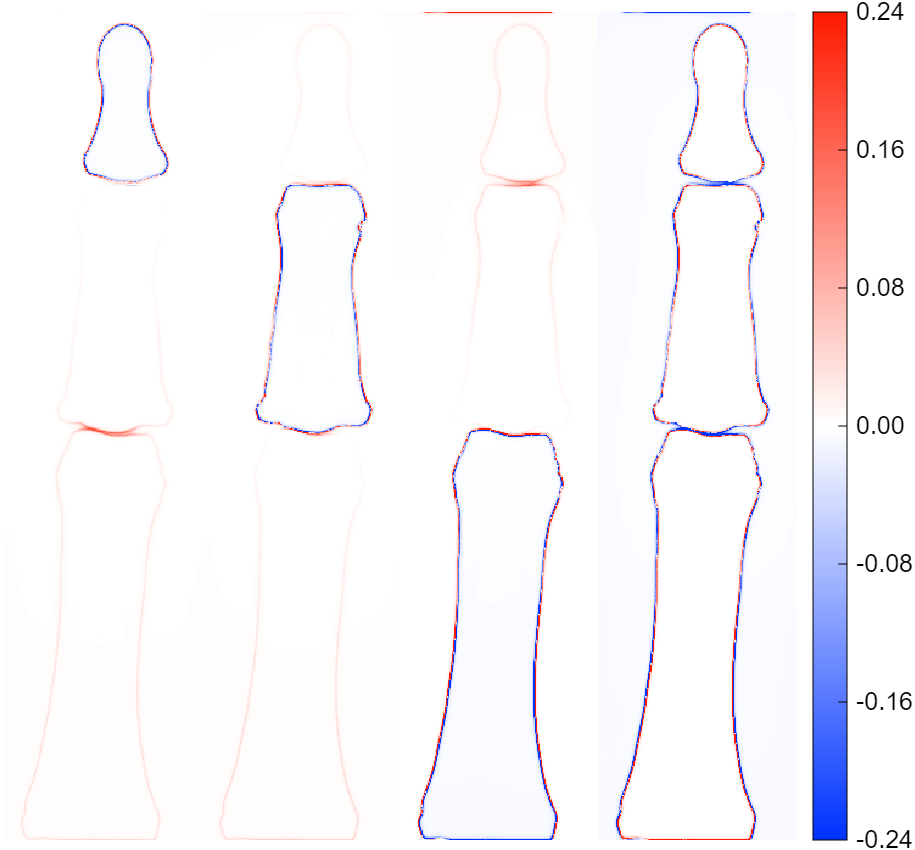}
		\caption{Data-driven force}
		\label{fig:CH_F_DATA}
	\end{subfigure}
	\caption{Segmentation result and data-driven force of the proposed CH-regularized model in a low-contrast case.}
	\label{fig:ch-force-visualization}
\end{figure}

These observations show that weak-boundary homogeneous-structure segmentation
requires an additional phase-separation mechanism when the image-driven force
becomes insufficient.


\section{Main method}
\label{sec:model}

Motivated by the analysis in Section~\ref{sec:motivation}, we develop a smooth
phase-separation variational model for weak-boundary segmentation of
homogeneous structures. The proposed model combines a smooth softmax-based
probabilistic representation with a distance-prior map \(D_a\) for target
localization, and uses the Cahn--Hilliard energy to provide phase-separation
regularization when the image-driven force is insufficient. The resulting
variational formulation and gradient flow are given below.\textsf{}

\subsection{Variational formulation}

In this section, we propose a multiphase image segmentation model that combines a soft classification mechanism with Cahn-Hilliard phase-field regularization. The softmax function maps the multichannel phase-field variables onto soft segmentation functions satisfying the simplex constraint. Meanwhile, a Cahn-Hilliard regularization term is introduced to describe phase separation and smooth interfacial evolution. This improves the stability and robustness of the model for weak-boundary and low-contrast images. The corresponding energy functional is defined as follows:
\begin{equation}\label{eq:total-energy}
	\begin{aligned}
		\E(U)
		&:=
		E_{\rm ch}(U)+E_{\rm data}(U)  \\
		&=
		\sum_{a=1}^K
		\int_\Omega
		\left(
		\frac{\varepsilon^2}{2}\abs{\nabla u_a}^2
		+W(u_a)
		\right)\dd x  
		+
		\lambda
		\sum_{a=1}^K
		\int_\Omega
		\Phi_a(U)
		\bigl((I-c_a)^2+\gamma_aD_a\bigr)\dd x .
	\end{aligned}
\end{equation}

Here, $U=(u_1,\ldots,u_K)$ denotes the multichannel phase-field variables, and
$W(u_a)=\frac{1}{2}u_a^2(u_a-1)^2$ is the double-well potential.
The term $D_a:\Omega\to[0,\infty)$ denotes a prescribed bounded
distance-penalty map associated with the $a$-th phase. It encodes spatial prior
information by assigning smaller values to pixels closer to the expected support
of phase $a$ and larger values to pixels farther away. Thus the combined fitting
cost $(I-c_a)^2+\gamma_aD_a$ balances intensity fidelity and distance-based
spatial preference, where $\gamma_a$ controls the strength of the distance
prior. In the following analysis, $D_a$ is treated as a fixed function in
$L^\infty(\Omega)$. Furthermore, the softmax function is defined as
\[
\Phi_a(U)
=
\frac{\exp(\beta u_a)}
{\sum_{b=1}^K \exp(\beta u_b)},
\qquad c_a=\frac{\int_\Omega I(x)\Phi_a(U)\mathrm{d}x}{\int_\Omega\Phi_a(U)\mathrm{d}x},\qquad a=1,\ldots,K,
\]
where $\beta>0$ controls the sharpness of the soft assignment.

The variational derivative of the energy functional with respect to $u_a$ is given by
\begin{equation}\label{eq:variational derivative}\begin{aligned}\frac{\delta \E(U)}{\delta u_a}=-\varepsilon^2\Delta u_a+W^{\prime}(u_a)+F_a(U),\end{aligned}\end{equation}
where
\[F_a(U)=
\lambda\beta\Phi_a(U)
\left(
e_a(c)-\bar{e}(c)
\right),\]
and
\[e_a(c)
=
(I-c_a)^2+\gamma_aD_a,
\qquad
\bar{e}(c)
=
\sum_{b=1}^K \Phi_b(U)e_b(c).
\]
Inspired by the work of Miyoun Jung and coauthors~\cite{jungSobolevGradientsJoint2009}, we further propose the following gradient flow:
\begin{equation}\label{eq:gradient flow}
	\begin{cases}\partial_tu_a=-(1-\Delta)\mu_a,\\\mu_a=-\varepsilon^2\Delta u_a+W^{\prime}(u_a)+F_a(U),&\end{cases}\quad a=1,\ldots,K,
\end{equation}
subject to the no-flux boundary conditions
\begin{equation}\label{eq:neumann-bc}
\frac{\partial u_a}{\partial\vec{n}}=0,
\qquad
\frac{\partial \mu_a}{\partial\vec{n}}=0
\quad\text{on }\partial\Omega\times(0,T),
\qquad a=1,\ldots,K .
\end{equation}

The final segmentation is obtained by converting the softmax phase functions into 
a hard label map. Specifically, after the gradient flow reaches a steady state, 
each pixel \(x\in\Omega\) is assigned to the phase with the largest softmax value:
\[
\ell(x)
=
\arg\max_{1\le a\le K}\Phi_a(U(x)).
\]
Therefore, \(\Phi_a(U)\) provides the soft membership function of the \(a\)-th phase, 
while \(\ell(x)\) gives the final discrete segmentation result.

\begin{remark}[Coupling direct descent with higher-order regularization]
The mixed $ L^2-H^{-1} $ flow
\[
\partial_t u_a=-(1-\Delta)\mu_a
\]
can be decomposed as
\[
\partial_t u_a=-\mu_a+\Delta\mu_a .
\]
The \(L^2\)-type component \(-\mu_a\) acts directly on the variational
derivative and enables an efficient response to the image fidelity force. The
Cahn--Hilliard-type component \(\Delta\mu_a\), on the other hand, diffuses the
chemical potential and introduces a fourth-order contribution through the term
\(-\varepsilon^2\Delta u_a\) in \(\mu_a\). The mixed dynamics therefore retain
direct data-driven motion while providing higher-order spatial regularization
for smoother and more coherent interface evolution.
\end{remark}

\begin{remark}[Adaptive evolution without mass conservation]
The classical \(H^{-1}\) flow \(\partial_tu_a=\Delta\mu_a\) conserves
\(\int_\Omega u_a\,\dd x\) under the homogeneous Neumann boundary condition. In
contrast, the proposed mixed flow satisfies
\[
\frac{\dd}{\dd t}\int_\Omega u_a\,\dd x
=
-\int_\Omega\mu_a\,\dd x,
\]
which is generally nonzero. Thus, the \(L^2\)-type component removes the strict
conservation of total phase mass and allows the phase content, and hence the
resulting region sizes, to adapt to the image data rather than being
predetermined by the initialization.
\end{remark}

\begin{remark}[Positive-definite mobility and spectral action]
Let \(\rho_j\ge0\) be an eigenvalue of \(-\Delta\) under the Neumann boundary
condition. The mobility multipliers associated with the \(L^2\) flow, the
\(H^{-1}\) flow, and the proposed mixed flow are, respectively,
\[
1,
\qquad
\rho_j,
\qquad
1+\rho_j .
\]
Hence, \(I-\Delta\) is self-adjoint and strictly positive, keeps the zero and
low-frequency modes active, and assigns larger descent weights to
high-frequency components of the chemical potential. It also yields the natural
dissipation
\[
\langle\mu_a,(I-\Delta)\mu_a\rangle
=
\|\mu_a\|_{L^2}^2+\|\nabla\mu_a\|_{L^2}^2,
\]
as established in the next subsection. Although a more general operator
\(\alpha I-\kappa\Delta\), with \(\alpha,\kappa>0\), could be used,
\(I-\Delta\) is the simplest normalized choice and avoids introducing an
additional relative mobility parameter.
\end{remark}

\subsection{Energy Dissipation}

\begin{proposition}[Energy dissipation]\label{prop:continuous-energy}
	Let \(U=(u_1,\ldots,u_K)\) be a sufficiently smooth solution of
	system~\eqref{eq:gradient flow} subject to the boundary
	conditions~\eqref{eq:neumann-bc}. Then the energy~\eqref{eq:total-energy} satisfies
	\begin{equation}\label{eq:continuous-energy-law}
		\frac{\dd}{\dd t}\E(U(t))
		=
		-\sum_{a=1}^K
		\left(
		\norm{\mu_a}_{L^2(\Omega)}^2
		+
		\norm{\nabla\mu_a}_{L^2(\Omega)}^2
		\right)
		\le 0.
	\end{equation}
\end{proposition}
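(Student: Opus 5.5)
The plan is a direct chain-rule computation. I will show that the time derivative of $\E$ equals $\sum_a(\mu_a,\partial_t u_a)$, and then substitute the flow \eqref{eq:gradient flow} and integrate by parts using \eqref{eq:neumann-bc}. The only nontrivial point is to confirm that the $L^2$ gradient of $E_{\rm data}$ really is $F_a(U)$ as stated in \eqref{eq:variational derivative}. This requires handling the nonlocal dependence of $c_a$ on $U$.

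\textbf{Step 1: the Cahn--Hilliard part.} Differentiate $E_{\rm ch}$ in time. Integrating $\varepsilon^2\nabla u_a\cdot\nabla\partial_t u_a$ by parts with $\partial_{\vec n}u_a=0$ gives
\[
\frac{\dd}{\dd t}E_{\rm ch}=\sum_a\int_\Omega\bigl(-\varepsilon^2\Delta u_a+W'(u_a)\bigr)\partial_t u_a\dd x .
\]

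\textbf{Step 2: the data part.} Write $E_{\rm data}=\lambda\sum_a\int_\Omega\Phi_a e_a(c)\dd x$. Its time derivative has two contributions.

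\emph{(i) Dependence through $c$.} We have $\partial E_{\rm data}/\partial c_a=-2\lambda\int_\Omega\Phi_a(I-c_a)\dd x$. This vanishes by the definition of $c_a$ as the $\Phi_a$-weighted mean of $I$; this is the envelope-type observation. Hence the implicit dependence of $c$ on $t$ drops out. Any degenerate case $\int_\Omega\Phi_a=0$ cannot occur, since $\Phi_a>0$ pointwise.

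\emph{(ii) Dependence through $\Phi$.} Use the softmax derivative
\[
\partial_{u_b}\Phi_a=\beta\Phi_a(\delta_{ab}-\Phi_b).
\]
Then
\[
\sum_a e_a\,\partial_t\Phi_a=\sum_b\beta\Phi_b\bigl(e_b-\bar e\bigr)\partial_t u_b .
\]
So this contribution is $\sum_b\int_\Omega F_b(U)\,\partial_t u_b\dd x$.

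Combining Steps 1 and 2 gives $\frac{\dd}{\dd t}\E=\sum_a(\mu_a,\partial_t u_a)_{L^2}$.

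\textbf{Step 3: insert the flow.} Substitute $\partial_t u_a=-\mu_a+\Delta\mu_a$. Integrate $(\mu_a,\Delta\mu_a)$ by parts using $\partial_{\vec n}\mu_a=0$ to get $-\|\nabla\mu_a\|^2$. This yields \eqref{eq:continuous-energy-law}, and nonpositivity is immediate.

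\textbf{Main obstacle.} The main obstacle is Step 2(i): making sure the nonlocal averages contribute nothing, i.e., that $F_a$ is the full variational derivative. The stationarity identity for $c_a$ handles this cleanly. The smoothness assumption justifies differentiating under the integral sign. Boundedness of $I$ and $D_a$, together with positivity of $\int_\Omega\Phi_a$, ensure that $c_a(t)$ is differentiable in $t$.
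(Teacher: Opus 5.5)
Your proof is correct and follows the same route as the paper. You use the chain rule to get $\frac{\dd}{\dd t}\E=\sum_a(\mu_a,\partial_t u_a)$, then substitute $\partial_t u_a=-(1-\Delta)\mu_a$ and integrate by parts with $\partial_{\vec n}\mu_a=0$. The only difference is that you also verify \eqref{eq:variational derivative}, in particular via the stationarity identity $\int_\Omega\Phi_a(I-c_a)\dd x=0$, which shows the nonlocal averages $c_a$ contribute nothing; the paper simply invokes that formula.
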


\begin{proof}
	Using the variational derivative in
	\eqref{eq:variational derivative} and the definition of the chemical
	potential, we obtain
	\[
	\frac{\dd}{\dd t}\E(U(t))
	=
	\sum_{a=1}^K
	\ip{\frac{\delta\E}{\delta u_a}}{\partial_tu_a}
	=
	\sum_{a=1}^K\ip{\mu_a}{\partial_tu_a}.
	\]
	Substituting
	\(\partial_tu_a=-(1-\Delta)\mu_a\) and integrating by parts, using
	\(\frac{\partial \mu_a}{\partial\vec{n}}=0\), give
	\[
	\begin{aligned}
		\frac{\dd}{\dd t}\E(U(t))
		&=
		-\sum_{a=1}^K
		\ip{\mu_a}{(1-\Delta)\mu_a} \\
		&=
		-\sum_{a=1}^K
		\left(
		\norm{\mu_a}_{L^2(\Omega)}^2
		+
		\norm{\nabla\mu_a}_{L^2(\Omega)}^2
		\right).
	\end{aligned}
	\]
	This proves \eqref{eq:continuous-energy-law}.
\end{proof}

Consequently, for every \(t\ge0\),
\[
\E(U(t))
+
\sum_{a=1}^K
\int_0^t
\left(
\norm{\mu_a(s)}_{L^2(\Omega)}^2
+
\norm{\nabla\mu_a(s)}_{L^2(\Omega)}^2
\right)\dd s
=
\E(U(0)).
\]
The two dissipation terms reflect the mixed structure of the proposed flow. The
\(L^2\) term permits direct adjustment of the phase variables by the image data,
while the gradient term diffuses the chemical potential and regularizes the
interface.

\subsection{Theoretical Analysis}

\subsubsection{Existence of Weak Solutions}
Let $\Omega\subset\R^2$ be a bounded $C^2$ domain and let $T>0$.
Set $Q_T=\Omega\times(0,T).$
For an integer $K\ge2$, the unknown is
\[
U=(u_1,\dots,u_K):Q_T\to\R^K .
\]
For vector-valued functions we use the standard product $L^2$ norm
\[
\norm{U}_{L^2(\Omega)}^2
:=
\sum_{a=1}^K\norm{u_a}_{L^2(\Omega)}^2 .
\]
Define the space $V$ as $$
V:=H_N^2(\Omega)
:=
\{v\in H^2(\Omega):\frac{\partial v}{\partial\vec{n}}=0\text{ on }\partial\Omega\}.
$$
Then the weak solution of the gradient flow Equation \eqref{eq:gradient flow}  is defined as follows.

\begin{definition}[Weak solution]\label{def:weak_solution}
	A vector field $U=(u_1,\dots,u_K)$ is a weak solution of
	\eqref{eq:gradient flow} on $[0,T]$ if, for every $a=1,\dots,K$,
	\[
	u_a\in C([0,T];L^2(\Omega))
	\cap L^2(0,T;V)
	,
	\qquad
	\partial_tu_a\in L^2(0,T;V'),
	\]
	and, for every $v\in V$ and a.e. $t\in(0,T)$,
	\begin{align}
		\dual{\partial_tu_a}{v}
		&+\varepsilon^2\ip{\Delta u_a}{\Delta v}
		+\varepsilon^2\ip{\nabla u_a}{\nabla v}
		-\ip{W'(u_a)}{\Delta v}
		+\ip{W'(u_a)}{v}        \notag\\
		&-\ip{F_a(U)}{\Delta v}
		+\ip{F_a(U)}{v}=0.      \label{eq:weak-form}
	\end{align}
	Moreover $u_a(0)=u_{a,0}$ in $L^2(\Omega)$.
\end{definition}

To prove the existence of weak solution, we introduce some auxiliary lemmas.
\begin{lemma}[Softmax bounds]\label{lem:softmax}
	For every $U,V\in\R^K$,
	\[
	0<\Phi_a(U)<1,\qquad \sum_{a=1}^K\Phi_a(U)=1,
	\]
	and there exists $C_\beta>0$ such that
	\[
	\abs{\Phi(U)-\Phi(V)}
	\le C_\beta\abs{U-V}.
	\]
	Consequently,
	\[
	\norm{\Phi(U)-\Phi(V)}_{L^2(\Omega)}
	\le C_\beta\norm{U-V}_{L^2(\Omega)} .
	\]
\end{lemma}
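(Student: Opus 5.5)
The positivity and normalization are immediate from the definition. Since $\exp(\beta u_b)>0$ for every $b$, the denominator $\sum_{b}\exp(\beta u_b)$ is strictly larger than each single numerator when $K\ge2$, which gives $0<\Phi_a(U)<1$. Summing the numerators over $a$ reproduces the denominator, so $\sum_a\Phi_a(U)=1$.

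For the Lipschitz bound I will use the mean value theorem along the segment $W(s)=V+s(U-V)$, $s\in[0,1]$. This reduces the estimate to a uniform bound on the Jacobian of $\Phi:\R^K\to\R^K$. A direct differentiation gives
\[
\frac{\partial\Phi_a}{\partial u_b}(W)=\beta\,\Phi_a(W)\bigl(\delta_{ab}-\Phi_b(W)\bigr),
\]
so $D\Phi(W)=\beta\bigl(\mathrm{diag}(\Phi)-\Phi\Phi^{\top}\bigr)$. I will bound its operator norm by the Frobenius norm. Using $0<\Phi_a<1$ and $\sum_a\Phi_a=1$, each entry is at most $\beta$ in absolute value, which already yields $C_\beta=\beta K$.

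A sharper constant is available. The matrix $D\Phi(W)$ is symmetric positive semidefinite, being $\beta$ times the covariance matrix of the distribution $\Phi$. Its largest eigenvalue is therefore at most its trace, $\beta\sum_a\Phi_a(1-\Phi_a)\le\beta$. Either bound suffices. The result is
\[
|\Phi(U)-\Phi(V)|\le\int_0^1\|D\Phi(W(s))\|\,ds\,|U-V|\le C_\beta|U-V|,
\]
with $C_\beta$ independent of $U,V$.

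For the $L^2$ consequence, I will apply the pointwise inequality at a.e.\ $x\in\Omega$ to $U(x),V(x)$, square it, and integrate over $\Omega$. This gives $\sum_a\|\Phi_a(U)-\Phi_a(V)\|_{L^2}^2\le C_\beta^2\|U-V\|_{L^2}^2$ in the product norm. Measurability of $\Phi(U)$ is automatic because $\Phi$ is continuous.

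The only step needing any care is the uniform Jacobian bound. The key point is that all the entries are controlled by the softmax values, which lie in $(0,1)$, and not by $U$ itself. This is what makes the constant global rather than merely local.
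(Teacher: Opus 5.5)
Your proposal is correct and follows the paper's own argument: compute the softmax Jacobian $\beta\Phi_a(\delta_{ab}-\Phi_b)$, observe that it is uniformly bounded on all of $\mathbb{R}^K$, apply the mean value theorem in integral form, and then square and integrate pointwise to get the $L^2(\Omega)$ estimate. The only addition is that you make the constant explicit, either $C_\beta=\beta K$ from the Frobenius norm or $C_\beta=\beta$ from the covariance/trace argument, whereas the paper leaves it implicit.
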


\begin{proof}
	The Jacobian of the softmax is
	\[
	\frac{\partial\Phi_a}{\partial u_b}
	=
	\beta\Phi_a(\delta_{ab}-\Phi_b),
	\]
	which is uniformly bounded on $\R^K$.  The mean value theorem gives the
	global Lipschitz bound.
\end{proof}

\begin{lemma}[Uniform lower bound for the softmax mass]\label{lem:mass}
	Assume
	\[
	\sup_{t\in[0,T]}\sum_{b=1}^K\norm{u_b(t)}_{L^2(\Omega)}^2\le M_T.
	\]
	Then there exists $m_*(T)>0$, depending only on
	$\Omega,\beta,K,M_T$, such that
	\[
	m_a(U)(t):=\int_\Omega\Phi_a(U(x,t))\dd x
	\ge m_*(T)
	\]
	for every $a=1,\dots,K$ and every $t\in[0,T]$.
\end{lemma}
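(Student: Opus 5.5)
The idea is to show that, on a fixed fraction of $\Omega$, all components of $U$ are bounded, and then to bound the softmax from below pointwise there.

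\emph{Step 1 (pointwise lower bound).} If $\abs{u_b(x)}\le R$ for every $b$, then
\[
\Phi_a(U(x))=\frac{\exp(\beta u_a)}{\sum_b\exp(\beta u_b)}\ge \frac{e^{-\beta R}}{K e^{\beta R}}=\frac{e^{-2\beta R}}{K}.
\]
A cruder bound also suffices: $\Phi_a(U)\ge K^{-1}\exp\bigl(-\beta\sum_b\abs{u_b}\bigr)$, which follows from $u_a-u_b\ge-\abs{u_a}-\abs{u_b}$.

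\emph{Step 2 (Chebyshev).} Set $S(x,t):=\sum_b\abs{u_b(x,t)}^2$. Then $\int_\Omega S\,\dd x\le M_T$, so Chebyshev's inequality gives
\[
\abs{\{x: S(x,t)>R^2\}}\le M_T/R^2 .
\]
Choose $R^2:=2M_T/\abs{\Omega}$. The good set $G_t:=\{S\le R^2\}$ then has measure at least $\abs{\Omega}/2$. On $G_t$ we have $\abs{u_b}\le R$ for all $b$, so Step 1 applies.

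\emph{Step 3 (conclusion).} Combining the two steps,
\[
m_a(U)(t)\ge \int_{G_t}\Phi_a\,\dd x\ge \frac{\abs{\Omega}}{2K}\exp\!\Bigl(-2\beta\sqrt{2M_T/\abs{\Omega}}\Bigr)=:m_*(T)>0 .
\]
This constant depends only on $\abs{\Omega},\beta,K,M_T$ and is uniform in $t\in[0,T]$ and in $a$. No continuity in $t$ is needed, since the bound holds for each $t$ separately.

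There is no serious obstacle. The only point requiring care is that $\Phi_a$ can be exponentially small wherever some $u_b$ is large. This is why a measure-theoretic restriction to the set where $\abs{U}$ is controlled, rather than a pointwise bound on all of $\Omega$, is essential: $L^2$ control alone gives no $L^\infty$ bound. The argument also needs $\abs{\Omega}<\infty$, which holds because $\Omega$ is bounded.
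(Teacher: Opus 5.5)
Your proof is correct, but it follows a different route from the paper's.

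\textbf{The paper's route.} The paper never restricts to a subset. It integrates the non-uniform pointwise bound $\Phi_a(U)\ge K^{-1}\exp\bigl(-2\beta\sum_b\abs{u_b}\bigr)$ over all of $\Omega$. It then uses Jensen's inequality for the convex function $\exp$ to move the spatial average into the exponent. Finally, Cauchy--Schwarz gives $\int_\Omega\sum_b\abs{u_b}\,\dd x\le\abs{\Omega}^{1/2}\sqrt{K}\,M_T^{1/2}$, which yields
$m_*(T)=\frac{\abs{\Omega}}{K}\exp\bigl(-2\beta\sqrt{K}\,M_T^{1/2}/\abs{\Omega}^{1/2}\bigr)$.

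\textbf{Your route.} You use Chebyshev to find a set of measure at least $\abs{\Omega}/2$ on which $U$ is uniformly bounded, and apply a uniform pointwise bound there.

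\textbf{What each buys.} The Jensen argument is shorter and avoids your factor $1/2$. However, it relies on the specific form $\exp(-c\,\abs{U})$ of the lower bound. Your localization argument is more robust: it only uses $\inf_{\abs{U}\le R}\Phi_a>0$ for each $R$. It would therefore work verbatim for any continuous, strictly positive soft-assignment map. The constants are comparable: you get $\sqrt2$ in the exponent where the paper has $\sqrt K$. The paper's $\sqrt K$ enters because it bounds through $\sum_b\abs{u_b}$, while your Step 1 effectively controls $\max_b\abs{u_b}$.

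\textbf{Two small corrections.} Your closing claim that restricting to a good set is ``essential'' is too strong. The Jensen argument shows that one can integrate a non-uniform pointwise bound over the whole domain; what is essential is only that no $L^\infty$ bound is invoked. Also, your choice $R^2=2M_T/\abs{\Omega}$ degenerates when $M_T=0$. That case is trivial ($U=0$ a.e., so $\Phi_a\equiv 1/K$), and you can also avoid it by enlarging $M_T$.
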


\begin{proof}
	For fixed $a$,
	\[
	\Phi_a(U)
	=\frac{e^{\beta u_a}}{\sum_{b=1}^K e^{\beta u_b}}
	\ge
	\frac1K\exp\left(-2\beta\sum_{b=1}^K\abs{u_b}\right).
	\]
	By Jensen's inequality,
	\[
	\begin{aligned}
		m_a(U)(t)
		&\ge
		\frac1K\int_\Omega
		\exp\left(-2\beta\sum_{b=1}^K\abs{u_b(x,t)}\right)\dd x\\
		&\ge
		\frac{|\Omega|}{K}
		\exp\left(
		-\frac{2\beta}{|\Omega|}
		\int_\Omega\sum_{b=1}^K\abs{u_b(x,t)}\dd x
		\right).
	\end{aligned}
	\]
	Cauchy's inequality gives
	\[
	\int_\Omega\sum_{b=1}^K\abs{u_b}
	\le |\Omega|^{1/2}\sqrt K
	\left(\sum_{b=1}^K\norm{u_b}_{L^2}^2\right)^{1/2}
	\le |\Omega|^{1/2}\sqrt K\,M_T^{1/2}.
	\]
	Thus
	\[
	m_a(U)(t)\ge
	\frac{|\Omega|}{K}
	\exp\left(
	-\frac{2\beta\sqrt K}{|\Omega|^{1/2}}M_T^{1/2}
	\right)
	=:m_*(T)>0 .
	\]
\end{proof}

\begin{lemma}[Bounds and Lipschitz control for the data force]\label{lem:F}
	For every admissible $U$ and every $a$,
	\[
	\abs{c_a(U)(t)}\le \norm{I}_{L^\infty(\Omega)}.
	\]
	Hence there is a constant $C_F>0$, depending only on the fixed data and
	parameters, such that
	\[
	\abs{F_a(U,x,t)}\le C_F
	\quad\text{a.e. in }Q_T.
	\]
	Moreover, if $U,V$ both satisfy the bound in Lemma~\ref{lem:mass}, then
	there is $C_T>0$ such that
	\[
	\abs{c_a(U)(t)-c_a(V)(t)}
	\le C_T\norm{U(t)-V(t)}_{L^2(\Omega)}
	\]
	and
	\[
	\norm{F(U)(t)-F(V)(t)}_{L^2(\Omega)}
	\le C_T\norm{U(t)-V(t)}_{L^2(\Omega)} .
	\]
\end{lemma}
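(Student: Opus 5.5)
The plan has three steps: bound $c_a$, then $F_a$, then prove the Lipschitz estimates by splitting each difference into pieces controlled by Lemma~\ref{lem:softmax} and Lemma~\ref{lem:mass}. Throughout I assume $I\in L^\infty(\Omega)$, as implicit in the statement.

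\textbf{Bound on $c_a$.} By Lemma~\ref{lem:softmax}, $\Phi_a(U)>0$. Hence $c_a(U)(t)$ is a weighted average of $I$ with the nonnegative weight $\Phi_a(U(\cdot,t))/m_a(U)(t)$, whose integral is $1$. It follows at once that $\abs{c_a}\le\norm{I}_{L^\infty}$.

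\textbf{Bound on $F_a$.} Since $D_a\in L^\infty$, we get
\[
0\le e_b(c)\le 4\norm{I}_{L^\infty}^2+\max_b\gamma_b\norm{D_b}_{L^\infty}=:C_e .
\]
The softmax weights sum to one, so $0\le\bar e\le C_e$. Using $0<\Phi_a<1$ again gives $\abs{F_a}\le\lambda\beta C_e=:C_F$ a.e. in $Q_T$. This part needs no mass bound.

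\textbf{Lipschitz estimate for $c_a$.} Write $N_a(U)=\int_\Omega I\Phi_a(U)\dd x$ and $m_a(U)=\int_\Omega\Phi_a(U)\dd x$, and split
\[
c_a(U)-c_a(V)=\frac{N_a(U)-N_a(V)}{m_a(U)}+N_a(V)\,\frac{m_a(V)-m_a(U)}{m_a(U)m_a(V)} .
\]
Both numerator differences are controlled by $\norm{I}_{L^\infty}\int_\Omega\abs{\Phi_a(U)-\Phi_a(V)}\dd x$. By Cauchy--Schwarz and Lemma~\ref{lem:softmax}, this is at most $\norm{I}_{L^\infty}\abs{\Omega}^{1/2}C_\beta\norm{U-V}_{L^2}$. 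The denominators are bounded below by $m_*(T)$ and $m_*(T)^2$ via Lemma~\ref{lem:mass}, which is applied to both $U$ and $V$. We also have $\abs{N_a(V)}\le\norm{I}_{L^\infty}\abs{\Omega}$. Together these give the first Lipschitz bound, with a constant of order $m_*(T)^{-2}$.

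\textbf{Lipschitz estimate for $F_a$.} Start from $\abs{e_a(c(U))-e_a(c(V))}=\abs{c_a(V)-c_a(U)}\,\abs{2I-c_a(U)-c_a(V)}\le 4\norm{I}_{L^\infty}\abs{c_a(U)-c_a(V)}$. This is a constant in $x$, so its $L^2(\Omega)$ norm carries only an extra factor $\abs{\Omega}^{1/2}$. Then telescope $F_a(U)-F_a(V)$ as
\[
\lambda\beta\Bigl[(\Phi_a(U)-\Phi_a(V))(e_a-\bar e)(U)+\Phi_a(V)\bigl((e_a(U)-e_a(V))-(\bar e(U)-\bar e(V))\bigr)\Bigr].
\]
For $\bar e$ split further:
\[
\bar e(U)-\bar e(V)=\sum_b(\Phi_b(U)-\Phi_b(V))e_b(U)+\sum_b\Phi_b(V)(e_b(U)-e_b(V)).
\]
Each term is a pointwise-bounded factor ($C_e$ or $1$) times either $\abs{\Phi(U)-\Phi(V)}$ or a spatially constant multiple of $\abs{c(U)-c(V)}$. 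Taking $L^2(\Omega)$ norms and summing over $a$ yields $\norm{F(U)-F(V)}_{L^2}\le C_T\norm{U-V}_{L^2}$.

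The only delicate point is the quotient structure of $c_a$. It is Lipschitz only because the denominators are bounded away from zero, and that is exactly what Lemma~\ref{lem:mass} provides. This is why $C_T$ depends on $T$ through $M_T$, and why the hypothesis is needed for both $U$ and $V$. Everything else is bookkeeping with the pointwise bounds $0<\Phi_a<1$ and $0\le e_b\le C_e$.
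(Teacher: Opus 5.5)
Your proof is correct and follows the paper's argument. It uses the weighted-average bound for $c_a$, the quotient splitting of $c_a(U)-c_a(V)$ controlled by the mass lower bound of Lemma~\ref{lem:mass}, and the same splitting of $\bar e(U)-\bar e(V)$; you also write out the factorization of $e_a(U)-e_a(V)$ and the telescoping of $F_a$ that the paper leaves implicit.

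One small adjustment: the existence theorem allows $\gamma_a\in\R$, so define $C_e$ with $|\gamma_b|$ and drop the claim $e_b\ge 0$. This only gives $|e_a-\bar e|\le 2C_e$, hence $C_F=2\lambda\beta C_e$, as in the paper.
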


\begin{proof}
	Since $\Phi_a\ge0$ and $m_a(U)>0$,
	\[
	\abs{c_a(U)}
	\le
	\frac{\norm{I}_\infty\int_\Omega\Phi_a(U)\dd x}
	{\int_\Omega\Phi_a(U)\dd x}
	=\norm{I}_\infty .
	\]
	It follows that
	\[
	\abs{e_a(U)}
	\le 4\norm{I}_\infty^2
	+\max_b\abs{\gamma_b}\max_b\norm{D_b}_\infty
	=:C_e,
	\]
	and therefore $\abs{\bar e(U)}\le C_e$ and
	\[
	\abs{F_a(U)}\le 2\lambda\beta C_e=:C_F .
	\]
	
	For the Lipschitz part, write
	\[
	N_a(U)=\int_\Omega I\Phi_a(U)\dd x,\qquad
	m_a(U)=\int_\Omega\Phi_a(U)\dd x .
	\]
	Using Lemma~\ref{lem:softmax} and Lemma~\ref{lem:mass},
	\[
	\begin{aligned}
		\abs{c_a(U)-c_a(V)}
		&\le
		\frac{\abs{N_a(U)-N_a(V)}}{m_a(U)}
		+\abs{N_a(V)}
		\frac{\abs{m_a(U)-m_a(V)}}{m_a(U)m_a(V)}\\
		&\le
		C_T\norm{U-V}_{L^2(\Omega)} .
	\end{aligned}
	\]
	Consequently
	\[
	\norm{e_a(U)-e_a(V)}_{L^2}
	\le C_T\norm{U-V}_{L^2(\Omega)}.
	\]
	Combining this with the softmax Lipschitz bound in
	\[
	\bar e(U)-\bar e(V)
	=
	\sum_b(\Phi_b(U)-\Phi_b(V))e_b(U)
	+\sum_b\Phi_b(V)(e_b(U)-e_b(V))
	\]
	gives the same $L^2$ control for $\bar e(U)-\bar e(V)$.
	The definition of $F_a$ then yields the asserted Lipschitz estimate.
\end{proof}

Next, we give the existence theorem of the weak solution.

\begin{theorem}[Existence of weak solutions]\label{thm:existence}
	Assume
	\[
	I,D_a\in L^\infty(\Omega),\qquad
	\gamma_a\in\R,\qquad
	\varepsilon,\beta,\lambda>0,\qquad
	u_{a,0}\in L^2(\Omega).
	\]
	Then, for every $T>0$, system~\eqref{eq:gradient flow} admits a weak
	solution $U=(u_1,\dots,u_K)$ in the sense of Definition~\ref{def:weak_solution}.
\end{theorem}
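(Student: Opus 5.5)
We will use a Faedo--Galerkin scheme built on the Neumann eigenfunctions of $-\Delta$. Let $\{w_j\}$ be the $L^2$-orthonormal eigenfunctions, $-\Delta w_j=\rho_j w_j$, $\partial_{\vec n}w_j=0$. Since $\Omega$ is $C^2$, they lie in $V$ and form a basis of $V$. We seek $u_a^n=\sum_{j\le n}g_{aj}^n(t)w_j$ satisfying \eqref{eq:weak-form} for $v=w_j$, $j\le n$, with $u_a^n(0)=P_nu_{a,0}$. Because $(1-\Delta)$ and $\Delta$ are diagonal on this basis, the Galerkin system is equivalent to
\[
\partial_t u_a^n=-(1-\Delta)P_n\mu_a^n,\qquad \mu_a^n=-\varepsilon^2\Delta u_a^n+W'(u_a^n)+F_a(U^n).
\]
In finite dimensions all norms are equivalent, so $\|U^n\|_{L^2}$ bounded gives a positive lower bound on the masses by Lemma~\ref{lem:mass}. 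Lemmas~\ref{lem:softmax} and \ref{lem:F} then show that the right-hand side is locally Lipschitz in $g^n$, and Picard's theorem gives a local solution. Global existence on $[0,T]$ follows from the a priori bounds below, which also keep $\|U^n\|_{L^2}$ bounded so that Lemma~\ref{lem:mass} continues to apply.

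\textbf{A priori estimates.} The initial data are only in $L^2$, so the energy $\E(U^n(0))$ is not uniformly bounded, and we cannot use Proposition~\ref{prop:continuous-energy} directly. Instead we test with $v=u_a^n$ and obtain
\[
\tfrac12\tfrac{\dd}{\dd t}\|u_a^n\|^2+\varepsilon^2\|\Delta u_a^n\|^2+\varepsilon^2\|\nabla u_a^n\|^2+(W'(u_a^n),u_a^n-\Delta u_a^n)=(F_a,\Delta u_a^n-u_a^n).
\]
The right-hand side is bounded by $C_F|\Omega|^{1/2}(\|\Delta u_a^n\|+\|u_a^n\|)$ using Lemma~\ref{lem:F}. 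For the nonlinear terms:
\begin{itemize}
\item The term $(W'(u),u)=\int(u^4-\tfrac32u^3+\tfrac12u^2)\ge -C|\Omega|$.
\item The term $-(W'(u),\Delta u)=\int W''(u)|\nabla u|^2$. Since $W''(u)=3u^2-3u+\tfrac12\ge -\tfrac14$, this is at least $-\tfrac14\|\nabla u\|^2$.
\end{itemize}
We bound $\|\nabla u\|^2\le\|u\|\|\Delta u\|$, absorb into $\varepsilon^2\|\Delta u\|^2$ via Young's inequality, and apply Gronwall. This yields uniform bounds for $U^n$ in $L^\infty(0,T;L^2)\cap L^2(0,T;V)$, together with $\int_0^T\!\!\int|u^n|^4\le C$. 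In particular the constant $M_T$ in Lemma~\ref{lem:mass} is uniform in $n$.

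\textbf{Time derivative, compactness and passage to the limit.} Next we bound $\partial_tu_a^n$ in $L^2(0,T;V')$, or in $L^{q}(0,T;V')$ for some $q>1$, which is enough for Aubin--Lions. By duality, using the stability of $P_n$ on $V$ (eigen-projection), we get
\[
|\langle\partial_tu^n,v\rangle|\le C(\|\Delta u^n\|+\|W'(u^n)\|_{L^2}+C_F)\|v\|_{H^2}.
\]
The main obstacle is the cubic term. It requires $W'(u^n)\in L^2(0,T;L^2)$, that is, $u^n\in L^6(Q_T)$. In two dimensions we use Gagliardo--Nirenberg: $\|u\|_{L^6}^3\le C\|u\|^{2}\|u\|_{H^2}$ (indeed $\|u\|_{L^\infty}\le C\|u\|^{1/2}\|u\|_{H^2}^{1/2}$). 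This gives $\|u\|_{L^6}^6\le C\|u\|^4\|u\|_{H^2}^2$, which is integrable in time, so $W'(u^n)$ is bounded in $L^2(Q_T)$. Aubin--Lions with $V\Subset H^1\subset V'$ then gives, along a subsequence, $u^n\to u$ strongly in $L^2(0,T;H^1)$ and a.e., weakly in $L^2(0,T;V)$, and $\partial_tu^n\rightharpoonup\partial_tu$ in $L^2(0,T;V')$.

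The limit in the linear terms is immediate. For $W'(u^n)$, we combine a.e. convergence with the uniform $L^2(Q_T)$ bound (Vitali) to get weak convergence to $W'(u)$. For the data force, the $c_a$ are nonlocal, but Lemma~\ref{lem:F} gives $\|F(U^n)-F(U)\|_{L^2}\le C_T\|U^n-U\|_{L^2}\to0$ in $L^2(0,T)$, with $C_T$ uniform thanks to the uniform $M_T$. We then pass to the limit in the integrated-in-time Galerkin identity, with test functions $\psi(t)w_j$, and conclude by density. Finally, $u\in L^2(0,T;V)$ with $\partial_tu\in L^2(0,T;V')$ gives $u\in C([0,T];L^2)$ by the Lions--Magenes lemma for the Gelfand triple $V\subset L^2\subset V'$, and $u(0)=u_{a,0}$ follows from $P_nu_{a,0}\to u_{a,0}$ in $L^2$.
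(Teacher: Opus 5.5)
Your proposal is correct and follows essentially the same route as the paper. The shared steps are: Galerkin approximation; testing with $u_a^n$ and combining a lower bound on $W''$ with Gronwall to get $L^\infty(0,T;L^2)\cap L^2(0,T;V)$ bounds; the two-dimensional Gagliardo--Nirenberg (Agmon) inequality to place $W'(u_a^n)$ in $L^2(Q_T)$; an $L^2(0,T;V')$ bound on $\partial_t u_a^n$; Aubin--Lions; the Lipschitz estimate of Lemma~\ref{lem:F} for the nonlocal force; and Lions--Magenes. Your explicit choice of Neumann eigenfunctions, which makes $P_n$ stable on $V$, also properly justifies the duality estimate for $\partial_t u_a^n$ that the paper states directly for arbitrary $v\in V$. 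The only slip is a harmless factor of two: $W'(u)=2u^3-3u^2+u$, so $W''\ge-\tfrac12$ and $uW'(u)=2u^4-3u^3+u^2$.
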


\begin{proof}
	\textbf{Step 1: Galerkin approximation.}
	Let $\{w_j\}_{j=1}^\infty$ be an $L^2$-orthonormal basis of $V$ and set
	$V_m=\operatorname{span}\{w_1,\dots,w_m\}$.  Seek
	\[
	u_a^m(x,t)=\sum_{j=1}^m g_{aj}^m(t)w_j(x),
	\qquad a=1,\dots,K,
	\]
	such that, for every $v\in V_m$,
	\begin{align}
		\ip{\partial_tu_a^m}{v}
		&+\varepsilon^2\ip{\Delta u_a^m}{\Delta v}
		+\varepsilon^2\ip{\nabla u_a^m}{\nabla v}
		-\ip{W'(u_a^m)}{\Delta v}
		+\ip{W'(u_a^m)}{v}        \notag\\
		&-\ip{F_a(U^m)}{\Delta v}
		+\ip{F_a(U^m)}{v}=0,      \label{eq:galerkin}
	\end{align}
	with $u_a^m(0)=P_m u_{a,0}$.  In finite dimension this is a locally Lipschitz
	ODE system because $W'$ is polynomial and the softmax denominator is strictly
	positive on bounded coefficient sets.  Hence there exists a unique solution of the corresponding ODE system.

	\textbf{Step 2: uniform $L^\infty_tL^2_x$ and $L^2_tH^2_x$ estimates.}
	Taking $v=u_a^m$ in \eqref{eq:galerkin} gives
	\[
	\begin{aligned}
		\frac12\frac{\dd}{\dd t}\norm{u_a^m}_{L^2}^2
		&+\varepsilon^2\norm{\Delta u_a^m}_{L^2}^2
		+\varepsilon^2\norm{\nabla u_a^m}_{L^2}^2       \\
		&=
		\ip{W'(u_a^m)}{\Delta u_a^m}
		-\ip{W'(u_a^m)}{u_a^m}
		+\ip{F_a(U^m)}{\Delta u_a^m}
		-\ip{F_a(U^m)}{u_a^m}.
	\end{aligned}
	\]
	Since
	\[
	W''(s)=6s^2-6s+1\ge-\frac12,
	\]
	integration by parts yields
	\[
	\ip{W'(u_a^m)}{\Delta u_a^m}
	=
	-\int_\Omega W''(u_a^m)\abs{\nabla u_a^m}^2\dd x
	\le \frac12\norm{\nabla u_a^m}_{L^2}^2.
	\]
	By interpolation,
	\[
	\norm{\nabla v}_{L^2}^2
	\le \delta\norm{\Delta v}_{L^2}^2+C_\delta\norm{v}_{L^2}^2 .
	\]
	Also $-sW'(s)=-2s^4+3s^3-s^2\le C$ for all $s\in\R$.  Lemma~\ref{lem:F}
	gives
	\[
	\abs{\ip{F_a(U^m)}{\Delta u_a^m}}
	\le \delta\norm{\Delta u_a^m}_{L^2}^2+C_\delta,
	\qquad
	\abs{\ip{F_a(U^m)}{u_a^m}}
	\le C(1+\norm{u_a^m}_{L^2}^2).
	\]
	Choosing $\delta>0$ small enough and summing over $a$,
	\[
	Y_m'(t)+c_0\sum_{a=1}^K\norm{\Delta u_a^m}_{L^2}^2
	\le C(1+Y_m(t)),
	\qquad
	Y_m(t)=\sum_{a=1}^K\norm{u_a^m(t)}_{L^2}^2 .
	\]
	Gronwall's inequality gives
	\[
	\sup_{0\le t\le T}Y_m(t)\le C_T,
	\qquad
	\sum_{a=1}^K\int_0^T\norm{\Delta u_a^m}_{L^2}^2\dd t\le C_T.
	\]
	The Neumann elliptic estimate
	\[
	\norm{v}_{H^2}\le C\bigl(\norm{v}_{L^2}+\norm{\Delta v}_{L^2}\bigr)
	\]
	then implies uniform boundedness in
	$L^\infty(0,T;L^2)\cap L^2(0,T;H^2)$.
	
	Consequently, by weak compactness and the weak closedness of
	$V=H_N^2(\Omega)$, any Galerkin limit satisfies
	\[
	u_a^m\in L^\infty(0,T;L^2(\Omega))\cap L^2(0,T;V),
	\]
	which gives the main spatial part of the desired weak-solution class.
	
	\textbf{Step 3: higher integrability and the nonlinear potential.}
	In two dimensions, $v$ satisfies homogeneous Neumann boundary conditions
	\[
	\norm{v}_{H^1}^4
	\le C_{\rm GN}\norm{v}_{L^2}^2\norm{v}_{H^2}^2,
	\]
	hence, using Step 2,
	\[
	\int_0^T\norm{u_a^m(t)}_{H^1}^4\dd t
	\le
	C_{\rm GN}
	\left(\sup_{0\le t\le T}\norm{u_a^m(t)}_{L^2}^2\right)
	\int_0^T\norm{u_a^m(t)}_{H^2}^2\dd t
	\le C_{T,1}.
	\]
	Thus $u_a^m$ is uniformly bounded in $L^4(0,T;H^1(\Omega))$.
	
	Also,because of Gagliardo-Nirenberg inequality,
	\[
	\norm{v}_{L^6}^6\le C_6\norm{v}_{H^2}^2\norm{v}_{L^2}^4,
	\qquad
	\abs{W'(s)}\le C_W(1+\abs{s}^3),
	\]
	and therefore
	\[
	\begin{aligned}
		\int_0^T\norm{W'(u_a^m)}_{L^2}^2\dd t
		&\le
		C_W'\int_0^T\bigl(1+\norm{u_a^m}_{L^6}^6\bigr)\dd t  \\
		&\le
		C_{T,2}
		+C_6C_W'
		\left(\sup_{0\le t\le T}\norm{u_a^m(t)}_{L^2}^4\right)
		\int_0^T\norm{u_a^m(t)}_{H^2}^2\dd t
		\le C_{T,3}.
	\end{aligned}
	\]
	So $W'(u_a^m)$ is uniformly bounded in $L^2(0,T;L^2(\Omega))$.
	
	\textbf{Step 4: time derivative estimate.}
	For $v\in V$ with $\norm{v}_V\le1$, \eqref{eq:galerkin} gives
	\[
	\begin{aligned}
		\abs{\dual{\partial_tu_a^m}{v}}
		&\le
		\varepsilon^2\norm{\Delta u_a^m}_{L^2}\norm{\Delta v}_{L^2}
		+\varepsilon^2\norm{\nabla u_a^m}_{L^2}\norm{\nabla v}_{L^2} \\
		&\quad
		+\norm{W'(u_a^m)}_{L^2}
		\bigl(\norm{\Delta v}_{L^2}+\norm{v}_{L^2}\bigr)
		+\norm{F_a(U^m)}_{L^2}
		\bigl(\norm{\Delta v}_{L^2}+\norm{v}_{L^2}\bigr) .
	\end{aligned}
	\]
	Taking the supremum over $\norm{v}_V\le1$ 
	\[ \abs{\dual{\partial_tu_a^m}{v}}\le C_*
	\bigl(\norm{u_a^m}_{H^2}
	+\norm{W'(u_a^m)}_{L^2}
	+1\bigr).
	\]
	Integrating over \((0,T)\), we obtain
	\[
	\int_0^T\norm{\partial_tu_a^m(t)}_{V'}^2\dd t
	\le
	C_*'
	\int_0^T
	\left(
	\norm{u_a^m}_{H^2}^2
	+\norm{W'(u_a^m)}_{L^2}^2
	+1
	\right)\dd t
	\le C_{T,4}.
	\]
	Therefore $\partial_tu_a^m$ is uniformly bounded in $L^2(0,T;V')$.
	
	\textbf{Step 5: compactness and passage to the limit.}
	Aubin--Lions gives, after extracting a subsequence,
	\[
	u_a^m\to u_a\quad\text{strongly in }L^2(0,T;H^1(\Omega)),
	\]
	\[
	u_a^m\rightharpoonup u_a\quad\text{weakly in }L^2(0,T;H^2(\Omega)),
	\qquad
	u_a^m\overset{*}{\rightharpoonup}u_a
	\quad\text{in }L^\infty(0,T;L^2(\Omega)),
	\]
	and
	\[
	\partial_tu_a^m\rightharpoonup\partial_tu_a
	\quad\text{weakly in }L^2(0,T;V').
	\]
	The uniform $L^\infty_tL^2_x$ estimate and Lemma~\ref{lem:mass} give a
	positive lower bound for
	$\int_\Omega\Phi_a(U^m)\dd x$, independent of $m$.  Lemma~\ref{lem:softmax}
	then yields $\Phi(U^m)\to\Phi(U)$ strongly in $L^2(Q_T)$.
	Consequently the numerators and denominators in $c_a(U^m)$ converge strongly
	in $L^2(0,T)$, and the positive lower bound allows division:
	\[
	c_a(U^m)\to c_a(U)\quad\text{strongly in }L^2(0,T).
	\]
	It follows that
	\[
	F_a(U^m)\to F_a(U)\quad\text{strongly in }L^2(Q_T).
	\]
	The polynomial term satisfies
	$W'(u_a^m)\rightharpoonup W'(u_a)$ weakly in $L^2(Q_T)$.  Indeed,
	the strong convergence in $L^2(0,T;H^1)$ gives a.e. convergence along a
	subsequence, while the uniform $L^2(0,T;L^6)$ bound controls the cubic growth
	of $W'$.  Hence all terms in \eqref{eq:galerkin} pass to the limit, and $U$
	satisfies \eqref{eq:weak-form}. 
	Hence,we gain $$\begin{aligned}\langle\partial_{t}u_{a},v\rangle&+\varepsilon^{2}\left(\Delta u_{a},\Delta v\right)+\varepsilon^{2}\left(\nabla u_{a},\nabla v\right)-\left(W^{\prime}(u_{a}),\Delta v\right)+\left(W^{\prime}(u_{a}),v\right)\\&-\left(F_{a}(U),\Delta v\right)+\left(F_{a}(U),v\right)=0.\end{aligned}$$
	
	 Finally,
	\[
	u_a\in L^2(0,T;V),\qquad \partial_tu_a\in L^2(0,T;V')
	\]
	imply $u_a\in C([0,T];L^2(\Omega))$ by the Lions--Magenes lemma, and
	$u_a(0)=u_{a,0}$ follows from $P_m u_{a,0}\to u_{a,0}$ in $L^2$.
\end{proof}

\subsubsection{Uniqueness}

\begin{theorem}[Uniqueness]\label{}
	In the class of weak solutions described above, the solution  in the theorem
	\eqref{thm:existence} is unique.
\end{theorem}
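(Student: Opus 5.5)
Let $U=(u_1,\dots,u_K)$ and $\widetilde U=(\tilde u_1,\dots,\tilde u_K)$ be two weak solutions in the sense of Definition~\ref{def:weak_solution} with the same initial data, and set $w_a=u_a-\tilde u_a$. We subtract the two weak formulations \eqref{eq:weak-form} and test with $v=w_a(t)\in V$, which is admissible for a.e.\ $t$. Since $w_a\in L^2(0,T;V)$ and $\partial_t w_a\in L^2(0,T;V')$, the Lions--Magenes lemma gives $\dual{\partial_t w_a}{w_a}=\frac12\frac{\dd}{\dd t}\norm{w_a}_{L^2}^2$. This yields
\[
\frac12\frac{\dd}{\dd t}\norm{w_a}_{L^2}^2+\varepsilon^2\norm{\Delta w_a}_{L^2}^2+\varepsilon^2\norm{\nabla w_a}_{L^2}^2
=\ip{W'(u_a)-W'(\tilde u_a)}{\Delta w_a-w_a}+\ip{F_a(U)-F_a(\widetilde U)}{\Delta w_a-w_a}.
\]

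\emph{Data term.} Both solutions lie in $L^\infty(0,T;L^2)$, with a common bound $M_T$. Hence Lemma~\ref{lem:mass} applies to both, and Lemma~\ref{lem:F} gives $\norm{F(U)-F(\widetilde U)}_{L^2}\le C_T\norm{U-\widetilde U}_{L^2}$. By Young's inequality this term is bounded by $\frac{\varepsilon^2}{4}\norm{\Delta w_a}^2+C\norm{W}^2$, where $W=(w_1,\dots,w_K)$ and $\norm{W}^2=\sum_a\norm{w_a}_{L^2}^2$.

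\emph{Potential term (main obstacle).} This is the cubic nonlinearity tested against $\Delta w_a$. We write
\[
W'(u_a)-W'(\tilde u_a)=w_a\,q_a,\qquad q_a=2(u_a^2+u_a\tilde u_a+\tilde u_a^2)-3(u_a+\tilde u_a)+1,
\]
so that $\abs{q_a}\le C(1+\abs{u_a}^2+\abs{\tilde u_a}^2)$. Then
\[
\abs{\ip{w_aq_a}{\Delta w_a}}\le \norm{q_a}_{L^4}\norm{w_a}_{L^4}\norm{\Delta w_a}_{L^2}.
\]
In two dimensions, Ladyzhenskaya's inequality together with the interpolation $\norm{\nabla w}^2\le\norm{w}\norm{\Delta w}$ (Neumann case) gives
\[
\norm{w_a}_{L^4}\le C\norm{w_a}^{3/4}\bigl(\norm{w_a}+\norm{\Delta w_a}\bigr)^{1/4}.
\]
Also $\norm{q_a}_{L^4}\le C(1+\norm{u_a}_{L^8}^2+\norm{\tilde u_a}_{L^8}^2)$. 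By Gagliardo--Nirenberg, $\norm{u}_{L^8}^2\le C\norm{u}_{L^2}^{3/2}\norm{u}_{H^2}^{1/2}$, so $\norm{q_a}_{L^4}^{8/3}\le C(1+\norm{u_a}_{H^2}^{2/3}+\norm{\tilde u_a}_{H^2}^{2/3})$, which is integrable in time. Young's inequality with exponents $(8/3,8/5)$ on the factor $\norm{\Delta w_a}^{5/4}$ then gives
\[
\abs{\ip{w_aq_a}{\Delta w_a}}\le \frac{\varepsilon^2}{4}\norm{\Delta w_a}^2+C\bigl(1+\norm{q_a}_{L^4}^{8/3}\bigr)\norm{w_a}^2.
\]
The lower-order piece $\ip{w_aq_a}{w_a}$ is handled in the same way, or more simply by using $W''\ge-\tfrac12$ after rewriting it as $-\ip{W'(u_a)-W'(\tilde u_a)}{w_a}\le\tfrac12\norm{w_a}^2$.

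\emph{Conclusion.} Summing over $a$ and absorbing the $\norm{\Delta w_a}^2$ terms into the left-hand side gives
\[
\frac{\dd}{\dd t}\norm{W}^2\le g(t)\norm{W}^2,\qquad g\in L^1(0,T),
\]
with $g$ built from $\norm{u_a}_{H^2}^{2/3}$, $\norm{\tilde u_a}_{H^2}^{2/3}$ and constants, all finite because $u_a,\tilde u_a\in L^2(0,T;H^2)$. Since $W(0)=0$, Gronwall's inequality gives $W\equiv0$ on $[0,T]$. Continuity into $L^2$ makes this hold for every $t$, which proves uniqueness.
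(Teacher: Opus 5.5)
You reach the result by a genuinely different route from the paper. The paper tests the difference equation with $\Nop w_a=(I-\Delta_N)^{-1}w_a$ rather than with $w_a$. This inverts the mobility $I-\Delta$, so every nonlinear term ends up paired with $w_a$ alone. The cubic term is then handled by the one-sided bound $W''\ge-\frac12$ without any Sobolev embedding, and the data term by the $L^2$-Lipschitz estimate of Lemma~\ref{lem:F}. The spectral interpolation $\norm{W}_{L^2}^2\le\delta\norm{\nabla W}_{L^2}^2+C_\delta\norm{W}_*^2$ then closes a Gronwall inequality with a constant coefficient in the weaker norm $\norm{\cdot}_*$. The constants depend only on the $L^\infty_tL^2_x$ bounds, and nothing in that argument is specific to two dimensions.

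Your choice $v=w_a$ buys the stronger dissipation $\varepsilon^2\norm{\Delta w_a}_{L^2}^2$ and a stability estimate directly in $L^2$. The price is that you must estimate $W'(u_a)-W'(\tilde u_a)$ against $\Delta w_a$. This costs the two-dimensional Ladyzhenskaya/Gagliardo--Nirenberg inequalities and a time-dependent Gronwall coefficient that uses the full $L^2(0,T;H^2)$ regularity of both solutions.

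One computation in that step needs correcting. In two dimensions, $\norm{u}_{L^8}\le C\norm{u}_{L^2}^{5/8}\norm{u}_{H^2}^{3/8}$, so $\norm{u}_{L^8}^2\le C\norm{u}_{L^2}^{5/4}\norm{u}_{H^2}^{3/4}$; the exponent $\frac14$ you used is the one for $L^4$. Hence $\norm{q_a}_{L^4}^{8/3}\le C\bigl(1+\norm{u_a}_{H^2}^{2}+\norm{\tilde u_a}_{H^2}^{2}\bigr)$, not exponent $\frac23$; even your own stated bound would have given $\frac43$. Since $u_a,\tilde u_a\in L^2(0,T;H^2)$, this is still in $L^1(0,T)$, so your Gronwall argument goes through. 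However, the estimate is exactly critical for the available regularity rather than having room to spare.
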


\begin{proof}
	Let $U=(u_1,\dots,u_K)$ and $V=(v_1,\dots,v_K)$ be two weak solutions with
	the same initial data.  Set
	\[
	w_a=u_a-v_a,\qquad W=U-V.
	\]
	In this proof, $\norm{W}_{L^2}$ always denotes the product norm
	\[
	\norm{W}_{L^2}^2
	=
	\sum_{a=1}^K\norm{w_a}_{L^2(\Omega)}^2,
	\]
	and similarly for $\norm{\nabla W}_{L^2}$ and $\norm{F(U)-F(V)}_{L^2}$.
	Let $\{\varphi_j\}_{j=0}^\infty$ be an $L^2$-orthonormal Neumann eigenbasis,
	\[
	-\Delta\varphi_j=\lambda_j\varphi_j,\qquad \frac{\partial \varphi_j}{\partial\vec{n}}=0,
	\]
	and define
	\[
	\Nop=(I-\Delta_N)^{-1}.
	\]
	Here $\Delta_N$ denotes the Laplacian with homogeneous Neumann boundary
	condition; equivalently, for $g=\Nop f$,
	\[
	g-\Delta g=f\quad\text{in }\Omega,
	\qquad
	\frac{\partial g}{\partial\vec{n}}=0 \quad\text{on }\partial\Omega.
	\]
	Hence $\Nop f\in H_N^2(\Omega)=V$ whenever $f\in L^2(\Omega)$.
	If $f=\sum_jc_j\varphi_j$, set
	\[
	\norm{f}_*^2=(\Nop f,f)
	=
	\sum_{j=0}^\infty\frac{\abs{c_j}^2}{1+\lambda_j}.
	\]
	
	Subtracting the weak formulations for $u_a$ and $v_a$ gives, for every
	$z\in V$,
	\begin{align}
		\dual{\partial_tw_a}{z}
		&+\varepsilon^2\ip{\Delta w_a}{\Delta z}
		+\varepsilon^2\ip{\nabla w_a}{\nabla z} \notag\\
		&-\ip{W'(u_a)-W'(v_a)}{\Delta z}
		+\ip{W'(u_a)-W'(v_a)}{z} \notag\\
		&-\ip{F_a(U)-F_a(V)}{\Delta z}
		+\ip{F_a(U)-F_a(V)}{z}=0 .
		\label{eq:weak-difference}
	\end{align}
	Testing \eqref{eq:weak-difference} with $z=\Nop w_a\in V$ and using
	$(I-\Delta)\Nop w_a=w_a$ gives
	\[
	\frac12\frac{\dd}{\dd t}\norm{w_a}_*^2
	+\varepsilon^2\norm{\nabla w_a}_{L^2}^2
	+\ip{W'(u_a)-W'(v_a)}{w_a}
	+\ip{F_a(U)-F_a(V)}{w_a}
	=0.
	\]
	Here the identities are direct consequences of the Neumann spectral
	decomposition.  If $w_a=\sum_j c_j\varphi_j$, then
	\[
	\ip{\Delta w_a}{\Delta\Nop w_a}
	+\ip{\nabla w_a}{\nabla\Nop w_a}
	=
	\sum_j\lambda_j\abs{c_j}^2
	=
	\norm{\nabla w_a}_{L^2}^2,
	\]
	and, for any scalar field $G$,
	\[
	-\ip{G}{\Delta\Nop w_a}+\ip{G}{\Nop w_a}
	=
	\ip{G}{(I-\Delta)\Nop w_a}
	=
	\ip{G}{w_a}.
	\]
	After summing in $a$,
	\begin{align}
		\frac12\frac{\dd}{\dd t}\norm{W}_*^2
		&+\varepsilon^2\norm{\nabla W}_{L^2}^2
		+\sum_{a=1}^K\ip{W'(u_a)-W'(v_a)}{w_a} \notag\\
		&+\sum_{a=1}^K\ip{F_a(U)-F_a(V)}{w_a}=0. \label{eq:unique-energy}
	\end{align}
	Since $W''(s)=6s^2-6s+1\ge-\frac12$, the mean value theorem gives
	\[
	\bigl(W'(r)-W'(s)\bigr)(r-s)\ge-\frac12\abs{r-s}^2,
	\]
	hence
	\[
	\sum_{a=1}^K\ip{W'(u_a)-W'(v_a)}{w_a}
	\ge -\frac12\norm{W}_{L^2}^2.
	\]
	Both weak solutions satisfy a common $L^\infty_tL^2_x$ bound, so
	Lemma~\ref{lem:F} applies:
	\[
	\norm{F(U)-F(V)}_{L^2}
	\le C_T\norm{W}_{L^2} .
	\]
	Thus
	\[
	\left|
	\sum_{a=1}^K\ip{F_a(U)-F_a(V)}{w_a}
	\right|
	\le C_T\norm{W}_{L^2}^2.
	\]
	Equation~\eqref{eq:unique-energy} becomes
	\[
	\frac12\frac{\dd}{\dd t}\norm{W}_*^2
	+\varepsilon^2\norm{\nabla W}_{L^2}^2
	\le C_T\norm{W}_{L^2}^2.
	\]
	For every $\delta>0$ there is $C_\delta>0$ such that
	\[
	1\le \delta\lambda+\frac{C_\delta}{1+\lambda},
	\qquad \lambda\ge0.
	\]
	Applying this spectral inequality to each component gives
	\[
	\norm{W}_{L^2}^2
	\le
	\delta\norm{\nabla W}_{L^2}^2
	+C_\delta\norm{W}_*^2.
	\]
	Choosing $\delta$ small enough to absorb the gradient term, we obtain
	\[
	\frac{\dd}{\dd t}\norm{W}_*^2
	\le C_T\norm{W}_*^2.
	\]
	Since $W(0)=0$, Gronwall's inequality gives
	\[
	\norm{W(t)}_*^2=0,\qquad 0\le t\le T.
	\]
	Therefore $U=V$.
\end{proof}


\section{Numerical algorithms}
\label{sec:numerical}

The proposed model is numerically solved through its gradient flow. For Cahn--Hilliard-type high-order nonlinear problems~\cite{eyreUnconditionallyGradientStable1998,zhuCoarseningKineticsVariablemobility1999}, explicit schemes are severely restricted by stability conditions, whereas fully implicit schemes are costly due to the nonlinear solves required at each time step. Related numerical methods have also been developed for phase-field and free-energy models~\cite{jiangEfficientNumericalMethods2020,baoConvergenceAnalysisBregman2024}. To balance stability and efficiency, we develop a linear structure-preserving scheme by combining stabilized energy splitting with the SAV approach. The scheme is linearly solvable and unconditionally energy stable, thereby improving both numerical stability and computational efficiency.

\subsection{Energy Splitting Method}

Energy splitting is a classical technique for constructing stable numerical schemes for gradient flows~\cite{eyreUnconditionallyGradientStable1998}. The basic idea is to decompose the total energy into two parts,
\[
E(U)=E_c(U)-E_e(U),
\]
where the convex or stabilizing part \(E_c\) is treated implicitly, while the remaining part \(E_e\) is treated explicitly. A typical semi-implicit scheme takes the form
\[
\frac{U^{n+1}-U^n}{\tau}
=
-\mathcal{M}
\left(
\frac{\delta E_c}{\delta U}(U^{n+1})
-
\frac{\delta E_e}{\delta U}(U^n)
\right),
\]
where \(\mathcal{M}\) is the mobility operator.

The main advantage of this approach is its clear variational structure. By treating the stiff linear or convex part implicitly, the scheme can achieve good numerical stability and avoid the severe time-step restriction of fully explicit methods. In particular, for Cahn--Hilliard type equations, the high-order linear term can be handled implicitly to reduce the stiffness caused by the fourth-order diffusion operator.

However, a pure energy splitting strategy may become difficult when the nonlinear energy has a complicated structure. In the present model, the data term contains the softmax functions and the region averages \(c_a(U)\), which depend nonlinearly on \(U\). Therefore, it is not straightforward to construct a strict convex--concave splitting for the whole energy. In practice, one may need a large stabilization parameter or a stronger linearization of the nonlinear data term, which may affect the accuracy and efficiency of the method.

\subsection{Scalar Auxiliary Variable Method}

The scalar auxiliary variable (SAV) method provides another effective way to construct stable linear schemes for nonlinear gradient flows~\cite{shenScalarAuxiliaryVariable2018,shenNewClassEfficient2019}. Suppose the energy can be written as
\[
E(U)
=
\frac{1}{2}
\sum_{a=1}^K
\left\langle
\mathcal{L}u_a,u_a
\right\rangle
+
E_1(U),
\]
where \(\mathcal{L}\) is a linear positive definite operator and \(E_1(U)\) contains the nonlinear part of the energy. The SAV method introduces a scalar auxiliary variable
\[
r(t)=\sqrt{E_1(U(t))+C_0},
\]
where \(C_0>0\) is chosen so that \(E_1(U)+C_0>0\). Then the nonlinear contribution can be rewritten in a form suitable for linear time discretization.

The advantage of the SAV method is that it does not require a convex--concave decomposition of \(E_1(U)\). Therefore, it is particularly suitable for complicated nonlinear energies, such as the softmax-based fidelity term in the proposed model. Moreover, after introducing the auxiliary variable, the resulting numerical scheme is usually linear and can preserve a modified discrete energy dissipation law.

Nevertheless, the SAV method also has some limitations. The dissipated energy is generally a modified energy involving the auxiliary variable, rather than exactly the original physical energy. Hence, the original energy may exhibit small oscillations in numerical simulations, especially when a relatively large time step is used. In such cases, the original energy is not guaranteed to be monotonically decreasing and may become unstable due to the explicit treatment of nonlinear terms.

\subsection{Proposed Stabilized SAV Splitting Scheme}

To address the shortcomings of the two methods in the segmentation model, we propose a stabilized SAV splitting scheme for the present model. 

By introducing a stabilization parameter \(\alpha>0\), the chemical potential is split as
\[
\begin{aligned}
	\mu_a
	&=
	\underbrace{
		\left(
		-\varepsilon^2\Delta u_a+\alpha u_a
		\right)
	}_{\text{linear implicit part}}
	+
	\underbrace{
		\left(
		W'(u_a)-\alpha u_a+F_a(U)
		\right)
	}_{\text{nonlinear SAV part}}                                      \\[2mm]
	&=
	\mathcal{L}u_a+B_a(U),
\end{aligned}
\]

Accordingly, the energy can be rewritten as
\[
E(U)
=
\frac{1}{2}
\sum_{a=1}^K
\left\langle
\mathcal{L}u_a,u_a
\right\rangle
+
E_1(U),
\]
where
\[
E_1(U)
=
\sum_{a=1}^K
\int_{\Omega}
\left[
W(u_a)-\frac{\alpha}{2}u_a^2
\right]\,dx
+
\lambda
\sum_{a=1}^K
\int_{\Omega}
\Phi_a(U)e_a(U)\,dx.
\]
Choose a constant \(C_0>0\) such that \(E_1(U)+C_0>0\), and define the scalar auxiliary variable
\[
r(t)
=
\sqrt{E_1(U)+C_0}.
\]
Then the stabilized SAV reformulation of the gradient flow is written as
\begin{subequations}\label{eq:sav-continuous}
	\begin{align}
		\partial_t u_a
		&=
		-\mathcal{M}\mu_a,
		\label{eq:sav-continuous-a}
		\\
		\mu_a
		&=
		\mathcal{L}u_a
		+
		\frac{r}{\sqrt{E_1(U)+C_0}}
		B_a(U),
		\label{eq:sav-continuous-b}
		\\
		\frac{d r}{dt}
		&=
		\frac{1}{2\sqrt{E_1(U)+C_0}}
		\sum_{a=1}^K
		\left\langle
		B_a(U),
		\partial_t u_a
		\right\rangle .
		\label{eq:sav-continuous-c}
	\end{align}
\end{subequations}
Here the mobility operator is taken as
\[
\mathcal{M}=I-\Delta .
\]

At each time step, we first compute $c^n$,and then we get
\[
B_a^n
=
W'(u_a^n)-\alpha u_a^n+F_a(U^n),
\qquad
q_a^n
=
\frac{B_a^n}
{\sqrt{E_1(U^n)+C_0}}.
\]
The first-order SAV time discretization is given by
\begin{subequations}\label{eq:sav-discrete}
	\begin{align}
		\frac{u_a^{n+1}-u_a^n}{\tau}
		&=
		-\mathcal{M}\mu_a^{n+1},
		\label{eq:sav-discrete-a}
		\\
		\mu_a^{n+1}
		&=
		\mathcal{L}u_a^{n+1}
		+
		r^{n+1}q_a^n,
		\label{eq:sav-discrete-b}
		\\
		\frac{r^{n+1}-r^n}{\tau}
		&=
		\frac{1}{2}
		\sum_{a=1}^K
		\left\langle
		q_a^n,
		\frac{u_a^{n+1}-u_a^n}{\tau}
		\right\rangle .
		\label{eq:sav-discrete-c}
	\end{align}
\end{subequations}

Substituting \eqref{eq:sav-discrete-b} into \eqref{eq:sav-discrete-a}, we obtain
\[
A u_a^{n+1}
+
\tau\,\mathcal{M}q_a^n\,r^{n+1}
=
u_a^n,
\qquad
A
=
I+\tau\,\mathcal{M}\mathcal{L}.
\]
For efficient implementation, we introduce
\[
\begin{aligned}
	s_a^n
	&=
	A^{-1}u_a^n,
	\\
	t_a^n
	&=
	A^{-1}
	\left(
	\tau\,\mathcal{M}q_a^n
	\right),
	\\
	u_a^{n+1}
	&=
	s_a^n-r^{n+1}t_a^n,
	\\
	r^{n+1}
	&=
	\frac{
		r^n
		+
		\frac{1}{2}
		\sum_{a=1}^K
		\left\langle
		q_a^n,
		s_a^n-u_a^n
		\right\rangle
	}{
		1
		+
		\frac{1}{2}
		\sum_{a=1}^K
		\left\langle
		q_a^n,
		t_a^n
		\right\rangle
	}.
\end{aligned}
\]

Therefore, for each phase variable, only two linear systems with the same coefficient operator \(A\) need to be solved at each time step. Under the Fourier spectral discretization, the operator \(A\) becomes diagonal in Fourier space. Since
\[
-\Delta
\quad\longleftrightarrow\quad
|k|^2,
\]
we have
\[
\widehat{\mathcal{M}}(k)
=
1+|k|^2,
\qquad
\widehat{\mathcal{L}}(k)
=
\varepsilon^2|k|^2+\alpha,
\]
and hence
\[
\widehat{A}(k)
=
1
+
\tau\,
\widehat{\mathcal{M}}(k)
\widehat{\mathcal{L}}(k).
\]
Thus the linear systems can be efficiently solved by FFT.

In the fully discrete energy estimate, let \(M_h\) and \(L_h\) denote the
Fourier spectral discretizations of \(\mathcal{M}=I-\Delta\) and
\(\mathcal{L}=-\varepsilon^2\Delta+\alpha I\), respectively. Their Fourier
symbols are the corresponding multipliers given above, namely
\[
\widehat{M_h}(k)=1+|k|^2,
\qquad
\widehat{L_h}(k)=\varepsilon^2|k|^2+\alpha .
\]
Thus \(M_h\) and \(L_h\) are self-adjoint positive definite with respect to the
discrete inner product, which is still denoted by \(\ip{\cdot}{\cdot}\). The
modified SAV energy is defined by
\[
\Etilde(U^n,r^n)
=
\frac12
\sum_{a=1}^K
\ip{u_a^n}{L_hu_a^n}
+(r^n)^2 .
\]

Then we have the following theorem, which shows that
scheme~\eqref{eq:sav-discrete} is energy stable for the gradient
flow~\eqref{eq:gradient flow}.

\begin{theorem}[Discrete modified-energy identity]\label{thm:stability}
	Let $U^n$ be given.  Construct $c^n,e^n,B^n,q^n$ and $r^n$ as above, and let
	$(U^{n+1},r^{n+1})$ be produced by \eqref{eq:sav-discrete}.
	Then the exact identity
	\begin{align}
		&
		\Etilde(U^{n+1},r^{n+1})
		-
		\Etilde(U^n,r^n)                                      \notag\\
		&\quad
		+
		\frac12
		\sum_{a=1}^K
		\ip{u_a^{n+1}-u_a^n}
		{L_h(u_a^{n+1}-u_a^n)}                              \notag\\
		&\quad
		+
		(r^{n+1}-r^n)^2
		+
		\tau
		\sum_{a=1}^K
		\ip{\mu_a^{n+1}}{M_h\mu_a^{n+1}}
		=0
		\label{eq:energy-identity}
	\end{align}
	holds.  In particular,
	\[
	\Etilde(U^{n+1},r^{n+1})
	\le
	\Etilde(U^n,r^n)
	\qquad\text{for every }\tau>0.
	\]
	Thus the SAV--FFT scheme is unconditionally stable with respect to the
	modified energy $\Etilde$.
	The identity is exact at the fully discrete level.
\end{theorem}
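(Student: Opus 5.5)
The identity will follow from the standard SAV energy argument: test the three equations of \eqref{eq:sav-discrete} against suitable quantities and use the polarization identity
\[
2\ip{x-y}{Lx}=\ip{x}{Lx}-\ip{y}{Ly}+\ip{x-y}{L(x-y)}
\]
for a self-adjoint operator $L$. The only structural facts needed are that $M_h$ and $L_h$ are self-adjoint positive definite with respect to the discrete inner product. This holds because both are Fourier multipliers with the real positive symbols $1+|k|^2$ and $\varepsilon^2|k|^2+\alpha$.

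\textbf{Step 1.} Take the discrete inner product of \eqref{eq:sav-discrete-a} with $\tau\mu_a^{n+1}$ and sum over $a$. This gives
\[
\sum_a\ip{u_a^{n+1}-u_a^n}{\mu_a^{n+1}}=-\tau\sum_a\ip{\mu_a^{n+1}}{M_h\mu_a^{n+1}}.
\]

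\textbf{Step 2.} Substitute \eqref{eq:sav-discrete-b} into the left-hand side, which splits it into two pieces. The first piece is $\sum_a\ip{u_a^{n+1}-u_a^n}{L_hu_a^{n+1}}$. By polarization with $L=L_h$, it equals half the change of $\sum_a\ip{u_a}{L_hu_a}$ plus $\frac12\sum_a\ip{u_a^{n+1}-u_a^n}{L_h(u_a^{n+1}-u_a^n)}$. The second piece is $r^{n+1}\sum_a\ip{q_a^n}{u_a^{n+1}-u_a^n}$. Multiplying \eqref{eq:sav-discrete-c} by $2\tau r^{n+1}$ shows this second piece equals $2r^{n+1}(r^{n+1}-r^n)$. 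The scalar polarization identity then rewrites it as $(r^{n+1})^2-(r^n)^2+(r^{n+1}-r^n)^2$.

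\textbf{Step 3.} Combine Steps 1 and 2. Recognizing $\Etilde(U^{n+1},r^{n+1})-\Etilde(U^n,r^n)$ in the collected terms yields exactly \eqref{eq:energy-identity}. The remaining terms are the quadratic form in $L_h$, the term $(r^{n+1}-r^n)^2$, and $\tau\sum_a\ip{\mu_a^{n+1}}{M_h\mu_a^{n+1}}$, and all are nonnegative by positive definiteness. This gives the monotonicity for every $\tau>0$. No smallness of $\tau$ is used anywhere, and $q^n$ enters only linearly as explicit data, so the nonlinear softmax/region-average structure plays no role.

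\textbf{Main obstacle.} The computation is routine; the care lies in well-posedness and consistency. First, the discrete system must actually have a solution, meaning the denominator $1+\frac12\sum_a\ip{q_a^n}{t_a^n}$ in the update formula for $r^{n+1}$ must be nonzero. I would verify that it is in fact at least $1$. Indeed, $t_a^n=\tau A^{-1}M_hq_a^n$ with $A=I+\tau M_hL_h$. All these operators are simultaneously diagonal in Fourier space, so $A^{-1}M_h$ has the nonnegative symbol $(1+|k|^2)/(1+\tau(1+|k|^2)(\varepsilon^2|k|^2+\alpha))$. Hence $\ip{q_a^n}{t_a^n}\ge0$, and the solution $(U^{n+1},r^{n+1})$ exists uniquely. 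Second, the same discrete inner product must be used in \eqref{eq:sav-discrete-c} as in the energy $\Etilde$, so that the cancellation in Step 2 is exact rather than approximate.
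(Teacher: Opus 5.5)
Your proposal is correct and follows the paper's proof essentially step for step. You test \eqref{eq:sav-discrete-a} with $\tau\mu_a^{n+1}$, polarize the $L_h$ term, multiply \eqref{eq:sav-discrete-c} by $2\tau r^{n+1}$ to obtain $(r^{n+1})^2-(r^n)^2+(r^{n+1}-r^n)^2$, and collect terms; your additional check that $1+\frac12\sum_a\ip{q_a^n}{t_a^n}\ge 1$, so that the scheme is uniquely solvable for every $\tau>0$, is also correct and covers a point the paper simply presupposes.
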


\begin{proof}
	First take the discrete inner product of the first equation in
	\eqref{eq:sav-discrete-a} with
	$\tau\mu_a^{n+1}$.  This gives
	\begin{equation}\label{eq:proof-1}
		\ip{u_a^{n+1}-u_a^n}{\mu_a^{n+1}}
		=
		-\tau\ip{M_h\mu_a^{n+1}}{\mu_a^{n+1}}.
	\end{equation}
	Next, from
	\[
	\mu_a^{n+1}=L_hu_a^{n+1}+r^{n+1}q_a^n,
	\]
	we obtain
	\begin{align}
		\ip{\mu_a^{n+1}}{u_a^{n+1}-u_a^n}
		&=
		\ip{L_hu_a^{n+1}}{u_a^{n+1}-u_a^n}
		+
		r^{n+1}\ip{q_a^n}{u_a^{n+1}-u_a^n}. \label{eq:proof-2}
	\end{align}
	Since $L_h$ is self-adjoint,
	\[
	\ip{L_hx}{x-y}
	=
	\frac12
	\left(
	\ip{x}{L_hx}
	-\ip{y}{L_hy}
	+\ip{x-y}{L_h(x-y)}
	\right).
	\]
	Applying this with $x=u_a^{n+1}$ and $y=u_a^n$ yields
	\begin{align}
		\ip{L_hu_a^{n+1}}{u_a^{n+1}-u_a^n}
		=
		\frac12
		\Bigl(
		\ip{u_a^{n+1}}{L_hu_a^{n+1}}
		-\ip{u_a^n}{L_hu_a^n}
		+\ip{u_a^{n+1}-u_a^n}{L_h(u_a^{n+1}-u_a^n)}
		\Bigr).
		\label{eq:proof-3}
	\end{align}
	
	We now convert the $r$-coupling into a difference of squares.  Multiplying
	the third equation in \eqref{eq:sav-discrete-c} by $2r^{n+1}$ and using
	\[
	2x(x-y)=x^2-y^2+(x-y)^2
	\]
	gives
	\begin{equation}\label{eq:proof-r}
		(r^{n+1})^2-(r^n)^2+(r^{n+1}-r^n)^2
		=
		r^{n+1}
		\sum_{a=1}^K
		\ip{q_a^n}{u_a^{n+1}-u_a^n}.
	\end{equation}
	
	Because the inner product is symmetric, the left side of
	\eqref{eq:proof-1} equals the left side of \eqref{eq:proof-2}.  Summing
	\eqref{eq:proof-1}--\eqref{eq:proof-3} over $a$ and inserting
	\eqref{eq:proof-r}, we get
	\begin{align*}
		&
		-\tau
		\sum_{a=1}^K
		\ip{M_h\mu_a^{n+1}}{\mu_a^{n+1}}
		\\
		&=
		\frac12\sum_{a=1}^K
		\Bigl(
		\ip{u_a^{n+1}}{L_hu_a^{n+1}}
		-\ip{u_a^n}{L_hu_a^n}
		+\ip{u_a^{n+1}-u_a^n}{L_h(u_a^{n+1}-u_a^n)}
		\Bigr)
		\\
		&\quad
		+
		(r^{n+1})^2-(r^n)^2+(r^{n+1}-r^n)^2 .
	\end{align*}
	Moving all terms to the left side gives \eqref{eq:energy-identity}.
	Since $L_h$ and $M_h$ are self-adjoint positive definite, the last three
	terms in \eqref{eq:energy-identity} are nonnegative.  Therefore
	\[
	\Etilde(U^{n+1},r^{n+1})
	\le
	\Etilde(U^n,r^n)
	\]
	for all $\tau>0$.
	
\end{proof}
\subsection{Numerical Verification of Energy Stability}
To illustrate the stability of the SAV scheme \eqref{eq:sav-discrete}, we apply it to real bone images, as shown in Fig. \ref{fig:seg-four-inline}. We conduct a series of experiments using varying time steps, keeping the initial level set position the same and performing 1000 iterations to ensure reliable results. The experiments show that accurate image segmentation outcomes can be achieved with different time steps when using the SAV scheme. With smaller time steps, the energy decay rate during the initial evolution process is relatively slow (Fig. \eqref{fig:mod-energy-dt001}, Fig. \eqref{fig:mod-energy-dt01}). Moreover, as the time steps increase, the number of iterations required for the energy to stabilize significantly decreases (Fig. \eqref{fig:mod-energy-dt1}, Fig. \eqref{fig:mod-energy-dt10000}), indicating potential computational efficiency gains at larger time steps. Furthermore, the modified energy consistently declines over iterations, as demonstrated in Fig. \ref{fig:mod-energy-four-22}, confirming the robustness and effectiveness of the SAV scheme under various conditions.

\begin{figure}[htbp]
	\centering
	\begin{subfigure}[t]{0.21\textwidth}
		\centering
		\includegraphics[
		width=\textwidth,
		height=0.20\textheight,
		keepaspectratio
		]{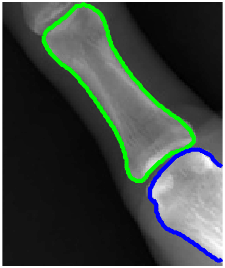}
		\caption{$\tau=0.01$}
		\label{fig:seg-dt001}
	\end{subfigure}
	\hfill
	\begin{subfigure}[t]{0.21\textwidth}
		\centering
		\includegraphics[
		width=\textwidth,
		height=0.20\textheight,
		keepaspectratio
		]{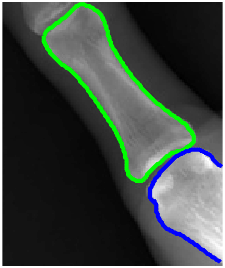}
		\caption{$\tau=0.1$}
		\label{fig:seg-dt01}
	\end{subfigure}
	\hfill
	\begin{subfigure}[t]{0.21\textwidth}
		\centering
		\includegraphics[
		width=\textwidth,
		height=0.20\textheight,
		keepaspectratio
		]{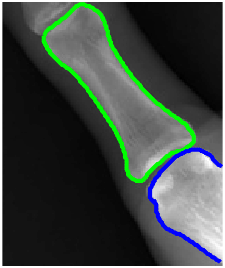}
		\caption{$\tau=1$}
		\label{fig:seg-dt1}
	\end{subfigure}
	\hfill
	\begin{subfigure}[t]{0.21\textwidth}
		\centering
		\includegraphics[
		width=\textwidth,
		height=0.20\textheight,
		keepaspectratio
		]{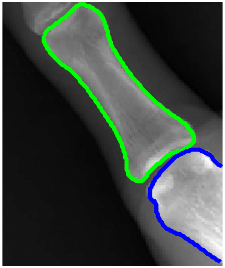}
		\caption{$\tau=10000$}
		\label{fig:seg-dt10000}
	\end{subfigure}
	\caption{Segmentation results obtained with different time steps.}
	\label{fig:seg-four-inline}
\end{figure}

\begin{figure}[htbp]
	\centering
	\begin{subfigure}[t]{0.42\textwidth}
		\centering
		\includegraphics[
		width=\textwidth,
		height=0.22\textheight,
		keepaspectratio
		]{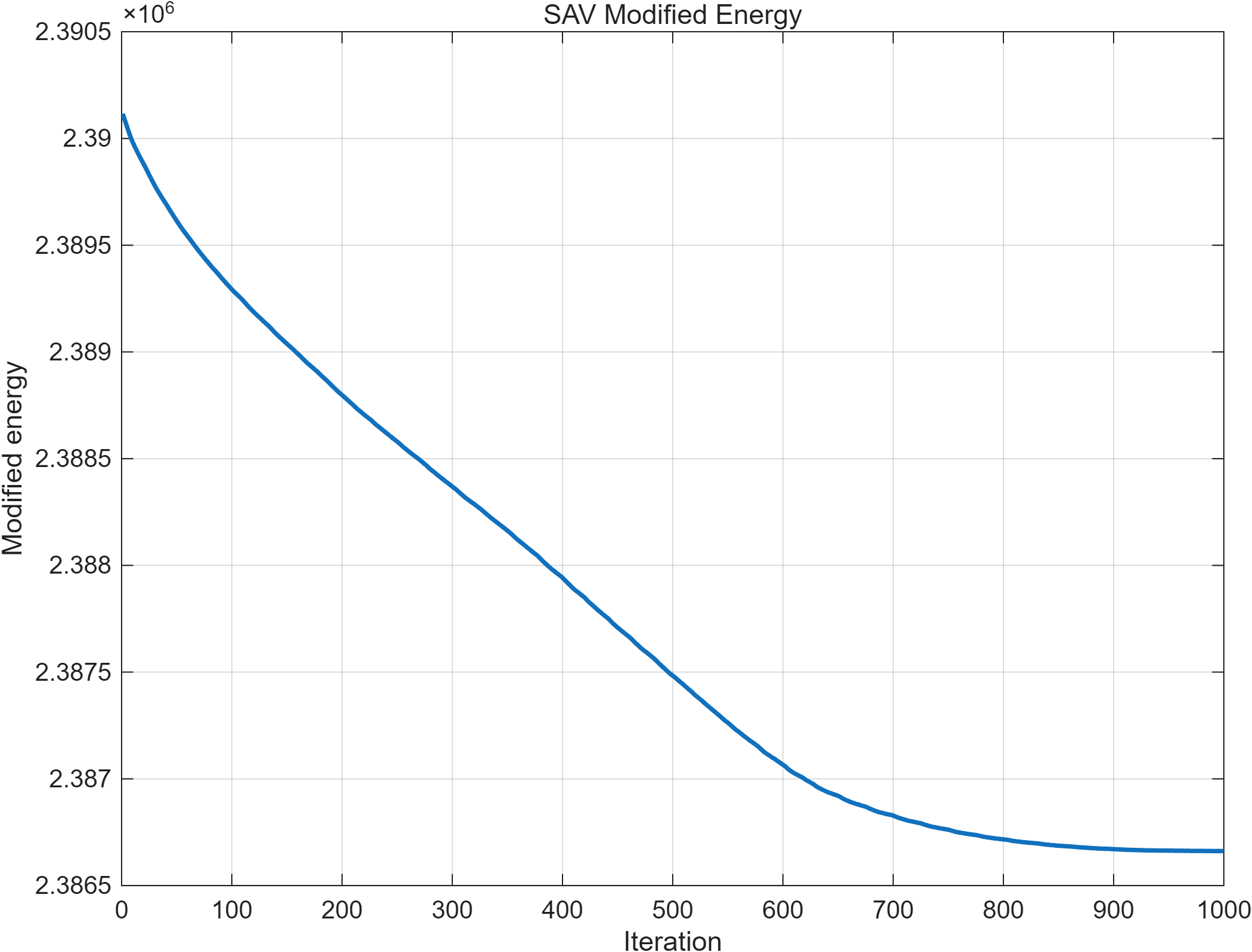}
		\caption{$\tau=0.01$}
		\label{fig:mod-energy-dt001}
	\end{subfigure}
	\hspace{0.04\textwidth}
	\begin{subfigure}[t]{0.42\textwidth}
		\centering
		\includegraphics[
		width=\textwidth,
		height=0.22\textheight,
		keepaspectratio
		]{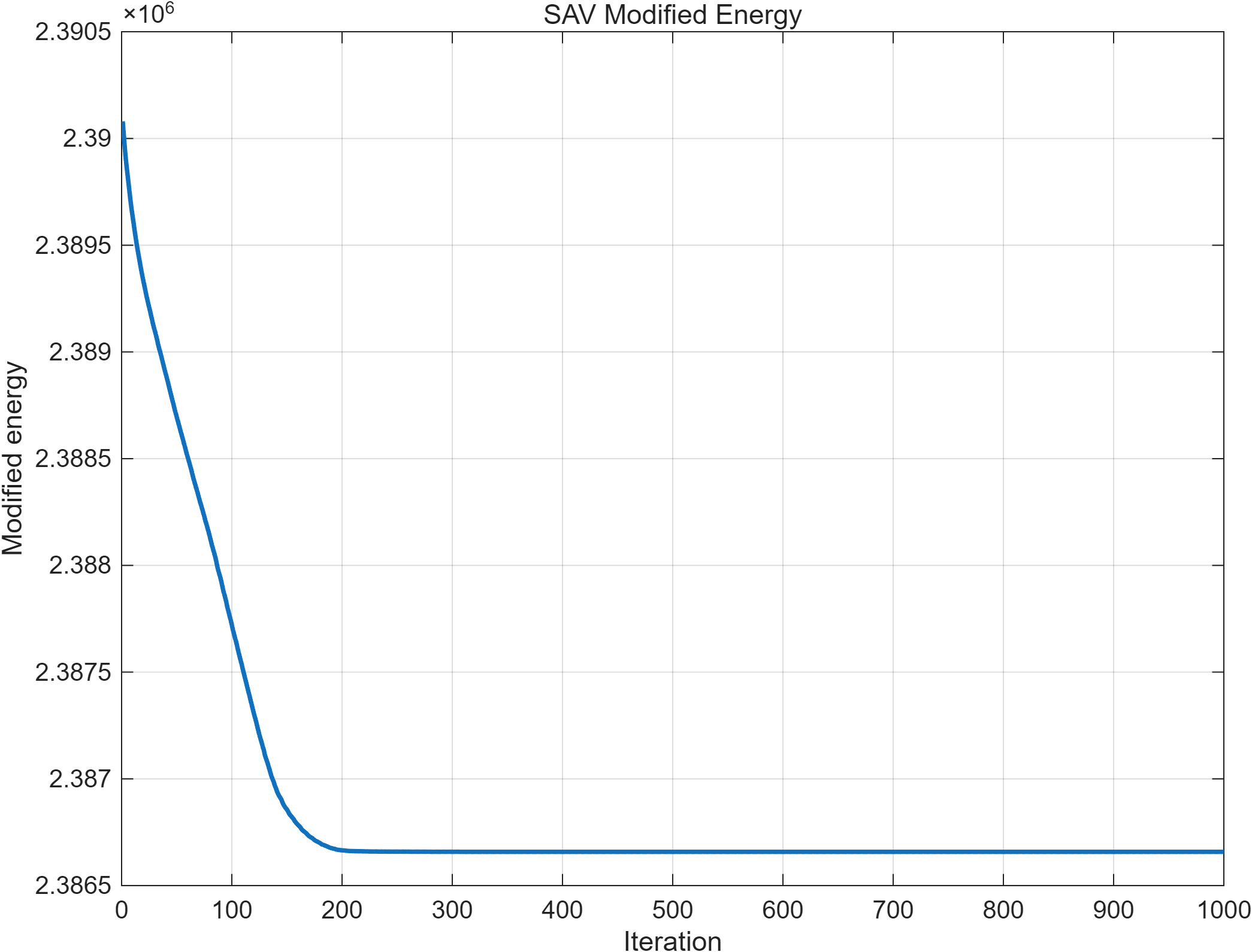}
		\caption{$\tau=0.1$}
		\label{fig:mod-energy-dt01}
	\end{subfigure}
	
	\vspace{0.12cm}
	
	\begin{subfigure}[t]{0.42\textwidth}
		\centering
		\includegraphics[
		width=\textwidth,
		height=0.22\textheight,
		keepaspectratio
		]{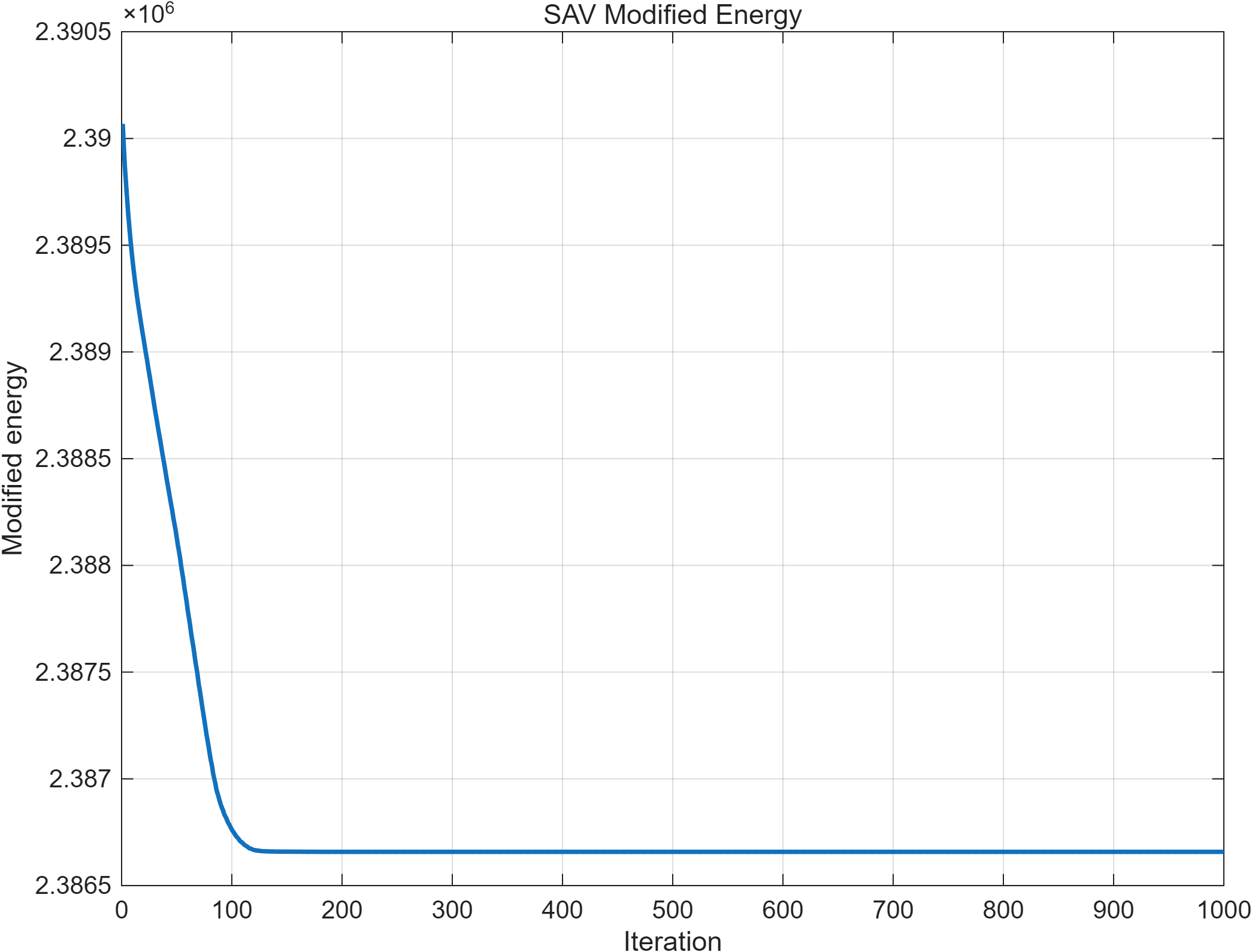}
		\caption{$\tau=1$}
		\label{fig:mod-energy-dt1}
	\end{subfigure}
	\hspace{0.04\textwidth}
	\begin{subfigure}[t]{0.42\textwidth}
		\centering
		\includegraphics[
		width=\textwidth,
		height=0.22\textheight,
		keepaspectratio
		]{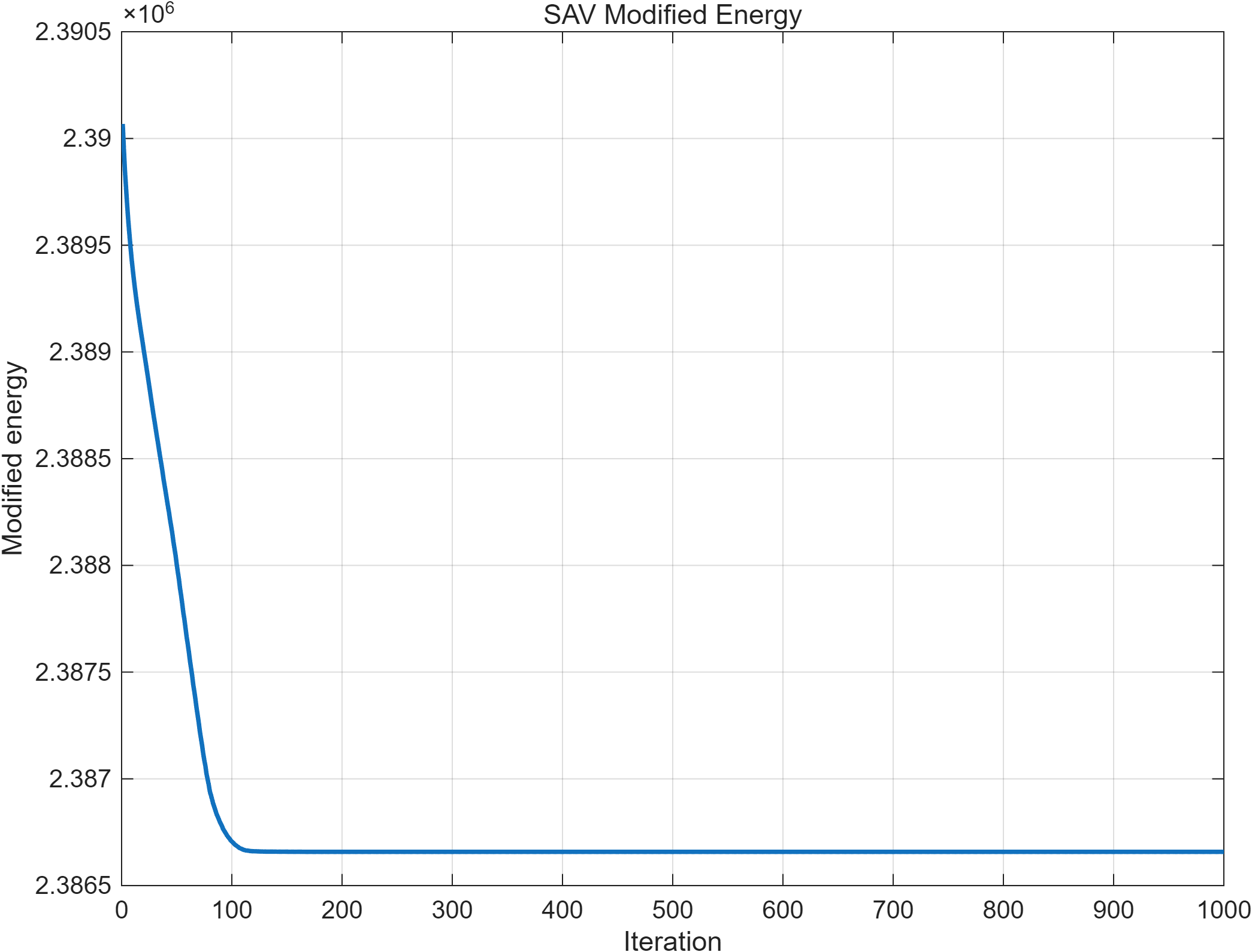}
		\caption{$\tau=10000$}
		\label{fig:mod-energy-dt10000}
	\end{subfigure}
	\caption{Modified energy curves obtained with different time steps.}
	\label{fig:mod-energy-four-22}
\end{figure}

This stabilized SAV splitting strategy addresses the shortcomings of the two methods. The stabilized splitting plays a particularly important role in the large-time-step implementation. By adding and subtracting the stabilization term $\alpha u_a$, the stiff part
\[
-\varepsilon^2\Delta u_a+\alpha u_a
\]
is incorporated into the implicit linear operator
\[
\mathcal L=-\varepsilon^2\Delta+\alpha I .
\]
Consequently, the main operator
\[
A=I+\tau\,\mathcal M\mathcal L
\]
remains positive and well-conditioned in the Fourier implementation when a suitable stabilization parameter $\alpha$ is used. This treatment significantly relaxes the time-step restriction caused by the fourth-order Cahn--Hilliard dynamics. In particular, under appropriate choices of the stabilization parameter and the SAV shift constant $C_0$, the proposed scheme can remain stable even for very large time steps, for example $\tau=10^4$ in our numerical tests.

Meanwhile, the complicated nonlinear potential and the softmax-based data term are absorbed into the SAV nonlinear part and treated explicitly through the scalar auxiliary variable. Therefore, the method avoids solving nonlinear systems at each time step, while still preserving a modified discrete energy dissipation law. As a result, the proposed scheme is linear, robust for large time steps, and computationally efficient.


\section{Experimental results}
\label{sec:experiments}

In this section, we present a series of numerical experiments to evaluate the effectiveness and robustness of the proposed model. 
We first compare the proposed method with three representative variational baselines:
the CV model~\cite{chanActiveContoursEdges2001}, the
RSF model~\cite{chunmingliMinimizationRegionScalableFitting2008a},
the CP-ICTM model~\cite{dengConnectedComponentPreservingImage2025}. 
These comparisons are used to demonstrate the advantage of the proposed model in low-contrast and weak-boundary segmentation, especially for adjacent homogeneous structures with similar intensity distributions. 
To further investigate the role of the Cahn--Hilliard regularization in interface evolution, we also compare the proposed method with another fourth-order phase-field regularization method, namely the MBE regularization. 
In addition, experiments on the SKI10 dataset~\cite{heimannSegmentationKneeImages2010} are conducted to compare the proposed method with mainstream deep learning models, including U-Net~\cite{ronnebergerUNetConvolutionalNetworks2015a}, TransUNet~\cite{chenTransUNetTransformersMake2021}, Swin-UNet~\cite{caoSwinUnetUnetlikePure2021}, and MedSegDiff~\cite{wuMedSegDiffMedicalImage}. 
The results show that the proposed model achieves competitive segmentation accuracy and exhibits particular advantages in boundary stability, especially as measured by boundary-sensitive metrics such as HD95.

\subsection{The Boundary Accuracy and Stability of the Proposed Model}

We first evaluate the performance of the proposed model in segmenting low-contrast boundaries. The obtained results show that the boundaries are clearly delineated and well separated, demonstrating the accuracy and robustness of the proposed method in boundary segmentation.

As shown in Figure \ref{fig:5p1}, we evaluate the ability of the proposed model to segment a low-contrast boundary image. The original image in Figure~\eqref{fig:7p2} contains adjacent anatomical structures with weak intensity differences, where the boundary between neighboring regions is not clearly separated by image contrast alone. This makes the segmentation task challenging, especially near the contact area between the two structures. The segmentation result in Figure~\eqref{fig:7p2-segment} shows that the proposed method can accurately capture the target boundaries and separate the neighboring regions without undesirable merging. The extracted contours are coherent and well aligned with the visible object interfaces, indicating the robustness of the proposed model in low-contrast regions.

The corresponding phase-field functions are further displayed in Figures~\eqref{fig:u_12} and~\eqref{fig:u_3}. Figure~\eqref{fig:u_12} shows the three-dimensional surfaces of $u_1$ and $u_2$, while Figure~\eqref{fig:u_3} presents the surface of $u_3$. These phase-field variables form clear plateau regions inside different segmented phases and exhibit smooth transitions near the interfaces. This behavior illustrates the effect of the Cahn--Hilliard regularization in promoting phase separation and stable interfacial evolution. Meanwhile, the softmax-based multiphase representation allows different regions to compete with each other in a continuous manner. These results demonstrate that the proposed model can produce accurate and stable segmentation results for weak-boundary and low-contrast images.

\begin{figure}[htbp]
	\centering
	\begin{subfigure}[t]{0.42\textwidth}
		\centering
		\includegraphics[
		width=\textwidth,
		height=0.22\textheight,
		keepaspectratio
		]{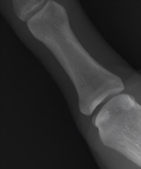}
		\caption{The initial image}
		\label{fig:7p2}
	\end{subfigure}
	\hspace{0.04\textwidth}
	\begin{subfigure}[t]{0.42\textwidth}
		\centering
		\includegraphics[
		width=\textwidth,
		height=0.22\textheight,
		keepaspectratio
		]{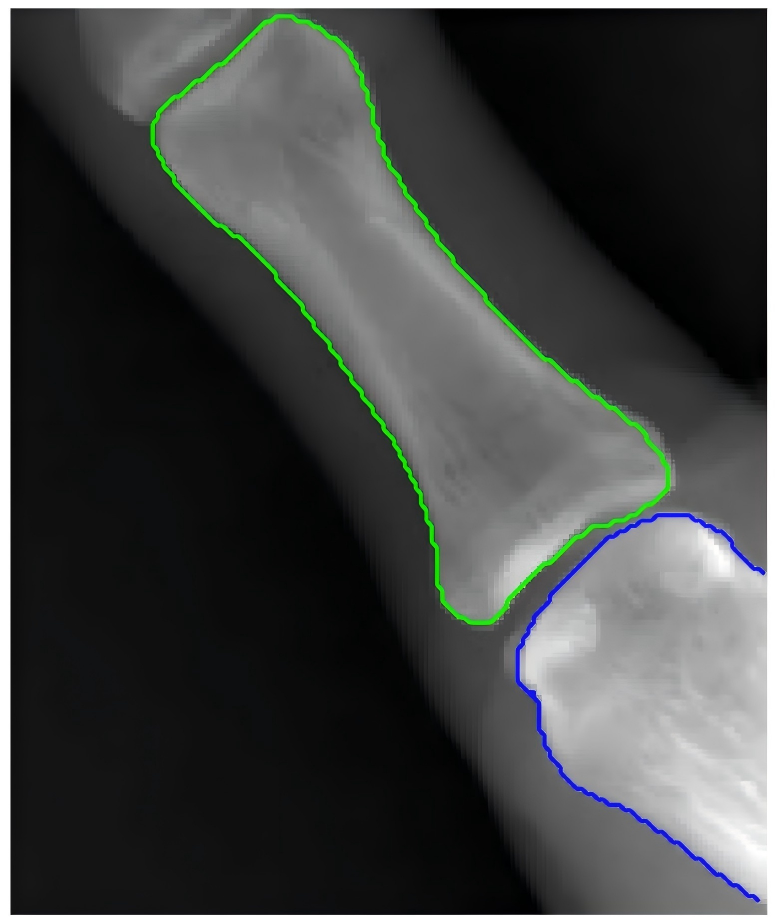}
		\caption{Segmentation result}
		\label{fig:7p2-segment}
	\end{subfigure}
	
	\vspace{0.12cm}
	
	\begin{subfigure}[t]{0.42\textwidth}
		\centering
		\includegraphics[
		width=\textwidth,
		height=0.22\textheight,
		keepaspectratio
		]{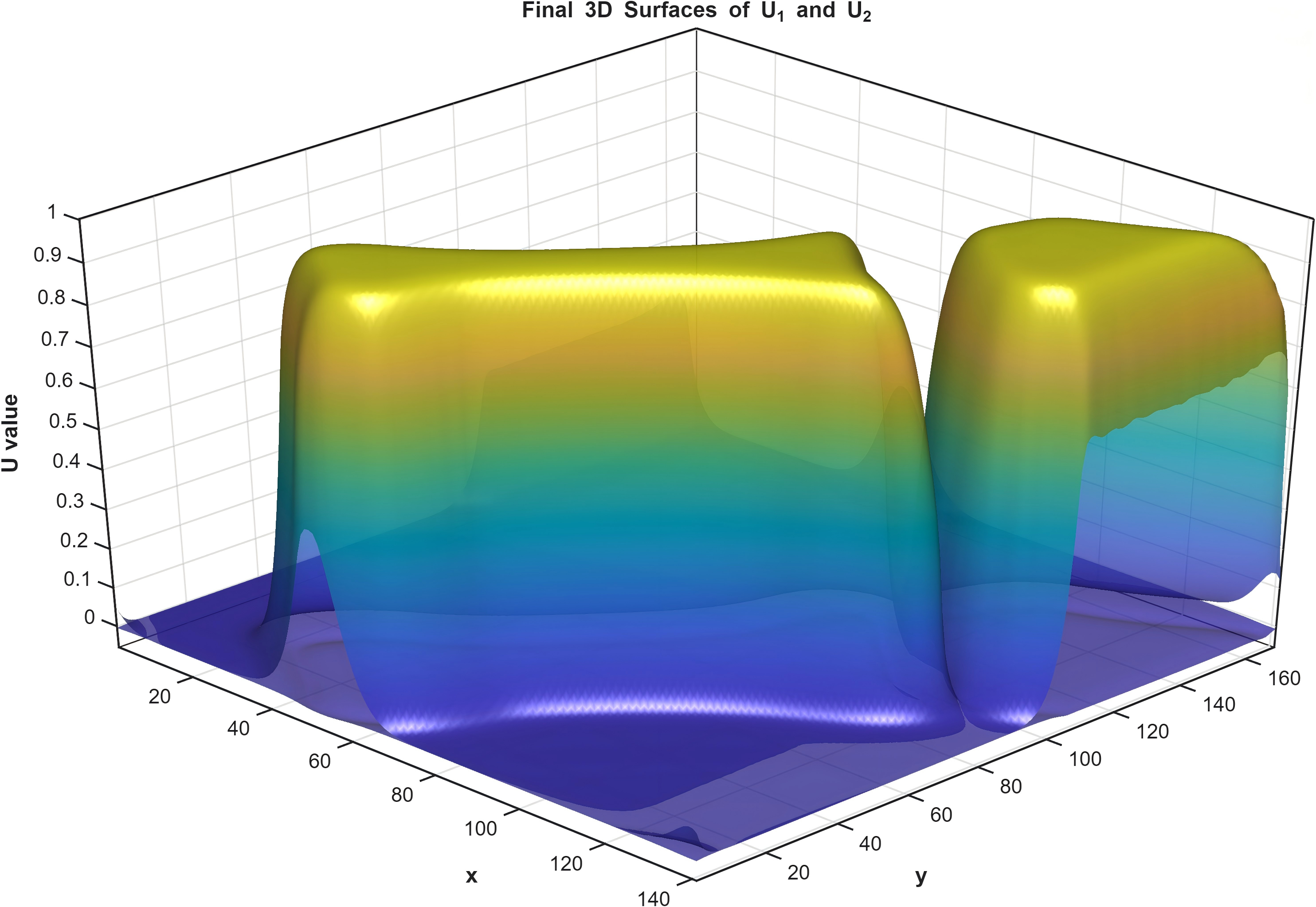}
		\caption{$u_1,u_2$}
		\label{fig:u_12}
	\end{subfigure}
	\hspace{0.04\textwidth}
	\begin{subfigure}[t]{0.42\textwidth}
		\centering
		\includegraphics[
		width=\textwidth,
		height=0.22\textheight,
		keepaspectratio
		]{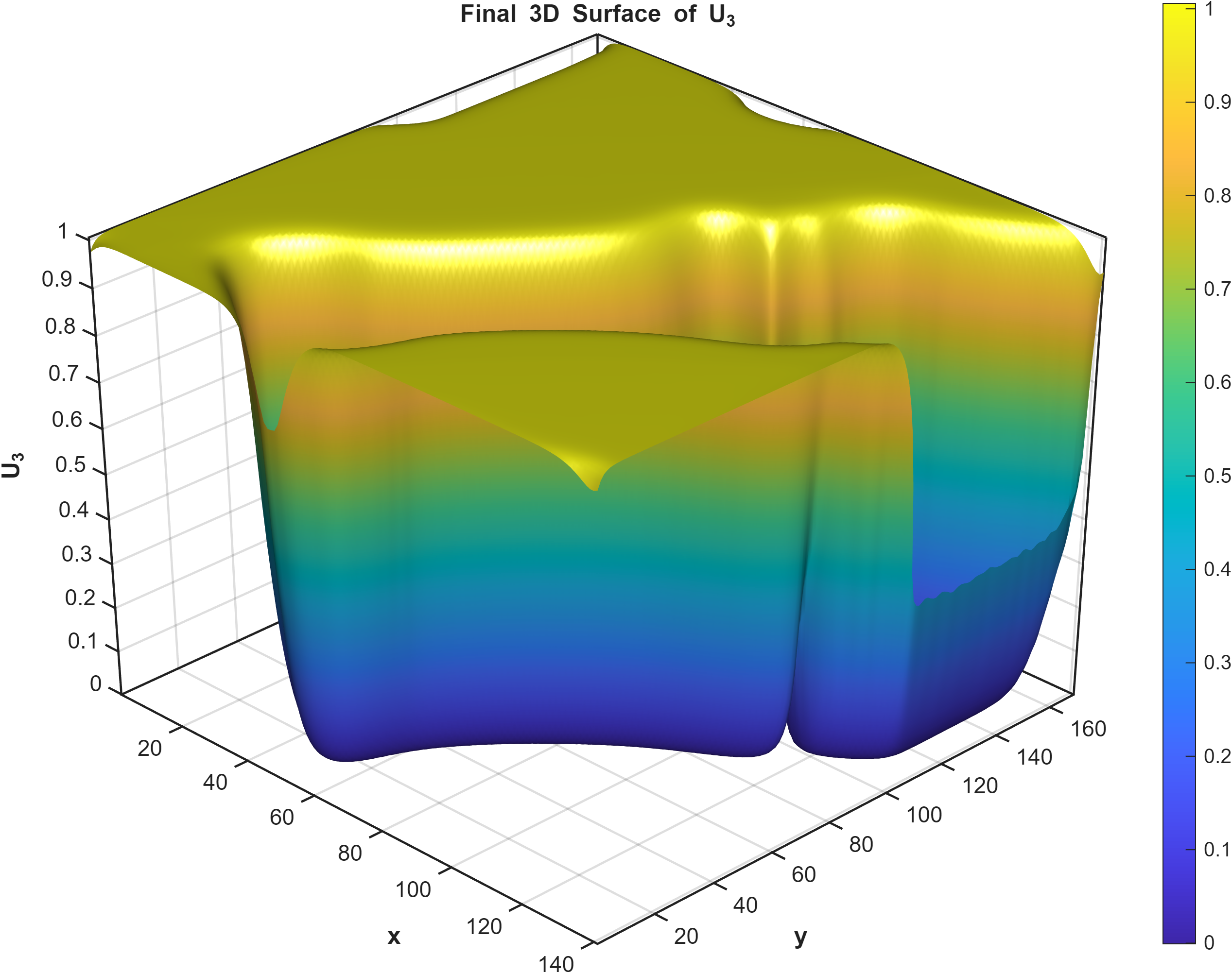}
		\caption{$u_3$}
		\label{fig:u_3}
	\end{subfigure}
	\caption{Segmentation of a low-contrast boundary image and the corresponding phase-field representations.}
	\label{fig:5p1}
\end{figure}

In summary, the proposed model achieves stable and accurate segmentation in low-contrast regions by clearly separating adjacent structures and preserving weak boundary details, demonstrating its effectiveness in both boundary sensitivity and interfacial stability.

\subsection{Ablation and parameter stability analysis}

We next examine the contribution of the Cahn--Hilliard regularization and the
sensitivity of the proposed model to the parameters \(\varepsilon\) and
\(\beta\). All experiments in this subsection use the same image and
initialization. When one parameter is varied, the remaining parameters are kept
fixed. The stopping tolerance for the relative change of the phase-field
variables is \(10^{-5}\), and the maximum number of iterations is \(1500\).

Figure~\ref{fig:regularization-ablation} compares the complete model with its
unregularized counterpart, in which the Cahn--Hilliard energy is removed and
only the softmax-based data fitting term is retained. Without regularization,
the data term identifies the main regions but produces a less coherent
partition near the low-contrast contact area. In particular, the contours are
locally displaced and the narrow separation between the adjacent structures is
not recovered reliably. By contrast, the complete model preserves smooth
boundaries while maintaining a clear separation near the weak interface. This
comparison indicates that the Cahn--Hilliard term contributes more than contour
smoothing: it stabilizes the phase transition when the local data force is
insufficient to determine the nearby interfaces.

\begin{figure}[htbp]
	\centering
	\begin{subfigure}[t]{0.42\textwidth}
		\centering
		\includegraphics[
		width=\textwidth,
		height=0.25\textheight,
		keepaspectratio
		]{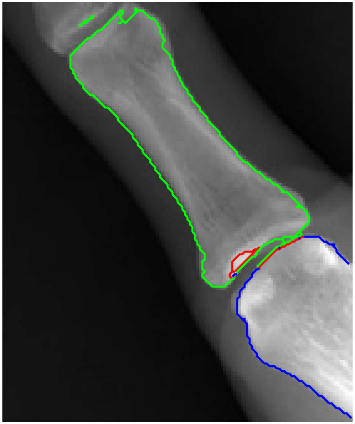}
		\caption{Without Cahn--Hilliard regularization}
		\label{fig:ablation-without-regularization}
	\end{subfigure}
	\hspace{0.05\textwidth}
	\begin{subfigure}[t]{0.42\textwidth}
		\centering
		\includegraphics[
		width=\textwidth,
		height=0.25\textheight,
		keepaspectratio
		]{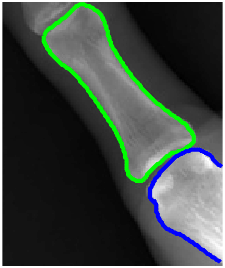}
		\caption{Complete model}
		\label{fig:ablation-complete-model}
	\end{subfigure}
	\caption{Ablation study of the Cahn--Hilliard regularization on a
	low-contrast image.}
	\label{fig:regularization-ablation}
\end{figure}

The parameter \(\varepsilon\) controls the contribution of the gradient term
and the characteristic scale of the diffuse interface. As shown in
Figure~\ref{fig:epsilon-stability}, the principal structures and their narrow
separation are consistently recovered for \(\varepsilon=0.5,1,\) and \(2\).
Table~\ref{tab:epsilon-stability} further shows that all three cases satisfy the
prescribed stopping tolerance, with \(\varepsilon=2\) requiring the fewest
iterations. For \(\varepsilon=4\) and \(8\), the principal segmentation remains
recognizable, but the iterations reach the prescribed upper limit without
satisfying the stopping criterion. The corresponding relative changes remain
of order \(10^{-2}\), indicating that an excessively large \(\varepsilon\)
can make the evolution more difficult to resolve and slow convergence under
the fixed numerical settings.

\begin{figure}[htbp]
	\centering
	\begin{subfigure}[t]{0.18\textwidth}
		\centering
		\includegraphics[width=\textwidth]{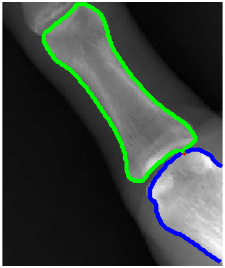}
		\caption{\(\varepsilon=0.5\)}
	\end{subfigure}
	\hfill
	\begin{subfigure}[t]{0.18\textwidth}
		\centering
		\includegraphics[width=\textwidth]{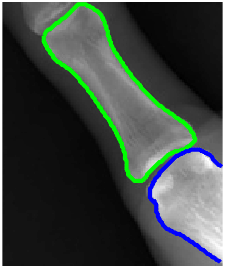}
		\caption{\(\varepsilon=1\)}
	\end{subfigure}
	\hfill
	\begin{subfigure}[t]{0.18\textwidth}
		\centering
		\includegraphics[width=\textwidth]{pictures/epsilon_test_results/segmentation_epsilon_2.png}
		\caption{\(\varepsilon=2\)}
	\end{subfigure}
	\hfill
	\begin{subfigure}[t]{0.18\textwidth}
		\centering
		\includegraphics[width=\textwidth]{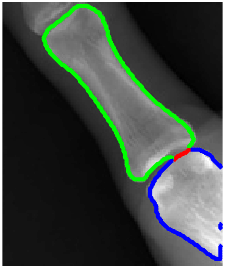}
		\caption{\(\varepsilon=4\)}
	\end{subfigure}
	\hfill
	\begin{subfigure}[t]{0.18\textwidth}
		\centering
		\includegraphics[width=\textwidth]{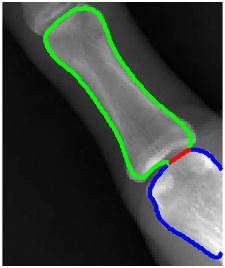}
		\caption{\(\varepsilon=8\)}
	\end{subfigure}
	\caption{Segmentation results for different values of \(\varepsilon\).}
	\label{fig:epsilon-stability}
\end{figure}

\begin{table}[htbp]
	\centering
	\caption{Convergence results for different values of \(\varepsilon\).}
	\label{tab:epsilon-stability}
	\begin{tabular}{ccc}
		\hline
		\(\varepsilon\) & Iterations & Final relative change \\
		\hline
		\(0.5\) & \(1296\) & \(9.9381\times10^{-6}\) \\
		\(1\)   & \(576\)  & \(9.9218\times10^{-6}\) \\
		\(2\)   & \(352\)  & \(9.9196\times10^{-6}\) \\
		\(4\)   & \(1500\) & \(1.5660\times10^{-2}\) \\
		\(8\)   & \(1500\) & \(1.5428\times10^{-2}\) \\
		\hline
	\end{tabular}
\end{table}

The softmax parameter \(\beta\) controls the sharpness of the phase assignment.
Figure~\ref{fig:beta-stability} shows that the segmentation boundaries are
visually consistent for \(\beta=2,5,\) and \(10\), demonstrating that the model
is not sensitive to moderate changes in the softmax sharpness. According to
Table~\ref{tab:beta-stability}, these three choices also satisfy the stopping
tolerance, and \(\beta=5\) gives the fastest convergence in this experiment.
When \(\beta\) is increased to \(20\) or \(40\), the softmax assignment becomes
considerably sharper. Although the main segmentation geometry is retained, the
iterations do not reach the stopping tolerance within \(1500\) steps. Thus, a
very large \(\beta\) can amplify the nonlinearity of the data term and reduce
the numerical efficiency of the evolution under the present discretization.

\begin{figure}[htbp]
	\centering
	\begin{subfigure}[t]{0.18\textwidth}
		\centering
		\includegraphics[width=\textwidth]{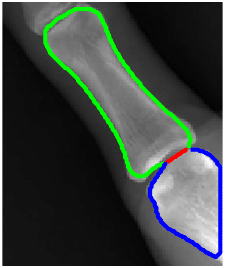}
		\caption{\(\beta=2\)}
	\end{subfigure}
	\hfill
	\begin{subfigure}[t]{0.18\textwidth}
		\centering
		\includegraphics[width=\textwidth]{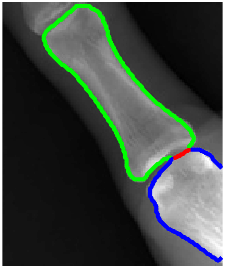}
		\caption{\(\beta=5\)}
	\end{subfigure}
	\hfill
	\begin{subfigure}[t]{0.18\textwidth}
		\centering
		\includegraphics[width=\textwidth]{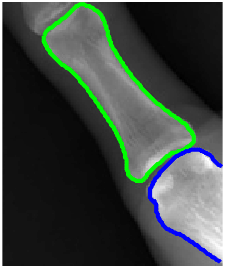}
		\caption{\(\beta=10\)}
	\end{subfigure}
	\hfill
	\begin{subfigure}[t]{0.18\textwidth}
		\centering
		\includegraphics[width=\textwidth]{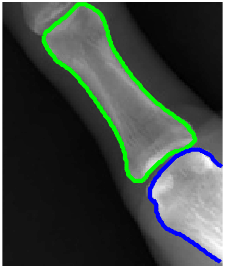}
		\caption{\(\beta=20\)}
	\end{subfigure}
	\hfill
	\begin{subfigure}[t]{0.18\textwidth}
		\centering
		\includegraphics[width=\textwidth]{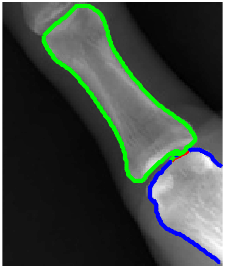}
		\caption{\(\beta=40\)}
	\end{subfigure}
	\caption{Segmentation results for different values of \(\beta\).}
	\label{fig:beta-stability}
\end{figure}

\begin{table}[htbp]
	\centering
	\caption{Convergence results for different values of \(\beta\).}
	\label{tab:beta-stability}
	\begin{tabular}{ccc}
		\hline
		\(\beta\) & Iterations & Final relative change \\
		\hline
		\(2\)  & \(426\)  & \(9.9155\times10^{-6}\) \\
		\(5\)  & \(275\)  & \(9.9738\times10^{-6}\) \\
		\(10\) & \(352\)  & \(9.9196\times10^{-6}\) \\
		\(20\) & \(1500\) & \(3.3391\times10^{-3}\) \\
		\(40\) & \(1500\) & \(1.1037\times10^{-2}\) \\
		\hline
	\end{tabular}
\end{table}

Overall, the ablation result confirms the importance of the Cahn--Hilliard
regularization for preserving weak separation boundaries. The parameter tests
also show that the model produces consistent segmentation results over
moderate ranges of \(\varepsilon\) and \(\beta\), while excessively large
values can substantially slow the numerical convergence.

\subsection{Robustness to noise and low contrast}

Accurate segmentation in noisy images remains challenging, since small-scale perturbations may obscure weak boundary information and disturb the evolution process. To evaluate the robustness of the proposed model, we conduct experiments on low-contrast images corrupted by different types of noise, including additive noise and multiplicative noise.

As shown in Figures~\ref{fig:ch-noisy-7p2} and~\ref{fig:ch-noisy-robustness-10095}, the proposed model is tested under two noisy low-contrast settings. Compared with the clean images in Figures~\eqref{fig:ch-7P2} and~\eqref{fig:ch-noisy-clean-10095}, the image in Figure~\eqref{fig:ch-noisy-7P2} is corrupted by additive noise with intensity $0.8$, while the image in Figure~\eqref{fig:ch-noisy-input-10095} is corrupted by multiplicative noise with intensity $0.4$. These noise perturbations significantly degrade the image quality, weaken local intensity contrast, and obscure parts of the object boundaries.

Nevertheless, the proposed model still produces stable segmentation results. For the image with additive noise, Figure~\eqref{fig:ch-noisy-7P2-SEG} shows that the target bone structure is clearly extracted from the noisy background. The adjacent regions remain well separated, and the contour follows the main boundary without obvious leakage or fragmentation. This indicates that the proposed model is robust to strong additive intensity perturbations.

For the larger image with multiplicative noise, the segmentation task becomes more challenging because of the increased image size, multiple adjacent bone structures, and spatially varying noise interference. Even in this case, Figure~\eqref{fig:ch-noisy-seg-10095} shows that the proposed model can still separate different bone regions clearly. The obtained contours remain coherent and visually sharp. These results demonstrate that the proposed Cahn--Hilliard regularization effectively suppresses noise-driven oscillations while preserving stable and clear boundary separation in low-contrast images.

\begin{figure}[htbp]
	\centering
	\begin{subfigure}[t]{0.28\textwidth}
		\centering
		\includegraphics[
		width=\textwidth,
		height=0.28\textheight,
		keepaspectratio
		]{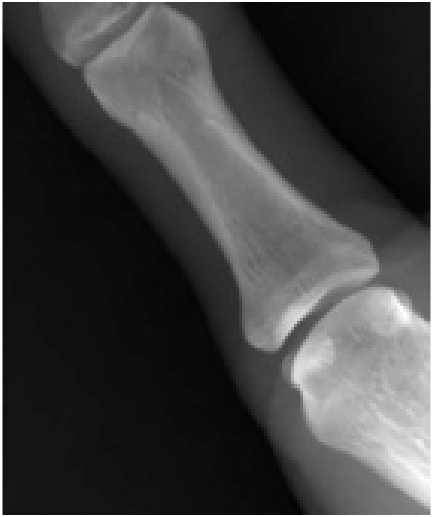}
		\caption{Clean image }
		\label{fig:ch-7P2}
	\end{subfigure}
	\hfill
	\begin{subfigure}[t]{0.28\textwidth}
		\centering
		\includegraphics[
		width=\textwidth,
		height=0.28\textheight,
		keepaspectratio
		]{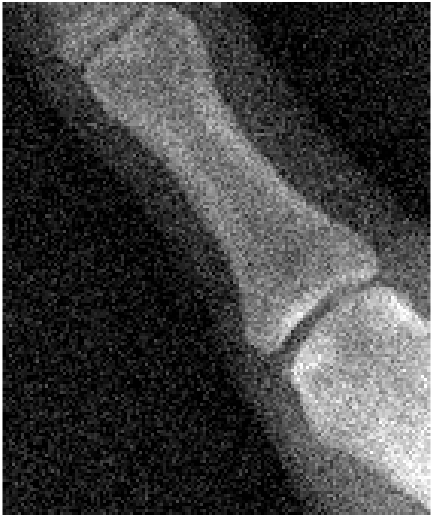}
		\caption{ Noisy image}
		\label{fig:ch-noisy-7P2}
	\end{subfigure}
	\hfill
	\begin{subfigure}[t]{0.28\textwidth}
		\centering
		\includegraphics[
		width=\textwidth,
		height=0.28\textheight,
		keepaspectratio
		]{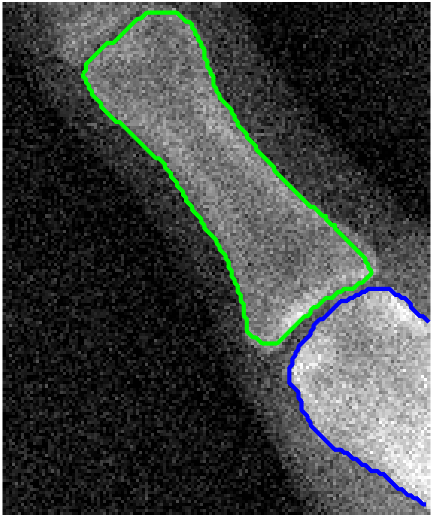}
		\caption{Segmentation result}
		\label{fig:ch-noisy-7P2-SEG}
	\end{subfigure}
	\caption{Segmentation result of the proposed  model on a noisy low-contrast image.}
	\label{fig:ch-noisy-7p2}
\end{figure}

\begin{figure}[htbp]
	\centering
	\begin{subfigure}[t]{0.28\textwidth}
		\centering
		\includegraphics[
		width=\textwidth,
		height=0.28\textheight,
		keepaspectratio
		]{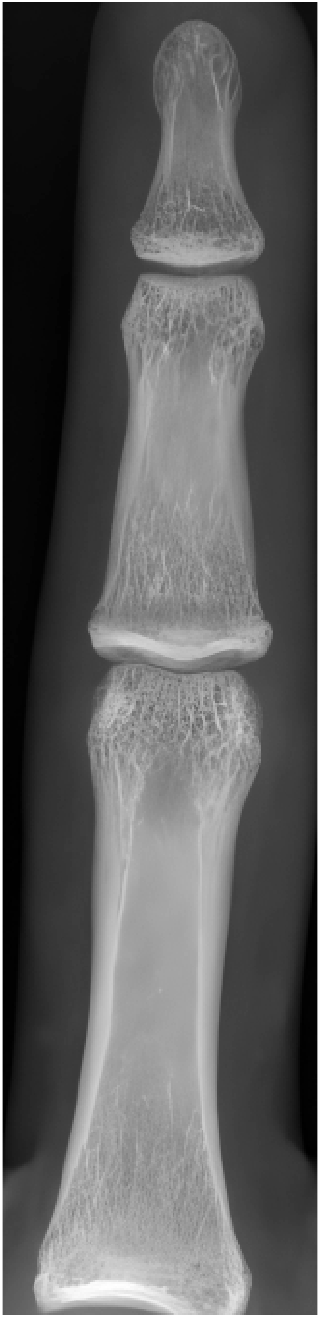}
		\caption{Clean image}
		\label{fig:ch-noisy-clean-10095}
	\end{subfigure}
	\hfill
	\begin{subfigure}[t]{0.28\textwidth}
		\centering
		\includegraphics[
		width=\textwidth,
		height=0.28\textheight,
		keepaspectratio
		]{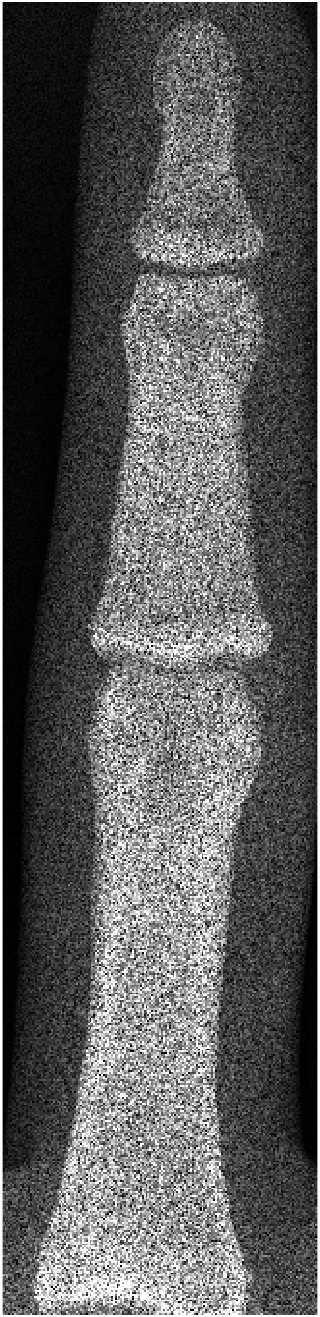}
		\caption{Noisy image}
		\label{fig:ch-noisy-input-10095}
	\end{subfigure}
	\hfill
	\begin{subfigure}[t]{0.28\textwidth}
		\centering
		\includegraphics[
		width=\textwidth,
		height=0.28\textheight,
		keepaspectratio
		]{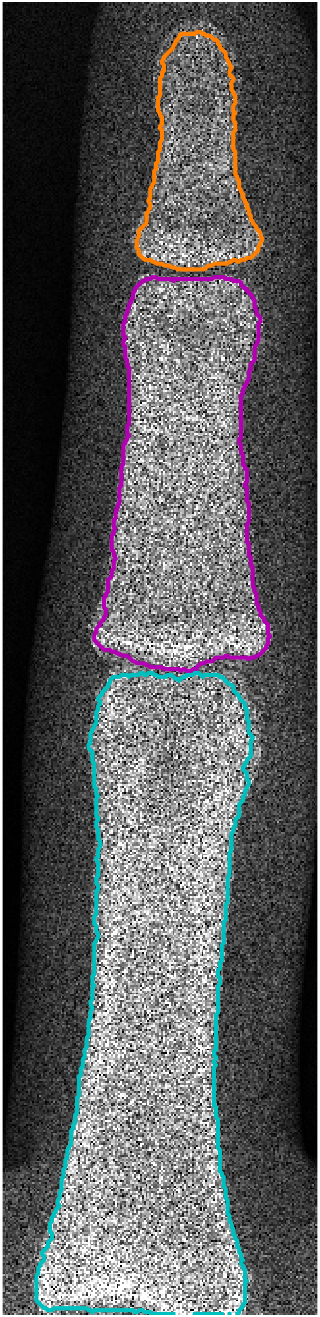}
		\caption{Segmentation result}
		\label{fig:ch-noisy-seg-10095}
	\end{subfigure}
	\caption{Segmentation result of the proposed Cahn--Hilliard model on a large-scale noisy low-contrast image.}
	\label{fig:ch-noisy-robustness-10095}
\end{figure}

\subsection{Comparison with variational and phase-field models}
Figures~\ref{fig:different-models-comparison} and~\ref{fig:different-models-comparison-2} show the segmentation results obtained by different variational and phase-field models on two low-contrast bone images used in this experiment. 
Each figure contains the original image and the segmentation results produced by the proposed method, CV~\cite{chanActiveContoursEdges2001}, RSF~\cite{chunmingliMinimizationRegionScalableFitting2008a}, CP-ICTM~\cite{dengConnectedComponentPreservingImage2025}, and MBE. 
These examples are challenging because the bone boundaries are partially weak, and the adjacent anatomical structures have similar intensity distributions. 
Therefore, an effective method should not only locate the outer bone boundary accurately, but also preserve the separation between neighboring bone regions.

As shown in Figure~\ref{fig:different-models-comparison}, the proposed method accurately extracts the target bone structure and produces a continuous boundary along the weak edge region. 
The contour remains close to the actual anatomical boundary, and the neighboring bone regions are clearly separated. 
The CV model can roughly identify the main object region, but the obtained contour is less accurate near the weak boundary and tends to deviate from the true bone edge. 
Moreover, due to the weak intensity contrast between adjacent bone structures, the CV model fails to separate the neighboring bones clearly and tends to merge them into a single region. 
The RSF model improves the local fitting ability to some extent, but boundary leakage still appears near the adjacent joint region. 
Although the local intensity information helps refine part of the contour, the RSF model still cannot fully distinguish the narrow gap between neighboring bone structures, leading to incomplete separation. 
Although CP-ICTM preserves connected components, its result is strongly affected by the prescribed topological constraint and produces an over-extended contour in the lower right region. 
The MBE-based model gives a relatively smooth result, but its boundary performance is slightly inferior to that of the proposed model. 
In contrast, the proposed model achieves a better balance between boundary smoothness and structural separation.

Figure~\ref{fig:different-models-comparison-2} further demonstrates the advantage of the proposed method on a more elongated bone structure. 
The original image contains multiple bone segments with weak transitions between neighboring regions. 
The proposed method successfully separates the upper, middle, and lower bone parts, and the extracted contours are smooth and coherent. 
In comparison, the CV and RSF models tend to merge adjacent regions or produce inaccurate contours around the joints, since their data fitting terms are insufficient to distinguish regions with similar intensities. 
CP-ICTM can maintain the connectivity of the segmented region, but the resulting boundary is relatively rigid and may fail to accurately follow local weak edges. 
The MBE model also provides a smooth segmentation, but it shows slight boundary inconsistency near the joint regions. 
The proposed method preserves the narrow gaps between adjacent structures more clearly, which is important for accurate bone segmentation.

\begin{figure}[htbp]
	\centering
	
	\begin{subfigure}[t]{0.155\textwidth}
		\centering
		\includegraphics[
		width=\textwidth,
		height=0.13\textheight,
		keepaspectratio
		]{fig/7.2.jpg}
		\caption{Original image}
		\label{fig:compare-original}
	\end{subfigure}
	\hfill
	\begin{subfigure}[t]{0.155\textwidth}
		\centering
		\includegraphics[
		width=\textwidth,
		height=0.13\textheight,
		keepaspectratio
		]{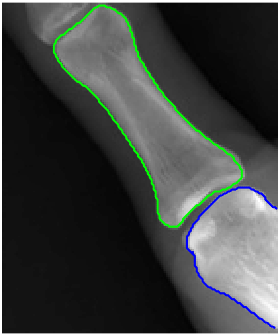}
		\caption{Ours}
		\label{fig:compare-ours}
	\end{subfigure}
	\hfill
	\begin{subfigure}[t]{0.155\textwidth}
		\centering
		\includegraphics[
		width=\textwidth,
		height=0.13\textheight,
		keepaspectratio
		]{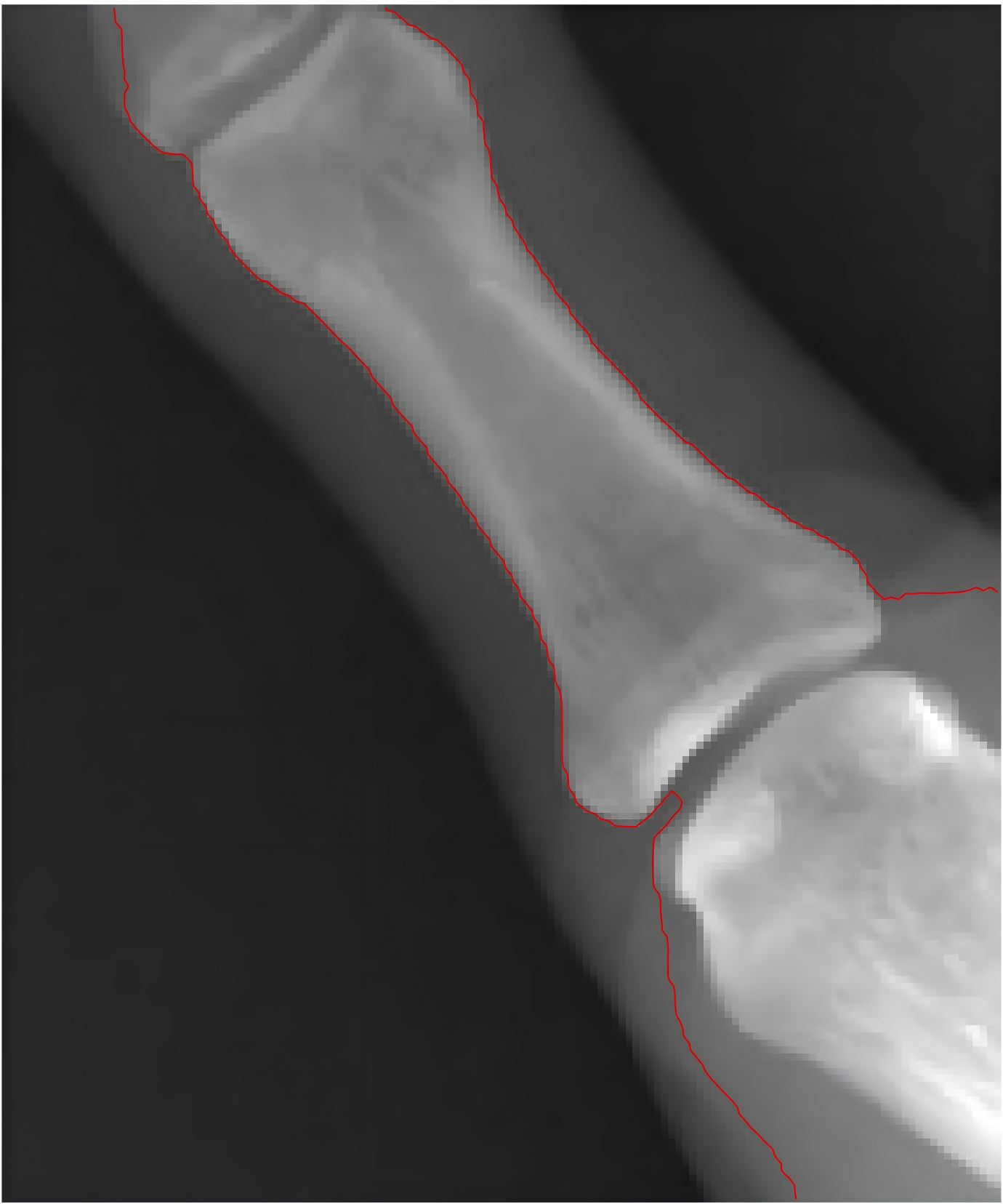}
		\caption{CV}
		\label{fig:compare-cv}
	\end{subfigure}
	\hfill
	\begin{subfigure}[t]{0.155\textwidth}
		\centering
		\includegraphics[
		width=\textwidth,
		height=0.13\textheight,
		keepaspectratio
		]{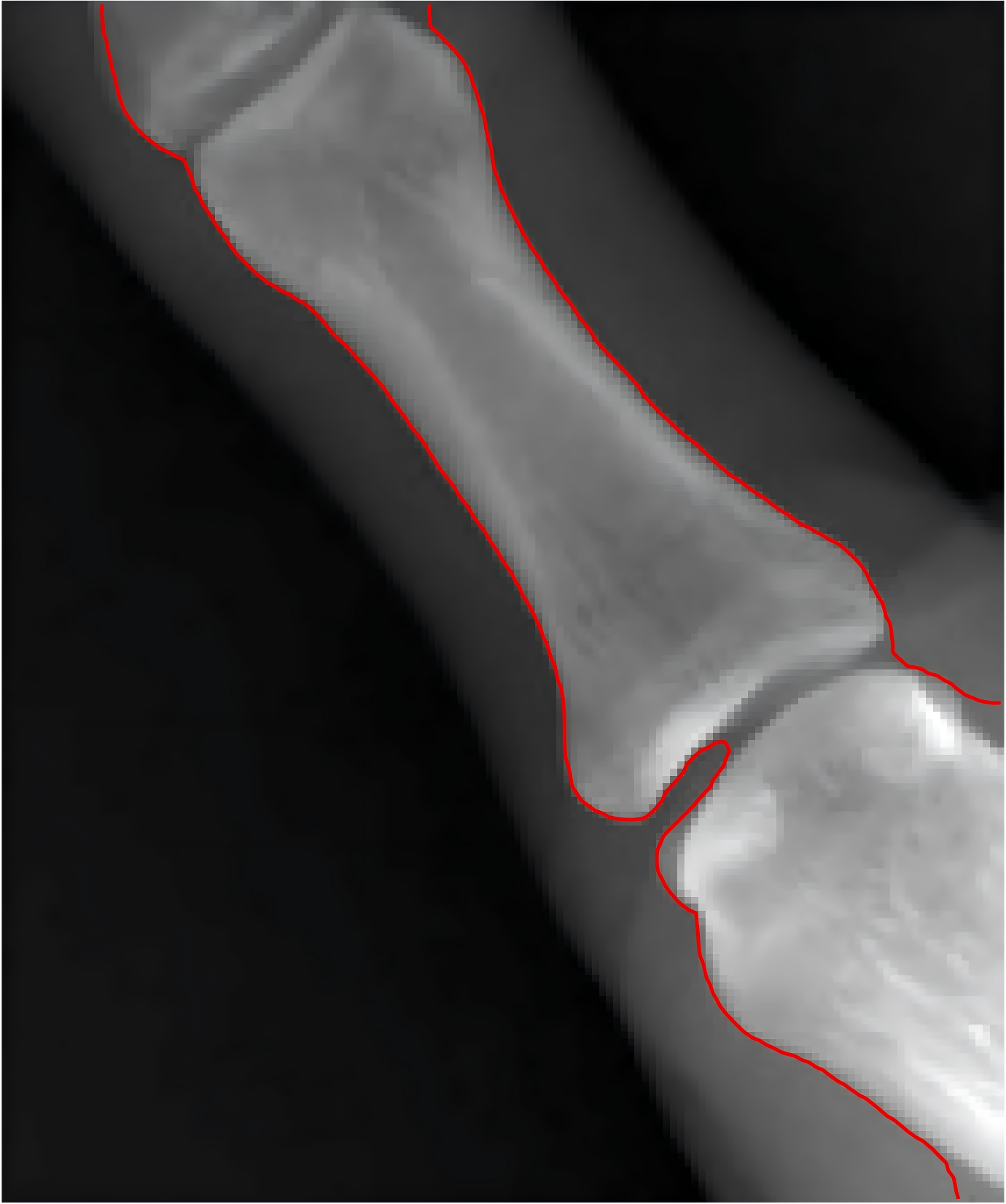}
		\caption{RSF}
		\label{fig:compare-rsf}
	\end{subfigure}
	\hfill
	\begin{subfigure}[t]{0.155\textwidth}
		\centering
		\includegraphics[
		width=\textwidth,
		height=0.13\textheight,
		keepaspectratio
		]{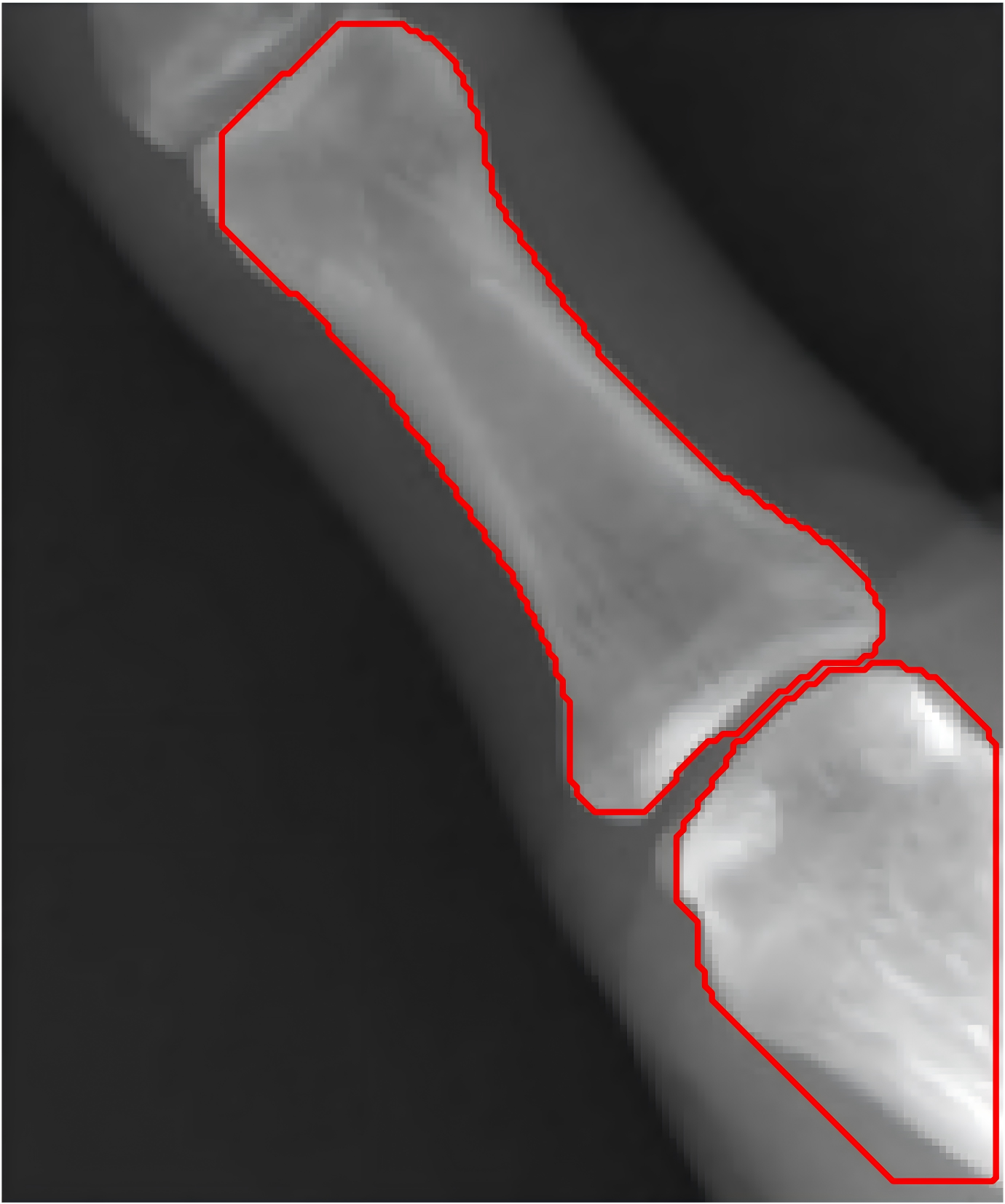}
		\caption{CP-ICTM}
		\label{fig:compare-cpictm}
	\end{subfigure}
	\hfill
	\begin{subfigure}[t]{0.155\textwidth}
		\centering
		\includegraphics[
		width=\textwidth,
		height=0.13\textheight,
		keepaspectratio
		]{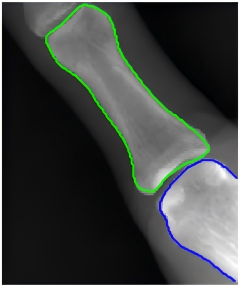}
		\caption{MBE-SSM}
		\label{fig:compare-mbe}
	\end{subfigure}
	
	\caption{Segmentation comparison results of different models.}
	\label{fig:different-models-comparison}
\end{figure}

\begin{figure}[htbp]
	\centering
	
	\begin{subfigure}[t]{0.155\textwidth}
		\centering
		\includegraphics[
		width=\textwidth,
		height=0.26\textheight,
		keepaspectratio
		]{fig/CH_10095_OR.png}
		\caption{Original image}
		\label{fig:compare2-original}
	\end{subfigure}
	\hfill
	\begin{subfigure}[t]{0.155\textwidth}
		\centering
		\includegraphics[
		width=\textwidth,
		height=0.26\textheight,
		keepaspectratio
		]{fig/CH_10095.png}
		\caption{Ours}
		\label{fig:compare2-ours}
	\end{subfigure}
	\hfill
	\begin{subfigure}[t]{0.155\textwidth}
		\centering
		\includegraphics[
		width=\textwidth,
		height=0.26\textheight,
		keepaspectratio
		]{fig/CV_10095.png}
		\caption{CV}
		\label{fig:compare2-cv}
	\end{subfigure}
	\hfill
	\begin{subfigure}[t]{0.155\textwidth}
		\centering
		\includegraphics[
		width=\textwidth,
		height=0.26\textheight,
		keepaspectratio
		]{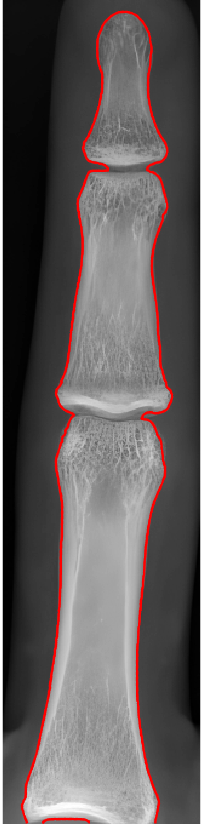}
		\caption{RSF}
		\label{fig:compare2-rsf}
	\end{subfigure}
	\hfill
	\begin{subfigure}[t]{0.155\textwidth}
		\centering
		\includegraphics[
		width=\textwidth,
		height=0.26\textheight,
		keepaspectratio
		]{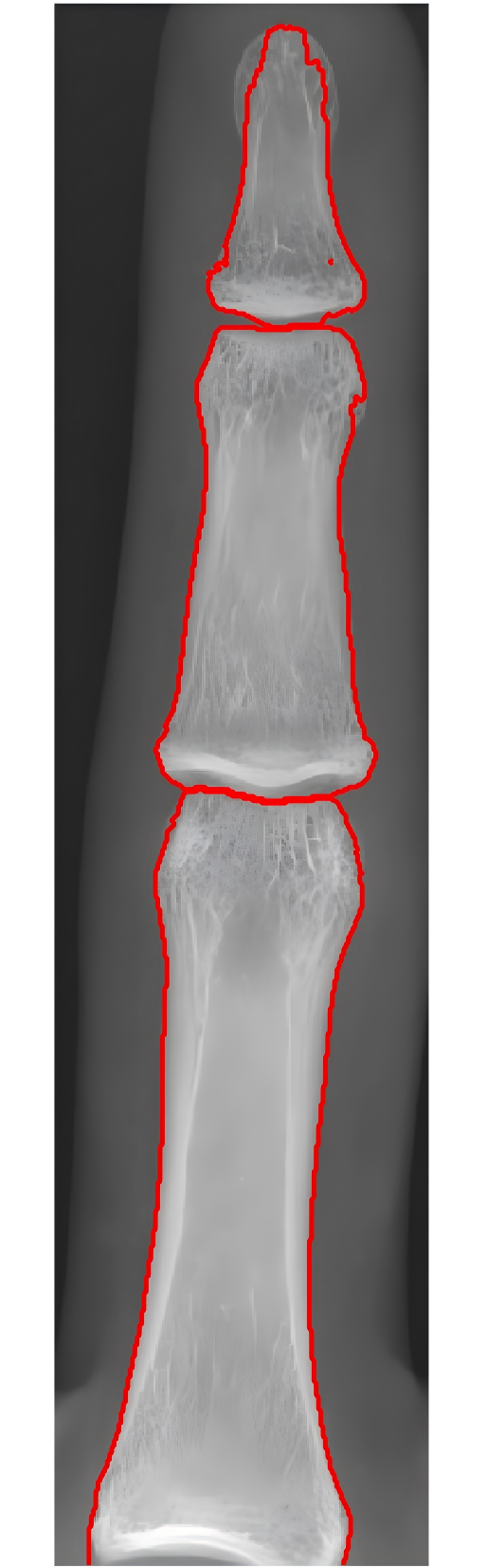}
		\caption{CP-ICTM}
		\label{fig:compare2-cpictm}
	\end{subfigure}
	\hfill
	\begin{subfigure}[t]{0.155\textwidth}
		\centering
		\includegraphics[
		width=\textwidth,
		height=0.26\textheight,
		keepaspectratio
		]{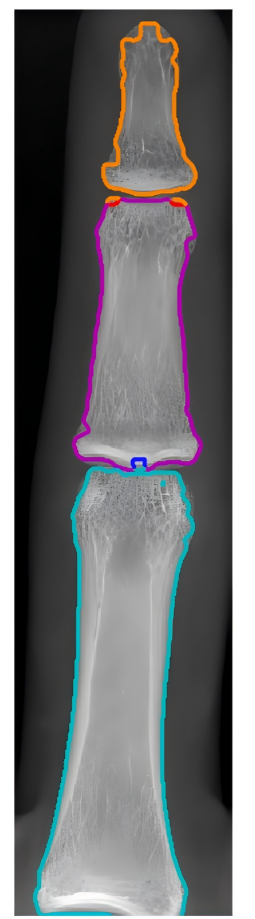}
		\caption{MBE-SSM}
		\label{fig:compare2-mbe}
	\end{subfigure}
	
	\caption{Segmentation comparison results of different models on another image.}
	\label{fig:different-models-comparison-2}
\end{figure}

\subsection{Comparison with deep learning methods on SKI10}
In this subsection, we evaluate the proposed model on the SKI10 dataset~\cite{heimannSegmentationKneeImages2010} and compare it with several representative deep learning-based segmentation methods, including U-Net~\cite{ronnebergerUNetConvolutionalNetworks2015a}, Swin-UNet~\cite{caoSwinUnetUnetlikePure2021}, TransUNet~\cite{chenTransUNetTransformersMake2021}, and MedSegDiff~\cite{wuMedSegDiffMedicalImage}. The segmentation performance is assessed using Dice, IoU, and HD95~\cite{tahaMetricsEvaluating3D2015}. In our experiments, 95 images are extracted from the dataset, with 70 images used for training, 15 for validation, and 10 selected images used for testing.
For a consistent comparison, all deep learning baselines use the same training,
validation, and test partitions. The networks are trained for 150 epochs with a
batch size of 2 and an input size of \(256\times256\). AdamW is used with a fixed
learning rate of \(10^{-4}\) and a weight decay of \(10^{-4}\). The data
augmentation includes random horizontal flipping, random vertical flipping, and
random rotation in the range \([-20^\circ,20^\circ]\). U-Net, Swin-UNet, and
TransUNet are trained with a loss of \(0.5\) cross-entropy plus \(0.5\) Dice loss,
where labels 1 and 2 are treated as separate foreground classes during training
and are merged only when computing the foreground metrics reported below.
For Swin-UNet, the window size is set to 8 to support the \(256\times256\) input
resolution. MedSegDiff retains its diffusion segmentation loss together with the
calibration loss, following its original single-channel foreground formulation.

\begin{table}[htbp]
	\centering
	\caption{Quantitative comparison of different segmentation methods on the SKI10 dataset~\cite{heimannSegmentationKneeImages2010}. Values are reported as mean $\pm$ standard deviation.}
	\label{tab:ski10-comparison}
	\begin{tabular}{lccc}
		\hline
		Method & Dice $\uparrow$ & IoU $\uparrow$ & HD95 $\downarrow$ \\
		\hline
		U-Net 
		& $\mathbf{0.9884}\pm\mathbf{0.0039}$ 
		& $\mathbf{0.9771}\pm\mathbf{0.0077}$ 
		& $\mathbf{2.3495}\pm0.9639$ \\
		
		TransUNet 
		& $0.9874\pm0.0046$ 
		& $0.9751\pm0.0090$ 
		& $2.5469\pm1.6471$ \\
		
		Swin-UNet 
		& $0.9704\pm0.0095$ 
		& $0.9428\pm0.0178$ 
		& $12.4082\pm9.2355$ \\
		
		MedSegDiff 
		& $0.9432\pm0.0219$ 
		& $0.8932\pm0.0378$ 
		& $24.3894\pm7.0927$ \\
		
		Ours 
		& $0.9816\pm0.0045$ 
		& $0.9639\pm0.0087$ 
		& $3.0442\pm\mathbf{0.4827}$ \\
		\hline
	\end{tabular}
\end{table}

The quantitative results in Table~\ref{tab:ski10-comparison} demonstrate that
the proposed method remains highly competitive with representative supervised
deep learning models while avoiding network training and validation. Although
U-Net and TransUNet obtain slightly higher average Dice and IoU values under the
same training protocol, these results rely on data-driven optimization with
labeled training samples. In contrast, the proposed variational method reaches
Dice and IoU values of $0.9816$ and $0.9639$, respectively, without learning
network parameters. More importantly, for the boundary-sensitive HD95 metric, the
proposed method achieves a mean value of $3.0442$ pixels and the smallest
standard deviation of $0.4827$ pixels among all compared methods. This indicates
that the proposed model provides the most stable boundary localization on the
selected SKI10 test images, which is particularly important for medical image
segmentation where local boundary errors can be more clinically relevant than a
small difference in overlap scores.

\begin{figure}[htbp]
	\centering
	\includegraphics[width=0.90\textwidth]{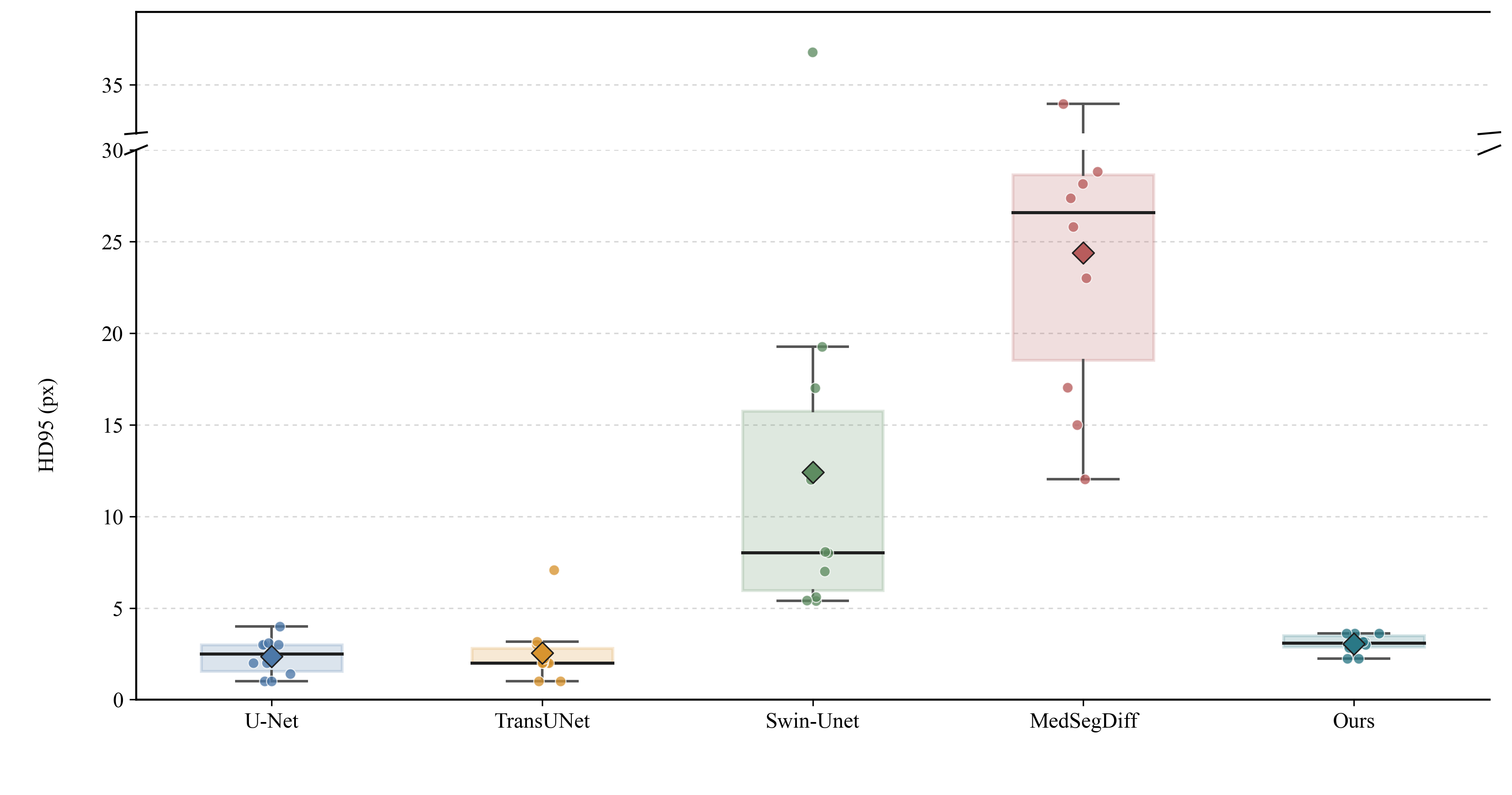}
	\caption{Distribution of HD95 values for different segmentation methods on the
	SKI10 test images. The diamond markers indicate the mean values.}
	\label{fig:ski10-hd95-distribution}
\end{figure}

Figure~\ref{fig:ski10-hd95-distribution} further highlights the boundary
stability of the proposed method. Its HD95 values are tightly concentrated in a
narrow range of roughly \(2\)--\(4\) pixels, producing the most compact
distribution among all methods. This behavior shows that the proposed model is
less sensitive to variations among the selected test images and can maintain a
stable contour accuracy without being re-trained for the dataset. By comparison,
the deep learning methods show larger fluctuations in HD95. Even when U-Net and
TransUNet achieve strong average overlap scores, their wider HD95 distributions
indicate that local boundary errors may still occur in certain cases. Swin-UNet
and MedSegDiff exhibit even broader distributions and larger HD95 values,
suggesting more severe boundary instability. Therefore, Fig.~\ref{fig:ski10-hd95-distribution}
supports the main advantage of the proposed method: it provides consistently
accurate boundary localization rather than relying only on high average overlap
metrics.

The visual comparison in Figure~\ref{fig:knee58-method-comparison} gives a
case-level explanation for these quantitative trends. On the representative test
image, the proposed method produces a smooth and coherent contour and reaches an
HD95 value of $3.0000$ pixels without network training. The extracted bone
regions remain well separated near weak and adjacent anatomical boundaries,
showing the benefit of the Cahn--Hilliard regularization in maintaining stable
interfaces. Although U-Net and TransUNet obtain high Dice and IoU values on this
test image, local boundary leakage or slight contour inconsistency can still
appear near ambiguous bone interfaces. This observation is important because it
shows that high overlap scores from trained networks do not always guarantee
fully stable boundary behavior. Swin-UNet shows more visible boundary
inconsistency, and MedSegDiff exhibits obvious foreground leakage, which is
consistent with their larger HD95 values in
Figure~\ref{fig:ski10-hd95-distribution}.

For the classical variational models, including CV~\cite{chanActiveContoursEdges2001},
RSF~\cite{chunmingliMinimizationRegionScalableFitting2008a}, and
CP-ICTM~\cite{dengConnectedComponentPreservingImage2025},
the visual results in Figure~\ref{fig:knee58-method-comparison} show a more
severe failure on the representative test image. CV and RSF do not form coherent
segmentations of the target bone regions; their contours are attracted by
scattered intensity variations and unrelated anatomical edges, producing
fragmented or misplaced regions rather than meaningful separated bone masks.
CP-ICTM preserves the imposed topological structure to some extent, but this
constraint does not correct the weak-boundary ambiguity in this case. Instead,
the resulting contour follows several irrelevant image structures and fails to
localize the target anatomy. Thus, these classical baselines are not merely
affected by small boundary deviations in Figure~\ref{fig:knee58-method-comparison};
they lose reliable target localization on this test image. In contrast, the
proposed model keeps the segmentation concentrated on the bone regions and
produces coherent interfaces, showing the advantage of the Cahn--Hilliard
regularization for stabilizing contour evolution under weak and ambiguous image
forces. Taken together, Table~\ref{tab:ski10-comparison}
and Figures~\ref{fig:ski10-hd95-distribution}--\ref{fig:knee58-method-comparison}
show that the proposed method combines competitive overlap accuracy, the most
stable HD95 behavior, and clear boundary coherence without requiring a training
stage. These properties make it a strong alternative when robustness, boundary
regularity, and reduced dependence on annotated training data are prioritized.

\begin{figure}[htbp]
	\centering
	\captionsetup[subfigure]{skip=2pt}
	\captionsetup{skip=4pt}
	
	\begin{subfigure}[t]{0.28\textwidth}
		\centering
		{\small Original image and ground truth}\\[1mm]
		\includegraphics[
		width=\textwidth,
		height=0.23\textheight,
		keepaspectratio
		]{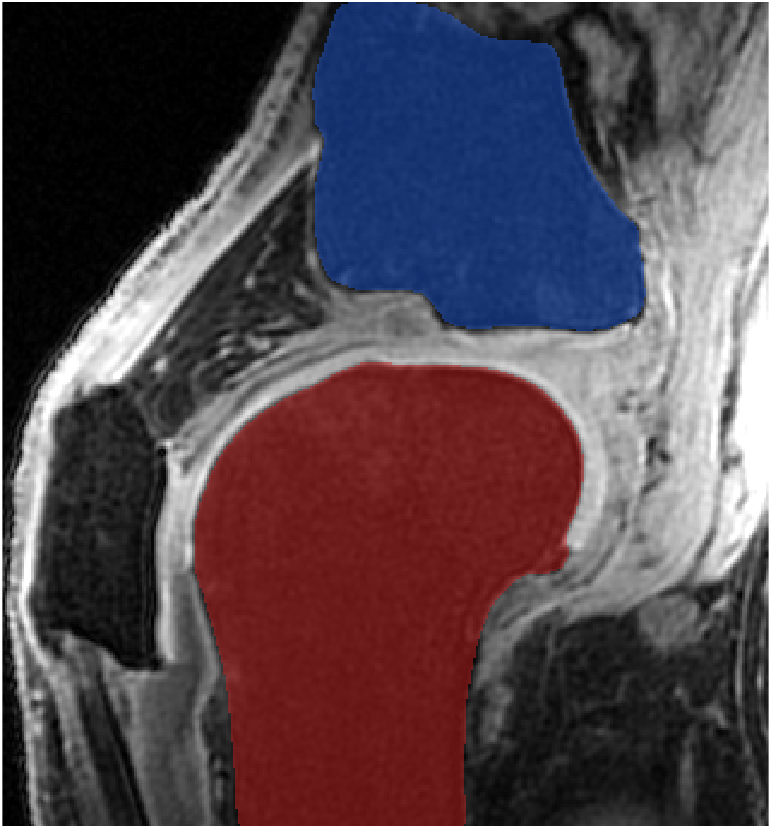}
		\caption{Original image}
		\label{fig:knee58-original}
	\end{subfigure}
	\hfill
	\begin{subfigure}[t]{0.28\textwidth}
		\centering
		{\small Dice = 0.9825, IoU = 0.9657, HD95 = 3.0000}\\[1mm]
		\includegraphics[
		width=\textwidth,
		height=0.23\textheight,
		keepaspectratio
		]{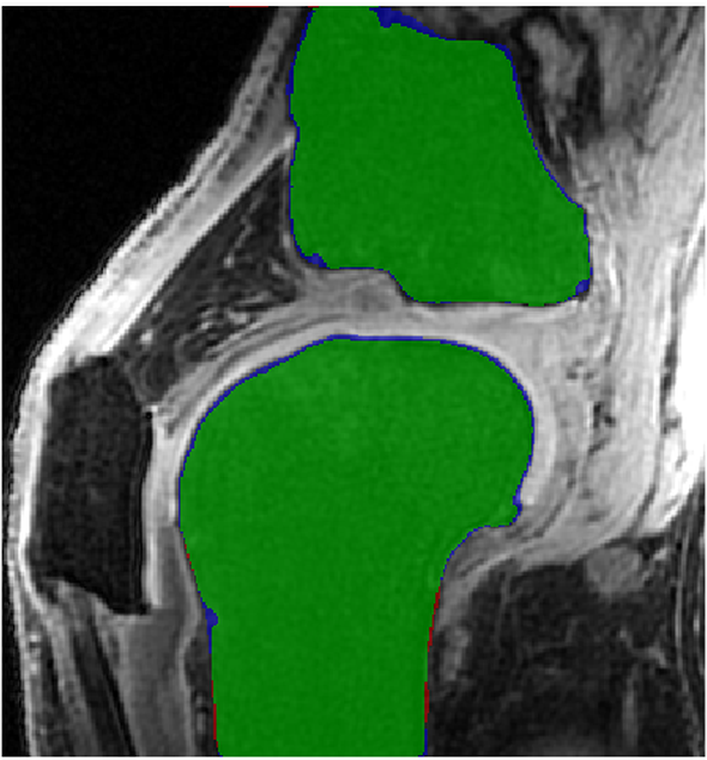}
		\caption{Ours}
		\label{fig:knee58-ours}
	\end{subfigure}
	\hfill
	\begin{subfigure}[t]{0.28\textwidth}
		\centering
		{\small Dice = 0.9871, IoU = 0.9746, HD95 = 3.0000}\\[1mm]
		\includegraphics[
		width=\textwidth,
		height=0.23\textheight,
		keepaspectratio
		]{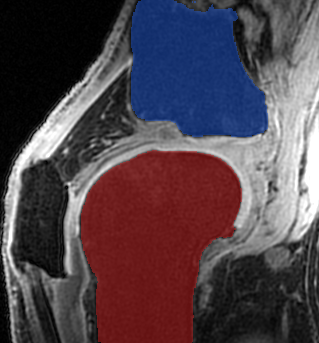}
		\caption{U-Net}
		\label{fig:knee58-unet}
	\end{subfigure}
	
	\begin{subfigure}[t]{0.28\textwidth}
		\centering
		{\small Dice = 0.9865, IoU = 0.9733, HD95 = 3.1623}\\[1mm]
		\includegraphics[
		width=\textwidth,
		height=0.23\textheight,
		keepaspectratio
		]{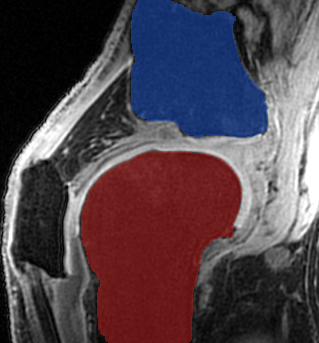}
		\caption{TransUNet}
		\label{fig:knee58-transunet}
	\end{subfigure}
	\hfill
	\begin{subfigure}[t]{0.28\textwidth}
		\centering
		{\small Dice = 0.9724, IoU = 0.9462, HD95 = 7.0000}\\[1mm]
		\includegraphics[
		width=\textwidth,
		height=0.23\textheight,
		keepaspectratio
		]{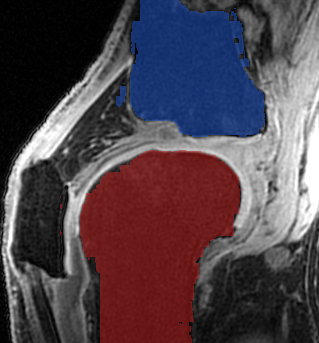}
		\caption{Swin-UNet}
		\label{fig:knee58-swinunet}
	\end{subfigure}
	\hfill
	\begin{subfigure}[t]{0.28\textwidth}
		\centering
		{\small Dice = 0.9521, IoU = 0.9086, HD95 = 23.0000}\\[1mm]
		\includegraphics[
		width=\textwidth,
		height=0.23\textheight,
		keepaspectratio
		]{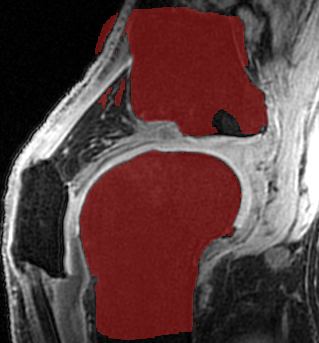}
		\caption{MedSegDiff}
		\label{fig:knee58-medsegdiff}
	\end{subfigure}
	
	\begin{subfigure}[t]{0.28\textwidth}
		\centering
		{\small Dice = --, IoU = --, HD95 = --}\\[1mm]
		\includegraphics[
		width=\textwidth,
		height=0.23\textheight,
		keepaspectratio
		]{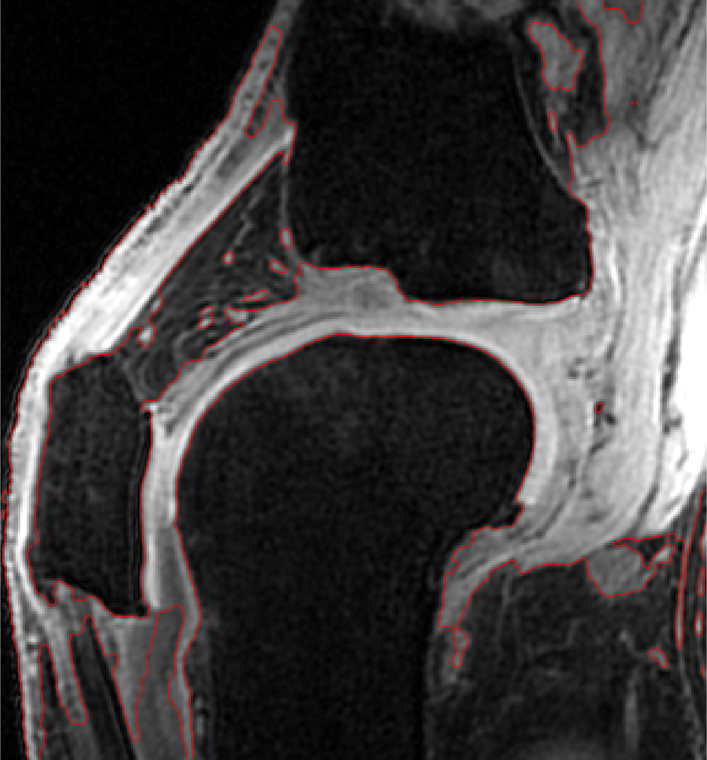}
		\caption{CV}
		\label{fig:knee58-cv}
	\end{subfigure}
	\hfill
	\begin{subfigure}[t]{0.28\textwidth}
		\centering
		{\small Dice = --, IoU = --, HD95 = --}\\[1mm]
		\includegraphics[
		width=\textwidth,
		height=0.23\textheight,
		keepaspectratio
		]{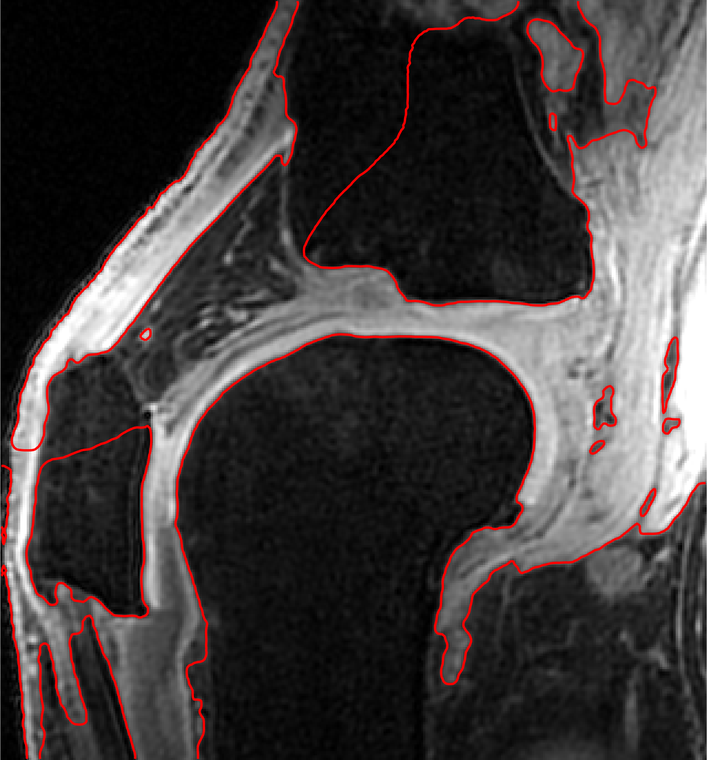}
		\caption{RSF}
		\label{fig:knee58-rsf}
	\end{subfigure}
	\hfill
	\begin{subfigure}[t]{0.28\textwidth}
		\centering
		{\small Dice = --, IoU = --, HD95 = --}\\[1mm]
		\includegraphics[
		width=\textwidth,
		height=0.23\textheight,
		keepaspectratio
		]{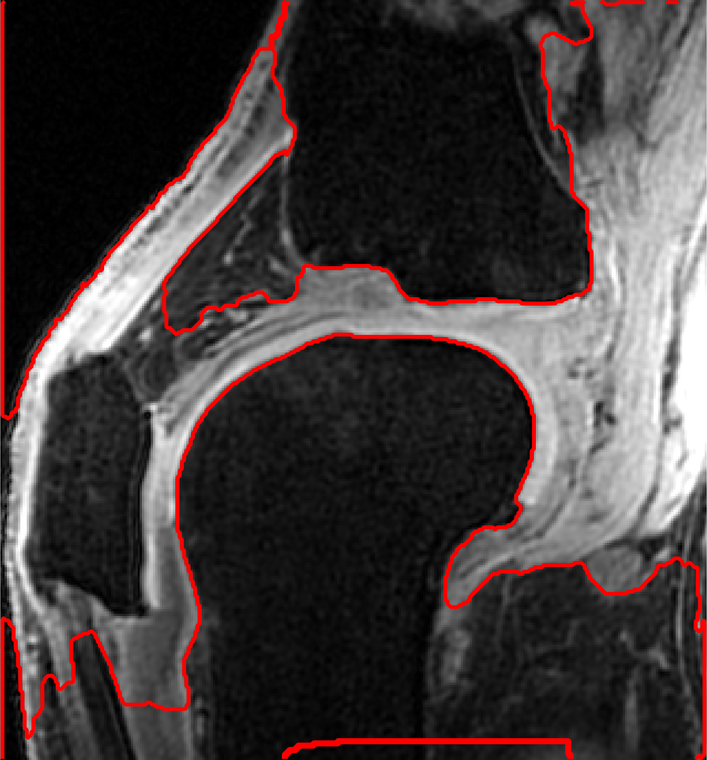}
		\caption{CP-ICTM}
		\label{fig:knee58-cp-ictm}
	\end{subfigure}
	
	\caption{Visual comparison of different segmentation methods on a representative SKI10 image.}
	\label{fig:knee58-method-comparison}
\end{figure}


\section{Conclusions}
\label{sec:conclusion}

In this paper, we developed a smooth Cahn--Hilliard phase-separation model for
weak-boundary segmentation of homogeneous-appearance but semantically distinct
structures. By incorporating phase separation into the softmax-based
segmentation framework, the proposed method was able to continue regulating the
phase variables when the fitting force became weak near a low-contrast
interface. This mechanism reduced undesirable merging between adjacent
homogeneous regions and improved the smoothness and coherence of the resulting
boundaries. The mixed gradient flow avoided the restrictive mass conservation of
the classical Cahn--Hilliard dynamics while retaining its higher-order
regularization effect. The corresponding evolution problem was well posed and
satisfied a continuous energy dissipation law. We also designed a stabilized
SAV scheme coupled with FFT for the proposed model, leading to a linear
implementation and stable computation over a wide range of time steps.
Numerical experiments showed that the Cahn--Hilliard term provided a clear
improvement over the unregularized model in separating nearby structures. The
method was insensitive to moderate variations of the principal parameters and
produced reliable segmentations for noisy, weak-boundary, and complex medical
images. Comparisons with representative variational, topology-preserving,
phase-field, and deep learning methods further demonstrated its competitive
accuracy and consistent boundary localization. Overall, the proposed
regularization provided an effective and mathematically supported approach to
homogeneous-structure segmentation problems in which image contrast alone was
insufficient to distinguish neighboring regions.

\appendix


\section*{Acknowledgments}
This work is partially supported by National Natural Science Foundation of China (U21B2075, 12171123, 12271130) and the Fundamental Research Funds for the Central Universities (2022FRFK060020).

\par

\bibliographystyle{unsrt}
\bibliography{references}

\end{document}